\documentclass{article}

\usepackage{microtype}
\usepackage{graphicx}
\usepackage{subcaption}
\usepackage{booktabs} 
\usepackage{pifont}
\usepackage{hyperref}
\usepackage{multirow}

\usepackage[preprint]{icml2026}

\usepackage{amsmath}
\usepackage{amssymb}
\usepackage{mathtools}
\usepackage{amsthm}

\usepackage[capitalize,noabbrev]{cleveref}

\theoremstyle{plain}
\newtheorem{theorem}{Theorem}[section]
\newtheorem{proposition}[theorem]{Proposition}
\newtheorem{lemma}[theorem]{Lemma}

\theoremstyle{definition}

\theoremstyle{remark}

\usepackage[textsize=tiny]{todonotes}

\icmltitlerunning{ACPO based on CCI Framework for Offline RL}

\begin{document}

\twocolumn[
  \icmltitle{Automatic Constraint Policy Optimization based on Continuous Constraint Interpolation Framework for Offline Reinforcement Learning}



  \icmlsetsymbol{equal}{*}

  \begin{icmlauthorlist}
    \icmlauthor{Xinchen Han}{ipp}
    \icmlauthor{Qiuyang Fang}{nankai}
    \icmlauthor{Hossam Afifi}{ipp}
    \icmlauthor{Michel Marot}{ipp}
  \end{icmlauthorlist}

  \icmlaffiliation{ipp}{SAMOVAR, Institut Polytechnique de Paris, Paris, France}
  \icmlaffiliation{nankai}{College of Artificial Intelligence, Nankai University, Tianjin, China}

  \icmlcorrespondingauthor{Xinchen Han}{xinchen.han@telecom-sudparis.eu}

  \icmlkeywords{Machine Learning, ICML}

  \vskip 0.3in
]



\printAffiliationsAndNotice{}  

\begin{abstract}
Offline Reinforcement Learning (RL) relies on policy constraints to mitigate extrapolation error, where both the constraint form and constraint strength critically shape performance. However, most existing methods commit to a single constraint family: weighted behavior cloning, density regularization, or support constraints, without a unified principle that explains their connections or trade-offs. In this work, we propose Continuous Constraint Interpolation (CCI), a unified optimization framework in which these three constraint families arise as special cases along a common constraint spectrum. The CCI framework introduces a single interpolation parameter that enables smooth transitions and principled combinations across constraint types. Building on CCI, we develop Automatic Constraint Policy Optimization (ACPO), a practical primal--dual algorithm that adapts the interpolation parameter via a Lagrangian dual update. Moreover, we establish a maximum-entropy performance difference lemma and derive performance lower bounds for both the closed-form optimal policy and its parametric projection. Experiments on D4RL and NeoRL2 demonstrate robust gains across diverse domains, achieving state-of-the-art performance overall.
\end{abstract}

\section{Introduction}
RL tackles sequential decision-making through online interaction with an environment \cite{LeakyPPO}. However, in many real-world applications, such interaction can be expensive, risky, or even infeasible. To address this limitation, offline RL, which aims to learn effective policies from pre-collected datasets without further interaction, has received growing attention in recent years. Yet, removing interaction also introduces fundamental challenges. In particular, extrapolation error \cite{BCQ} arises when $Q$-function estimates are queried on out-of-distribution (OOD) actions that are poorly supported by the dataset. Such errors can be amplified by bootstrapping, leading to severe overestimation and, ultimately, degraded performance and instability \cite{ORLtutorial, CQL, AEM, PARS}.

A dominant paradigm of offline RL methods mitigates extrapolation error by constraining the learned policy to stay close to the behavior policy that generated the dataset. In practice, however, the behavior policy is often unknown: offline data may come from random exploration, mixtures of multiple tasks, heterogeneous sources, or logged trajectories collected during online RL training \cite{LAPO, MCQ}. Under such ambiguity, selecting an appropriate form and strength of policy constraint becomes crucial.

From a high-level perspective, mainstream policy-constrained approaches in offline RL can be organized into three categories along a common constraint spectrum: weighted behavior cloning (wBC), density regularization, and support constraints. wBC remains fundamentally imitative and typically reduces to weighted supervised learning on the dataset \cite{BAIL, IBC, DWBC, TD3+BC}. Density regularization allows controlled distributional deviation from the behavior policy. In particular, KL-based density regularization is widely used to limit the magnitude of distributional shift from the behavior distribution \cite{RWR, AWR, AWAC, CRR}. Support constraints, instead of matching the behavior distribution, encourage the learned policy to select actions within the support of the behavior policy \cite{BEAR, PIQL, SPOT, STR}. However, overly conservative constraints can stabilize policy evaluation but undermine the core promise of offline RL---policy improvement beyond the dataset; while overly weak constraints fail to meaningfully restrict policy improvement, tending to select OOD actions and inducing large value misestimation that misguides optimization \cite{ACLQL, CosIQL}.

To cope with this trade-off, several recent works \cite{ABR, SSAR, PIQL} explore adaptive constraint schemes. However, existing approaches typically adapt within a single constraint paradigm, and a unified view that simultaneously covers and connects wBC, KL-based density regularization, and support constraints remains missing. As a result, practitioners still need to manually choose the constraint form and then tune its strength, which can be nontrivial across datasets and tasks.

In this paper, we develop a unified optimization framework, the \emph{Continuous Constraint Interpolation (CCI)} framework, 
and show that the above three constraint families arise as special cases. The CCI framework exposes a single interpolation parameter that enables smooth transitions and principled combinations among wBC, KL-based density regularization, and support constraints for policy improvement. Building on the CCI framework, we propose \emph{Automatic Constraint Policy Optimization (ACPO)}, a practical offline RL algorithm that optimizes the interpolation parameter via a Lagrangian dual update, thereby eliminating the need to manually select fixed constraint forms and strengths. Intuitively, the interpolation parameter acts as a constraint ``dial'' that automatically adapts the constraints regime during training.

Our key contributions can be summarized as follows:

\begin{itemize}
    \item \textbf{Unified CCI Framework.} We propose CCI, a unified optimization framework that bridges three major policy-constraint families: wBC, KL-based density regularization, and support constraints, and enables smooth transitions as well as principled combinations across constraint regimes.
    \item \textbf{Practical ACPO Algorithm.} Building on CCI, we develop ACPO, a practical primal--dual algorithm that automatically adapts the interpolation parameter through a Lagrangian dual update. We further establish a maximum-entropy performance difference lemma and derive performance lower bounds for both the optimal policy and its practical parametric projection.
    \item \textbf{Empirical Evaluation.} We perform extensive comparisons and ablations on the standard D4RL and the recent NeoRL2 benchmarks, showing that ACPO achieves consistent improvements across diverse domains and tasks.
\end{itemize}

\section{Related Works}
In offline RL, policy constraints are not the only mechanism for mitigating extrapolation error induced by OOD actions. Beyond policy constraints, prior work has explored conservative value estimation \cite{CQL, CEN, APTQ}, ensemble-based uncertainty-aware methods \cite{EDAC, ACTIVE, PBRL}, and DIstribution Correction Estimation (DICE) algorithms \cite{DualDICE, OptiDICE, DiffDICE}. To highlight the advantages of our framework and algorithm, we focus the remainder of this section on policy-constrained offline RL and organize representative methods into three categories: wBC, density regularization, and support constraints.

wBC methods either cast policy learning as (weighted) supervised imitation on the dataset, or incorporate an explicit BC regularizer into the policy improvement objective. BAIL \cite{BAIL} learns a value function and imitates only actions selected as high-quality under the learned value. IBC \cite{IBC} improves expressiveness via implicit energy-based policies, avoiding the limitations of explicit regression-based models. DWBC \cite{DWBC} trains a discriminator to estimate expert-likeness and uses it to reweight the BC loss. In contrast, TD3+BC \cite{TD3+BC} augments value-based policy improvement with a BC term, mitigating distributional drift during policy improvement.

Density regularization methods allow controlled deviation from the behavior policy by penalizing distributional shift at the policy level. The most common choice is KL-based regularization \cite{RWR, AWR, AWAC, CRR, IQL}, due to its elegant and tractable policy improvement objective, which keeps policy updates via stable in-sample learning. For completeness, we also review density regularization choices beyond KL divergence, including MMD-based constraints for sample-based distribution matching \cite{BRAC}, Fisher divergence \cite{Fisher-BRC}, and Wasserstein distance regularization \cite{Wasserstein, OT}.

Support constraint methods perform policy improvement with the policy constrained within the support of the behavior policy. EMaQ \cite{EMaQ} samples multiple candidate actions from behavior policy and performs backups over these in-support proposals. STR \cite{STR} combines trust-region policy optimization with an explicit support constraint via Importance Sampling (IS), ensuring that each update remains within the behavior support. SPOT \cite{SPOT} trains a VAE-based estimator to model the behavior support and introduces a simple plug-in regularization term that keeps policy improvement within the estimated support.

Overall, most existing approaches operate within a single policy-constraint paradigm. A unified optimization view that bridges all three families under one framework---enabling smooth transitions, flexible combinations, and automatic adaptation across constraint regimes---remains missing. We therefore propose the CCI framework and derive a practical algorithm, ACPO.


\section{Preliminaries}

\subsection{Maximum Entropy RL}
Conventionally, an RL problem is defined as a Markov Decision Process (MDP) \cite{SuttonRL}, specified by a tuple $\mathcal{M} = \langle \mathcal{S}, \mathcal{A}, \mathcal{T}, r, \gamma \rangle$, where $\mathcal{S}$ and $\mathcal{A}$ denote the state and action spaces, $\mathcal{T}$ represents the transition function, $r$ is the reward function, and $\gamma \in [0,1)$ denotes the discount factor. The performance difference lemma \cite{PDT} in the standard RL setting gives
\begin{gather} 
\begin{aligned}
\eta(\pi') - \eta(\pi)
= \frac{1}{1-\gamma}\sum_{s} d_{\pi'}(s)\sum_{a}\pi'(a|s)\,A^{\pi}(s,a),
\label{Eq-Per-Dif-Standard}
\end{aligned}
\end{gather}
where $\eta (\pi)=\mathbb{E}_{\tau\sim\pi}\left[ \sum_{t=0}^{\infty}\gamma^t r(s_t,a_t) \right]$, and $\mathbb{E}_{\tau\sim\pi}[...]$ denotes the expectation under the trajectory $\tau$ induced by the interconnection of $\pi$ and the environment. $d_{\pi'}(s)$ denotes the discounted state visitation distribution under $\pi'$, and $A^{\pi}(s, a) = Q^{\pi}(s, a) - V^{\pi}(s)$ is the advantage function. 

In the maximum-entropy setting~\cite{SQL, MERL, SAC, MEMC}, the objective augments rewards with an entropy term:
\begin{gather}
\begin{aligned}
\mathcal{J}(\pi)
= \mathbb{E}_{\tau\sim\pi}\!\left[\sum_{t=0}^{\infty}\gamma^t\Big(r(s_t,a_t)-\alpha\log\pi(a_t|s_t)\Big)\right],
\end{aligned}
\end{gather}
where $\alpha>0$ is the temperature parameter and $\mathcal{H}(s,\pi)=\mathbb{E}_{a\sim\pi(\cdot|s)}[-\log\pi(a|s)]$.
The maximum-entropy value function is defined as $V^{\pi}(s) = \mathbb{E}_{a\sim\pi(\cdot|s)} \left[ Q^{\pi}(s,a)-\alpha\log\pi(a|s) \right]$. The soft $Q$-function is $Q^{\pi}(s,a) = r(s,a) + \gamma \mathbb{E}_{s'\sim\mathcal{T}(\cdot|s,a)} \left[V^{\pi}(s') \right]$. As $\alpha \rightarrow 0$, maximum entropy RL recovers the standard RL.

\subsection{wBC Methods} 
The generalized wBC objective can be defined as follows,
\begin{gather} 
\begin{aligned}
\mathop{\max}_{\pi} \mathcal{J}^{\text{wBC}}(\pi) = \mathbb{E}_{(s, a) \sim \mathcal{D}} \left[w^{\text{bc}}(s, a)\log \pi(a|s) \right],
\label{Eq-wBC-Obj}
\end{aligned}
\end{gather}
where $w^{\text{bc}}: \mathcal{S} \times \mathcal{A} \rightarrow \mathbb{R}_{+}$ denotes a weight function. For example, $w^{\text{bc}}(s, a) = 1$ corresponds the vanilla BC method, while more sophisticated choices are used in, e.g., \cite{DWBC, BAIL}.

\subsection{KL-based Density Regularization Methods} 
KL-based density regularization methods can be summarized by the following policy optimization problem \cite{STR}:
\begin{gather} 
\begin{aligned}
\pi^{*}(a|s) & = \mathop{\arg\max}_{\pi} \mathbb{E}_{a \sim \pi(\cdot|s)} \left[ A^{\pi_{\text{pe}}}(s, a)\right], \\
\text{s.t.} \; & \mathop{D_{KL}} \left[\pi(\cdot|s)||\pi_{\text{base}}(\cdot|s) \right] \leq \epsilon, \\
& \int_{a} \pi(a|s) da = 1,
\label{Eq-KL-Density-Obj-Pro}
\end{aligned}
\end{gather}
where $\pi_{\text{base}}$ denotes a reference (sampling) policy and $\pi_{\text{pe}}$ denotes the evaluation policy. The optimization problem Eq.~\eqref{Eq-KL-Density-Obj-Pro} admits a parametric solution, which can be written as 
\begin{gather} 
\begin{aligned}
\mathop{\max}_{\pi} \mathcal{J}^{\text{KL}}(\pi) = \mathbb{E}_{s \sim \mathcal{D}, a \sim \pi_{\text{base}}} \left[ w^{\text{kl}}(s, a) \log \pi(a|s) \right],
\label{Eq-KL-Density-Obj}
\end{aligned}
\end{gather}
where $w^{\text{kl}}(s, a) = \exp\left(\frac{A^{\pi_{\text{pe}}}(s, a)}{\alpha}\right)$. Different choices of $\pi_{\text{base}}$ and $\pi_{\text{pe}}$ give rise to different algorithmic instantiations~\cite{AWAC, CRR, ABM}, while sharing the same KL-regularized principle.

\subsection{Support Constraint Methods}
The support constraint requires the learned policy to assign positive density only on actions supported by the behavior policy, e.g., $\pi_{\beta}(a|s)=0 \ \Rightarrow\ \pi(a|s)=0$.

InAC \cite{InAC} introduces an in-sample softmax operator and implements support-constrained policy improvement by sampling actions from the dataset:
\begin{gather} 
\begin{aligned}
\mathop{\max}_{\pi} \mathcal{J}^{\text{Supp}}(\pi) = \mathbb{E}_{\substack{(s, a)\sim \mathcal{D}}} \left[ w^{\text{supp}}(s, a) \log \pi(a|s) \right],
\label{Eq-Supp-Obj}
\end{aligned}
\end{gather}
where $w^{\text{supp}}(s, a) = \exp\left(\frac{A^{\pi}(s, a)}{\alpha} - \log \pi_{\beta}(a|s) \right)$.

Because $\exp (-\log \pi_{\beta}(a|s)) = \pi_{\beta}(a|s)^{-1}$, the above is equivalent to $\pi(a|s) = 0$ when $\pi_{\beta}(a|s) = 0$ and otherwise $\pi(a|s) \propto \exp (A^{\pi}(s, a) / \alpha)$ \footnote{InAC defines $0 \cdot \infty = 0.$}. The learned policy $\pi$ is not skewed by the action probabilities in $\pi_{\beta}$; it just has the same support \cite{InAC}.


\section{Unified Policy Constraints for Offline RL}

\subsection{Continuous Constraint Interpolation Framework}
The CCI framework provides a unified optimization view of three policy-constraint families: wBC, KL-based density regularization, and support constraints. It treats them as instances along a common constraint spectrum, and enables smooth switching and principled combinations via a single continuous interpolation parameter.

Consider the following constrained maximum-entropy policy optimization problem,
\begin{gather}
\begin{aligned}
\max_{\pi}\ \ & \mathbb{E}_{s\sim \mathcal{D},\,a\sim \pi(\cdot|s)}
\big[ Q(s,a) - \alpha \log \pi(a|s) \big],\\
\text{s.t.}\ \ &
\mathbb{E}_{s\sim \mathcal{D},\,a\sim \pi(\cdot|s)}
\big[ \log \pi_{\beta}(a|s) \big] \ge \epsilon, \\
& \int \pi(a|s)\, da = 1,\ \forall s.
\label{Eq-CCI-Opt-Pro}
\end{aligned}
\end{gather}
We do not impose per-state hard constraints, which would introduce an impractical (potentially infinite) number of constraints~\cite{SPOT}. Instead, we enforce the constraint on average over dataset states, yielding a single scalar constraint.

\begin{table*}[!t] 
\small
\centering
\setlength{\tabcolsep}{6pt}
\renewcommand{\arraystretch}{1.15}
\begin{tabular}{lll}
\toprule
$\lambda$ & Constraint & $w_{\lambda}(s,a)$ \\
\midrule

$\lambda = 0$ 
& Support Constraint
& $\exp \big(\frac{A(s,a)}{\alpha} - \log \pi_{\beta}(a|s)\big)$.\\

$0 < \lambda < \alpha$ 
& Support-to-Density Constraint
& $\exp \big( \frac{A(s,a)}{\alpha} - \frac{\alpha - \lambda}{\alpha}\log \pi_{\beta}(a|s)\big)$. \\

$\lambda = \alpha$
& KL-based Density Constraint
& $\exp \big(\frac{A(s,a)}{\alpha} \big)$. \\

$\lambda > \alpha$ 
& Density-to-wBC Constraint
& $\exp \big( \frac{A(s,a)}{\alpha} + \frac{\lambda - \alpha}{\alpha}\log \pi_{\beta}(a|s)\big)$. \\

$\lambda \gg \alpha + A(s,a) (\log \pi_{\beta}(a|s))^{-1}$ \qquad \qquad 
& Practical wBC Constraint \qquad \qquad \qquad 
& $ \approx \exp \big( \frac{\lambda-\alpha}{\alpha}  \log \pi_{\beta}(a|s) \big)$. \\
\bottomrule
\end{tabular}
\caption{Constraint spectrum induced by the interpolation parameter $\lambda$ in the CCI framework.}
\label{Tab-CCI-Lambda}
\end{table*}

Solving the above problem Eq.~\eqref{Eq-CCI-Opt-Pro} via the Lagrangian, 
\begin{gather}
\begin{aligned}
\mathcal L(\pi,\lambda,\nu)
= &\ \mathbb{E}_{s\sim \mathcal{D},\,a\sim \pi(\cdot|s)}
\left[ Q(s,a) - \alpha \log \pi(a|s) \right] \\
& + \lambda\Big(\mathbb{E}_{s\sim \mathcal{D},\,a\sim \pi(\cdot|s)}
\left[\log \pi_{\beta}(a|s)\right] - \epsilon\Big) \\
& + \mathbb{E}_{s\sim\mathcal D}\left[\nu(s)\Big(\int \pi(a|s)\, da - 1\Big)\right],
\label{Eq-CCI-Lagrangian}
\end{aligned}
\end{gather}
we obtain the following closed-form optimal policy:
\begin{gather} 
\begin{aligned}
\pi^*_{\lambda}(a|s) = \exp \Big( \frac{Q(s, a) - Z_{\lambda}(s)}{\alpha} \Big) \pi_{\beta}(a|s)^{\lambda / \alpha}.
\label{Eq-CCI-ClosedForm}
\end{aligned}
\end{gather}
where $\lambda \ge 0$ denotes the dual variable, and
$Z_\lambda(s)= \alpha \log \int \exp(Q(s,a)/\alpha)\,\pi_\beta(a|s)^{\lambda/\alpha}\,da$
is the log-normalizer.

Then, we project the non-parametric optimal policy $\pi_{\lambda}^*$ into the parametric space $\pi_{\theta}$ by minimizing the reverse KL divergence,
\begin{gather} 
\begin{aligned}
\pi_{\theta}(a|s) = \mathop{\arg\min}_{\pi_{\theta}} \mathbb{E}_{s\sim \mathcal{D}} \big[\mathop{D_{\mathrm{KL}}} \left[\pi^*_{\lambda}(\cdot|s) || \pi_{\theta}(\cdot|s) \right] \big].
\label{Eq-CCI-Reverse-KL}
\end{aligned}
\end{gather}

This yields the following policy improvement objective under CCI:
\begin{gather} 
\begin{aligned}
\max_{\theta}\ \mathcal{J}^{\text{CCI}}(\theta;\lambda) = \mathbb{E}_{(s,a)\sim \mathcal{D}} \left[ w_{\lambda}(s, a) \log \pi_{\theta}(a|s) \right],
\label{Eq-CCI-Obj}
\end{aligned}
\end{gather}
where $w_{\lambda}(s, a) = \exp \left(\frac{A(s,a)} {\alpha} + \frac{\lambda-\alpha}{\alpha}\log \pi_{\beta}(a|s)\right)$.

Based on the policy improvement objective Eq.~\eqref{Eq-CCI-Obj}, varying $\lambda$ enables continuous interpolation of the constraint spectrum, allowing smooth transitions and flexible combinations, shown in Tab.~\ref{Tab-CCI-Lambda}.

We note that the practical wBC regime corresponds to choosing
$
\lambda \gg \alpha + A(s,a)\big(\log \pi_{\beta}(a|s)\big)^{-1}.
$
Under the maximum-entropy setting, assume rewards are bounded by $|r(s,a)|\le R_{\max}$ and we stabilize training by clipping log-probabilities for any stochastic policy as $\log\pi(a|s)\in[\ell_{\min},\ell_{\max}]$.
Let 
$$C_{\max}\triangleq \max\{|\ell_{\min}|,|\ell_{\max}|\} \Rightarrow |-\alpha\log\pi(a|s)|\le \alpha C_{\max}.$$
Then one can obtain a uniform advantage bound
$
|A^{\pi}(s,a)|\le \frac{2\big(R_{\max}+\alpha C_{\max}\big)}{1-\gamma}
$
and consequently, with $C^{\beta}_{\min} \triangleq \min |\log\pi_\beta(a|s)|$ and a small constant $\delta>0$ to avoid division by zero,
\begin{equation}
\left|\frac{A(s,a)}{\log\pi_\beta(a|s)}\right|
\le
\frac{2\big(R_{\max}+\alpha C_{\max}\big)}{(1-\gamma)(C^{\beta}_{\min} + \delta)}.
\label{Eq:A_over_logpb_bound_maxent_main}
\end{equation}
Therefore, a sufficient condition for the practical wBC constraint is
\begin{equation}
\lambda \gg \alpha + \frac{2\big(R_{\max}+\alpha C_{\max}\big)}{(1-\gamma)(C^{\beta}_{\min} + \delta)}.
\label{Eq:wBC_sufficient_condition_main}
\end{equation}

The derivations for this subsection are provided in Appendix~\ref{Appendix-CCI}.

\subsection{Automatic Constraint Policy Optimization}
Building on the CCI framework, we propose ACPO, a practical primal--dual method that automatically tunes the constraint form and strength parameter $\lambda$.

For a fixed $Q$-function, the objective in Eq.~\eqref{Eq-CCI-Opt-Pro} is maximizing a concave objective in $\pi$, and the inequality constraint and normalization constraints are affine. Therefore, under standard regularity assumptions (ensuring $\mathcal{L}$ is well-defined) and Slater's condition, strong duality holds, and the primal problem is equivalent to the dual form,
\begin{gather} 
\begin{aligned}
\min_{\lambda} & \max_{\pi_{\theta}\in\Delta}\ \mathcal{L}(\pi_{\theta}, \lambda),\\
& \text{s.t.} \; \lambda \geq 0,
\label{Eq-CCI-Dual}
\end{aligned}
\end{gather}
where $\Delta$ denotes the set of valid policies.

ACPO alternates between (i) updating $\pi_\theta$ to maximize the CCI objective Eq.~\eqref{Eq-CCI-Obj} for a fixed $\lambda$, and (ii) performing a projected gradient descent step on $\lambda$ to enforce the constraint:
\begin{gather} 
\begin{aligned}
\lambda \leftarrow \Big[\lambda - \eta_\lambda\big(\mathbb{E}_{s\sim\mathcal D,\,a\sim\pi_\theta}[\log\pi_\beta(a|s)]-\epsilon\big)\Big]_+.
\label{Eq-lambda-descend}
\end{aligned}
\end{gather}

This update increases $\lambda$ when the constraint is violated, and decreases it when the constraint is overly conservative. 
By updating $\lambda$ online via Eq.~\eqref{Eq-lambda-descend}, ACPO automatically adapts the effective constraint regime to the dataset.

For policy evaluation, we adopt standard maximum-entropy critic updates with Clipped Double Q-Learning~\cite{ClipDoubleQ}:
\begin{gather}
\begin{aligned}
\mathcal{L}_V & (\psi) = \mathbb{E}_{s\sim\mathcal{D}, a\sim\pi_\theta(\cdot|s)} \left[\frac12\big(V_\psi(s)-Y_V(s,a)\big)^2\right], \\
& Y_V(s,a) \triangleq \min_{i\in\{1,2\}} Q_{\phi_i}(s,a) -\alpha\log\pi_\theta(a|s).
\end{aligned}
\label{Eq-Loss-V}
\end{gather}
\begin{gather} 
\begin{aligned}
\mathcal{L}_{Q}(\phi_i) = \mathbb{E}_{(s,a,r,s') \sim \mathcal{D}} \left[ \frac{1}{2}\left( r + \gamma V_{\psi}(s') - Q_{\phi_i}(s, a) \right)^2 \right].
\label{Eq-Loss-Q}
\end{aligned}
\end{gather}

Following common practice in offline RL \cite{InAC, PIQL}, we fit the behavior policy $\pi_{\beta}$ on the dataset via maximum likelihood:
\begin{gather} 
\begin{aligned}
\mathcal{L}(\pi_{\beta}) = \mathbb{E}_{(s,a) \sim \mathcal{D}} \left[ - \log \pi_{\beta}(a|s) \right].
\label{Eq-Behavior-Loss}
\end{aligned}
\end{gather}
Integrating policy improvement, policy evaluation, and dual optimization, the complete ACPO algorithm is summarized in Algorithm.~\ref{algorithm}.
\begin{algorithm}[!htbp]
\caption{ACPO}
\label{algorithm}
\textbf{Input}: Dataset $\mathcal{D} = \{(s, a, r, s')\}$.\\
\textbf{Parameters}: $Q$-networks $Q_{\phi_{1,2}}$, target $Q$-networks $Q_{\phi'_{1,2}}$, value network $V_\psi$, policy network $\pi_\theta$, behavior policy network $\pi_{\beta}$.
\begin{algorithmic}[1]
\STATE {\textbf{\emph{// Pre-train behavior policy for $N_\beta$ steps.}}}
\FOR{$t=1$ to $N_\beta$}
\STATE Optimize $\pi_{\beta}$ by Eq.~\eqref{Eq-Behavior-Loss}.
\ENDFOR
\STATE {\textbf{\emph{// Policy evaluation, improvement, and dual update.}}}
\FOR{each gradient step}
\STATE Sample $(s, a, r, s') \sim \mathcal{D}$.
\STATE Optimize $V_\psi$ by Eq.~\eqref{Eq-Loss-V}.
\STATE Optimize $Q_{\phi_i}$ by Eq.~\eqref{Eq-Loss-Q}.
\STATE Update $\lambda$ by Eq.~\eqref{Eq-lambda-descend}.
\STATE Optimize $\pi_\theta$ by Eq.~\eqref{Eq-CCI-Obj}.
\STATE Soft-update target networks $\phi'\leftarrow (1-\tau)\phi' + \tau\phi$.
\ENDFOR
\STATE \textbf{return} Policy $\pi_\theta$.
\end{algorithmic}
\end{algorithm}

\subsection{Theoretical Analysis}
The CCI framework admits a closed-form optimizer $\pi^*_\lambda$ parameterized by the dual variable $\lambda$. We first formalize how $\lambda$ controls conservatism by studying the constraint $\mathbb{E}_{a\sim\pi^*_\lambda(\cdot|s)}[\log\pi_\beta(a|s)]$.
We then derive a maximum-entropy performance difference lemma and establish performance lower bounds for both the optimal policy $\pi^*_\lambda$ and its parametric projection $\pi_\theta$.

\begin{proposition} \label{Proposition}
Fix any state $s$ and temperature $\alpha>0$. Let $\pi^*_{\lambda}(\cdot|s)$ denote the closed-form optimizer in Eq.~\eqref{Eq-CCI-ClosedForm} for a given $\lambda\ge 0$. Define 
\begin{equation} 
g_s(\lambda)
:= \mathbb{E}_{a\sim \pi^*_{\lambda}(\cdot|s)}\!\big[\log \pi_\beta(a|s)\big].
\label{Eq-dual-opt}
\end{equation}

Then $g_s(\lambda)$ is monotone non-decreasing in $\lambda$, and
\begin{equation} 
\frac{d}{d\lambda} g_s(\lambda)
= \frac{1}{\alpha}\,
\mathrm{Var}_{a\sim \pi^*_{\lambda}(\cdot|s)}\!\big(\log \pi_\beta(a|s)\big)
\;\ge\; 0.
\label{Eq-gs-prime}
\end{equation}

Consequently, the constraint $g(\lambda):=\mathbb{E}_{s\sim \mathcal{D}}[g_s(\lambda)]$ also satisfies $g'(\lambda)\ge 0$.
\end{proposition}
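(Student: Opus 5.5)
The plan is to recognize $\pi^*_\lambda(\cdot|s)$ as an exponential family in the natural parameter $\lambda$ with sufficient statistic $T(a):=\log\pi_\beta(a|s)$, and then apply the standard cumulant-generating-function identities. Concretely, I will rewrite Eq.~\eqref{Eq-CCI-ClosedForm} as
\begin{equation*}
\pi^*_\lambda(a|s)=\exp\!\Big(\frac{Q(s,a)+\lambda\,T(a)-Z_\lambda(s)}{\alpha}\Big),
\end{equation*}
using $\pi_\beta(a|s)^{\lambda/\alpha}=\exp(\lambda T(a)/\alpha)$. Normalization then gives $Z_\lambda(s)=\alpha\log\int \exp\big((Q(s,a)+\lambda T(a))/\alpha\big)\,da$, in agreement with the definition of the log-normalizer in the paper.

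\textbf{Step 1 (first derivative of the log-normalizer).} I will differentiate $Z_\lambda(s)$ in $\lambda$ under the integral sign. This gives
\begin{equation*}
\frac{\partial Z_\lambda(s)}{\partial\lambda}=\frac{\int T(a)\,e^{(Q+\lambda T)/\alpha}\,da}{\int e^{(Q+\lambda T)/\alpha}\,da}=\mathbb{E}_{\pi^*_\lambda}[T]=g_s(\lambda).
\end{equation*}

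\textbf{Step 2 (derivative of $g_s$).} I will differentiate $g_s(\lambda)=\int T(a)\,\pi^*_\lambda(a|s)\,da$ directly, using
\begin{equation*}
\partial_\lambda\pi^*_\lambda=\pi^*_\lambda\cdot\frac{T-\partial_\lambda Z_\lambda}{\alpha}=\pi^*_\lambda\cdot\frac{T-g_s(\lambda)}{\alpha}.
\end{equation*}
Substituting, $g_s'(\lambda)=\frac1\alpha\big(\mathbb{E}[T^2]-g_s(\lambda)^2\big)=\frac1\alpha\mathrm{Var}_{\pi^*_\lambda}(T)$. This is Eq.~\eqref{Eq-gs-prime}. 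Nonnegativity of the variance, together with $\alpha>0$, yields monotonicity by the mean value theorem.

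\textbf{Step 3 (averaged constraint).} I will pass the derivative through $\mathbb{E}_{s\sim\mathcal D}$. For an empirical dataset this is a finite sum, so it is immediate. More generally it follows from dominated convergence, giving $g'(\lambda)=\mathbb{E}_{s}[g_s'(\lambda)]\ge0$. Alternatively, $g$ is an average of non-decreasing functions and is therefore non-decreasing without any differentiation.

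\textbf{Main obstacle.} The only delicate point is justifying differentiation under the integral: finiteness of $Z_\lambda$ and of the first two moments of $T$ under $\pi^*_\lambda$ on a neighborhood of $\lambda$. I will handle this as the paper does elsewhere, by invoking the standing regularity assumptions. Specifically, bounded $Q$ (from the advantage bound) and clipped log-probabilities $T\in[\ell_{\min},\ell_{\max}]$ make the integrand and its $\lambda$-derivatives dominated by an integrable envelope, e.g.\ $e^{(Q+\lambda_0 T)/\alpha}\max(1,|T|,T^2)\,e^{\delta|T|/\alpha}$, on compact $\lambda$-intervals. Where $\pi_\beta=0$ and $T=-\infty$, the convention $\pi_\beta^{\lambda/\alpha}=0$ for $\lambda>0$ simply restricts the integrals to the support. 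The boundary $\lambda=0$ is treated as a one-sided derivative.
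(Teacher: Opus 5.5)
Your proof is correct and is essentially the paper's argument. The paper also differentiates $g_s(\lambda)=\int \pi^*_\lambda \log\pi_\beta\,da$ under the integral and obtains $\partial_\lambda \pi^*_\lambda = \frac{1}{\alpha}\pi^*_\lambda\bigl(\log\pi_\beta - g_s(\lambda)\bigr)$ from the log-derivative of the unnormalized partition function $\mathcal{Z}(\lambda)$ (your $Z_\lambda = \alpha\log\mathcal{Z}(\lambda)$, so your Step 1 is the paper's identity $\mathcal{Z}'/\mathcal{Z} = g_s/\alpha$), and it passes to $g$ by linearity; your dominated-convergence justification and the observation that $g$ is non-decreasing simply as an average of non-decreasing functions go slightly beyond the paper, which only asserts the interchange of derivative and integral.
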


Proposition \ref{Proposition} shows the following intuition. As $\lambda$ increases, the closed-form optimal policy $\pi^*_{\lambda}$ assigns more probability mass to actions with larger $\pi_\beta$ and yields a more conservative update. Conversely, a smaller $\lambda$ allows $\pi^*_{\lambda}$ to retain greater flexibility within the support of $\pi_\beta$ and allows the update to be more strongly guided by advantage function. This monotonicity is consistent with the overall behavior of $\lambda$ in the CCI framework, as shown in Tab.~\ref{Tab-CCI-Lambda}. 

The detailed proof is provided in Appendix~\ref{Appendix-Prop-lambda-monotone}.

\begin{lemma}[Maximum Entropy Performance Difference]
\label{lem-soft-pdl}
Let $\pi$ and $\pi_\beta$ be two stationary policies. Consider the maximum-entropy performance $\mathcal{J} (\pi) = \mathbb{E}_{\tau \sim \pi } \left[ \sum_{t=0}^{\infty}\gamma^t\big(r(s_t,a_t)-\alpha\log\pi(a_t|s_t) \big) \right]$, then the maximum entropy performance difference satisfies
\begin{gather} 
\begin{aligned}
& \mathcal{J}(\pi)-\mathcal{J}(\pi_\beta)
= \\ & \frac{1}{1-\gamma}  \mathbb{E}_{s\sim d_\pi} \Big[
\mathbb{E}_{a\sim\pi(\cdot|s)}\big[\tilde A^{\pi_\beta}(s,a)\big] -\alpha \mathrm{KL}\big(\pi(\cdot|s) \| \pi_\beta(\cdot|s)\big)
\Big],
\label{Eq-soft-pdl-main}
\end{aligned}
\end{gather} 
where $\tilde A^{\pi_{\beta}} (s, a)$ is the advantage function under the shaped reward $\tilde r(s, a) = r(s,a)-\alpha\log\pi_\beta(a|s)$.
\end{lemma}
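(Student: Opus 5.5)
The plan is to reduce the statement to the standard performance difference lemma, Eq.~\eqref{Eq-Per-Dif-Standard}, applied in an auxiliary MDP with the shaped reward $\tilde r(s,a)=r(s,a)-\alpha\log\pi_\beta(a|s)$. This reward depends only on $(s,a)$ and not on the policy being evaluated. Let $\tilde\eta(\cdot)$ denote the ordinary discounted return under $\tilde r$. Because $\tilde r$ is a fixed function of state and action, the standard lemma applies verbatim and gives
\begin{equation*}
\tilde\eta(\pi)-\tilde\eta(\pi_\beta)=\frac{1}{1-\gamma}\,\mathbb{E}_{s\sim d_\pi}\mathbb{E}_{a\sim\pi(\cdot|s)}\big[\tilde A^{\pi_\beta}(s,a)\big],
\end{equation*}
where $\tilde A^{\pi_\beta}$ is the ordinary (non-entropic) advantage of $\pi_\beta$ under $\tilde r$. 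I read the statement's $\tilde A^{\pi_\beta}$ this way. Indeed, the soft advantage of $\pi_\beta$ under $r$ coincides with it, since the entropy bonus of $\pi_\beta$ is exactly the shaping term.

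Next I relate $\tilde\eta$ to $\mathcal{J}$ at both policies.

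\textbf{At $\pi_\beta$.} Since $\tilde r$ uses $\log\pi_\beta$ and actions are drawn from $\pi_\beta$, we get $\tilde\eta(\pi_\beta)=\mathcal{J}(\pi_\beta)$ exactly.

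\textbf{At $\pi$.} I write $-\alpha\log\pi=-\alpha\log\pi_\beta-\alpha\log(\pi/\pi_\beta)$ inside the trajectory sum. This gives
\begin{equation*}
\mathcal{J}(\pi)=\tilde\eta(\pi)-\alpha\,\mathbb{E}_{\tau\sim\pi}\Big[\sum_t\gamma^t\log\tfrac{\pi(a_t|s_t)}{\pi_\beta(a_t|s_t)}\Big].
\end{equation*}
I then convert the last expectation to visitation form using $\sum_t\gamma^t\Pr(s_t=s)=\frac{1}{1-\gamma}d_\pi(s)$. Conditioning on $s_t$ turns the inner expectation over $a_t\sim\pi$ into $\mathrm{KL}(\pi(\cdot|s)\|\pi_\beta(\cdot|s))$. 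So the correction term equals $\frac{\alpha}{1-\gamma}\mathbb{E}_{s\sim d_\pi}[\mathrm{KL}(\pi\|\pi_\beta)]$.

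Subtracting the two relations and substituting the shaped performance difference yields Eq.~\eqref{Eq-soft-pdl-main}.

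The main obstacle is technical rather than conceptual. First, exchanging the infinite sum and expectation must be justified, and the quantities must be finite. This needs $\pi\ll\pi_\beta$ and bounded $\log\pi_\beta$ and $\log\pi$. I would invoke the boundedness assumptions already introduced: $|r|\le R_{\max}$ and log-probabilities clipped to $[\ell_{\min},\ell_{\max}]$. Together these make all summands uniformly bounded by $R_{\max}+\alpha C_{\max}$, so dominated convergence (or Fubini) applies. Second, the normalization convention of $d_\pi$ must be kept consistent with Eq.~\eqref{Eq-Per-Dif-Standard}, namely $d_\pi(s)=(1-\gamma)\sum_t\gamma^t\Pr(s_t=s)$, so that the factor $\frac{1}{1-\gamma}$ appears identically in both terms.
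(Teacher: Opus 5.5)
Your proposal is correct and matches the paper's proof step for step. Like the paper, you split $-\alpha\log\pi$ into the shaped reward plus a log-ratio, use $\widetilde{\mathcal{J}}(\pi_\beta)=\mathcal{J}(\pi_\beta)$, apply the standard performance difference lemma to the shaped return, and convert the discounted log-ratio sum into $\frac{1}{1-\gamma}\mathbb{E}_{s\sim d_\pi}[\mathrm{KL}(\pi(\cdot|s)\|\pi_\beta(\cdot|s))]$ via the occupancy identity; your integrability remarks go slightly beyond what the paper spells out.

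One parenthetical remark is inaccurate, though your argument never uses it. Under the paper's convention $A^{\pi}=Q^{\pi}-V^{\pi}$ with soft $Q^{\pi}$ (which excludes the current-step entropy), $\tilde A^{\pi_\beta}(s,a)=A^{\pi_\beta}(s,a)-\alpha\log\pi_\beta(a|s)$. So the two advantages coincide only if the soft $Q$ is defined to include the current-step bonus.
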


\begin{theorem}[Performance Lower Bound for the Optimal Policy]
\label{thm-optimal-policy-lower-bound}
Define $\epsilon_\beta(\pi) := \max_{s} \Big|\mathbb{E}_{a\sim\pi(\cdot|s)}\big[\tilde A^{\pi_\beta}(s,a)\big]\Big|$, $\kappa_{\beta}(\pi) := \max_{s} \mathrm{KL}\big(\pi(\cdot|s) \| \pi_\beta(\cdot|s)\big)$ and the average total variation distance under $\nu$ by $\Delta_{\mathrm{TV}} (\pi, \pi_\beta; \nu) := \mathbb{E}_{s\sim \nu} \big[D_{\mathrm{TV}}(\pi(\cdot|s), \pi_\beta(\cdot|s))\big]$. Assume $|\log \pi_\beta(a|s)| \le B$ for all $(s, a)$. For a given $\lambda\ge 0$ and $\alpha>0$, $\pi^*_{\lambda}$ satisfies the following maximum-entropy performance lower bound:
\begin{gather}
\begin{aligned}
\mathcal{J}(& \pi^*_\lambda) \ge \mathcal{J}(\pi_\beta) - \frac{2 B |\lambda - \alpha|}{1-\gamma} \Delta_{\mathrm{TV}} (\pi^*_\lambda,\pi_\beta; d_{\pi_\beta}) \\ & - \frac{4 \alpha B + 2 \gamma \Big(\epsilon_\beta (\pi^*_\lambda) + \alpha \kappa_{\beta} (\pi^*_\lambda)\Big)} {(1- \gamma)^2} 
\Delta_{\mathrm{TV}} (\pi^*_\lambda,\pi_\beta; d_{\pi_\beta}).
\end{aligned}
\label{Eq-Opt-Policy-LB}
\end{gather}
\end{theorem}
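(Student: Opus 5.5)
The plan is to start from Lemma~\ref{lem-soft-pdl} applied with $\pi=\pi^*_\lambda$:
\[
\mathcal{J}(\pi^*_\lambda)-\mathcal{J}(\pi_\beta)=\frac{1}{1-\gamma}\mathbb{E}_{s\sim d_{\pi^*_\lambda}}\big[f(s)\big],
\]
where $f(s):=\mathbb{E}_{a\sim\pi^*_\lambda}[\tilde A^{\pi_\beta}(s,a)]-\alpha\mathrm{KL}(\pi^*_\lambda(\cdot|s)\|\pi_\beta(\cdot|s))$.

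I will split this into two parts. The first is the same expectation taken under $d_{\pi_\beta}$. The second is the error from replacing $d_{\pi^*_\lambda}$ by $d_{\pi_\beta}$:
\[
\mathbb{E}_{d_{\pi^*_\lambda}}[f]=\mathbb{E}_{d_{\pi_\beta}}[f]+\big(\mathbb{E}_{d_{\pi^*_\lambda}}[f]-\mathbb{E}_{d_{\pi_\beta}}[f]\big).
\]
The second part is controlled by $\|f\|_\infty\cdot 2D_{\mathrm{TV}}(d_{\pi^*_\lambda},d_{\pi_\beta})$. Here $\|f\|_\infty\le \epsilon_\beta(\pi^*_\lambda)+\alpha\kappa_\beta(\pi^*_\lambda)$ directly from the definitions. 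I then invoke the standard state-distribution perturbation bound
\[
D_{\mathrm{TV}}(d_{\pi},d_{\pi_\beta})\le \frac{\gamma}{1-\gamma}\,\mathbb{E}_{s\sim d_{\pi_\beta}}\big[D_{\mathrm{TV}}(\pi(\cdot|s),\pi_\beta(\cdot|s))\big].
\]
This follows from $d_\pi-d_{\pi_\beta}=\gamma(I-\gamma P_\pi)^{-1}(P_\pi-P_{\pi_\beta})d_{\pi_\beta}$ in the normalized convention. Together with the outer $\frac{1}{1-\gamma}$, this produces exactly the term
\[
\frac{2\gamma(\epsilon_\beta+\alpha\kappa_\beta)}{(1-\gamma)^2}\,\Delta_{\mathrm{TV}}(\pi^*_\lambda,\pi_\beta;d_{\pi_\beta}).
\]

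For the first part, $\mathbb{E}_{d_{\pi_\beta}}[f(s)]$, I will use the fact that $\tilde A^{\pi_\beta}(s,\cdot)$ has zero mean under $\pi_\beta(\cdot|s)$. This holds because $\tilde V^{\pi_\beta}=\mathbb{E}_{\pi_\beta}[\tilde Q^{\pi_\beta}]$ for the shaped reward. Hence
\[
\mathbb{E}_{\pi^*_\lambda}[\tilde A]=\sum_a(\pi^*_\lambda-\pi_\beta)\tilde A .
\]
Bounding this by $2\|\tilde A\|_\infty D_{\mathrm{TV}}$ with $\|\tilde A\|_\infty\lesssim 2(\|r\|+\alpha B)/(1-\gamma)$ would introduce $R_{\max}$, which is absent from the statement. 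So instead I will exploit optimality of $\pi^*_\lambda$. From Eq.~\eqref{Eq-CCI-ClosedForm}, for every $a$,
\[
\alpha\log\frac{\pi^*_\lambda(a|s)}{\pi_\beta(a|s)}=Q(s,a)-Z_\lambda(s)+(\lambda-\alpha)\log\pi_\beta(a|s).
\]
Hence
\[
\alpha\mathrm{KL}(\pi^*_\lambda\|\pi_\beta)=\mathbb{E}_{\pi^*_\lambda}[Q]-Z_\lambda+(\lambda-\alpha)\mathbb{E}_{\pi^*_\lambda}[\log\pi_\beta].
\]
Identifying the critic $Q$ with the soft $Q$ of $\pi_\beta$ (the setting where the lower bound compares to $\pi_\beta$), $\tilde A$ and $Q$ differ by state-only terms and by $-\alpha\log\pi_\beta$ contributions. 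Substituting and cancelling against the zero-mean identity should leave $f(s)$ expressed as:
\begin{itemize}
\item a term $(\alpha-\lambda)\sum_a(\pi^*_\lambda-\pi_\beta)\log\pi_\beta$, bounded in absolute value by $2B|\lambda-\alpha|D_{\mathrm{TV}}$, which gives the first penalty after the $\frac{1}{1-\gamma}$ factor;
\item a nonnegative term, the log-partition gap, which is nonnegative by the Gibbs variational principle since $\pi^*_\lambda$ maximizes the Lagrangian Eq.~\eqref{Eq-CCI-Lagrangian} against the feasible choice $\pi_\beta$, and can be dropped;
\item residual entropy-shaping terms of the form $\alpha\sum_a(\pi^*_\lambda-\pi_\beta)\log\pi_\beta$, propagated through the $\frac{1}{1-\gamma}$ horizon.
\end{itemize}
The residual terms are each bounded by $2\alpha B D_{\mathrm{TV}}$ per step, and accumulate the extra $\frac{1}{1-\gamma}$ factor to yield the $\frac{4\alpha B}{(1-\gamma)^2}$ term.

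The main obstacle is this last bookkeeping: making the $Q$-versus-$\tilde A$ identification precise and showing the $\alpha\log\pi_\beta$ shaping discrepancy contributes at most $4\alpha B\,\Delta_{\mathrm{TV}}/(1-\gamma)^2$. My first attempt will be to write
\[
\tilde Q^{\pi_\beta}=Q^{\pi_\beta}_{\text{soft}}-\alpha\log\pi_\beta+\text{(state terms)},
\]
bound the value-level discrepancy by $\frac{2\alpha B}{1-\gamma}$ uniformly, and pair it with the $2D_{\mathrm{TV}}$ factor. If the constants do not match, I would absorb slack into the stated (loose) constant $4\alpha B$, since only an inequality is required.
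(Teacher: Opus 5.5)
Your proposal is correct and has the same skeleton as the paper's proof. The shared steps are the soft performance-difference lemma, the occupancy-shift bound $\|d_{\pi}-d_{\pi_\beta}\|_1\le\frac{2\gamma}{1-\gamma}\Delta_{\mathrm{TV}}(\pi,\pi_\beta;d_{\pi_\beta})$ for the $(1-\gamma)^{-2}$ term, and optimality of $\pi^*_\lambda$ against the competitor $\pi_\beta$. The paper phrases that last step as $F(\pi^*_\lambda)\ge F(\pi_\beta)$ for an auxiliary per-state objective $F$; this is your Gibbs-gap step. It is combined with $\big|\sum_a(\pi^*_\lambda-\pi_\beta)\log\pi_\beta\big|\le 2B\,D_{\mathrm{TV}}$.

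The real difference is which $Q$ enters the closed form. The paper uses the original-reward $Q^{\pi_\beta}$. Then $\tilde A^{\pi_\beta}-A^{\pi_\beta}$ is the advantage of the reward $c=-\alpha\log\pi_\beta$, which has zero mean under $\pi_\beta$ and sup-norm at most $\frac{2\alpha B}{1-\gamma}$. Pairing it with $2D_{\mathrm{TV}}$ is exactly where $\frac{4\alpha B}{(1-\gamma)^2}$ comes from. That is the argument you sketch as your ``first attempt,'' but it belongs to that identification: under yours, the discrepancy $-\alpha\log\pi_\beta$ is not centered under $\pi_\beta$.

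With your soft identification, the step you flag as the main obstacle is in fact exact. One has $\tilde V^{\pi_\beta}=V^{\pi_\beta}_{\mathrm{soft}}$, since both solve the same Bellman equation, and $\tilde Q^{\pi_\beta}=Q^{\pi_\beta}_{\mathrm{soft}}-\alpha\log\pi_\beta$ with no state terms. Substituting the closed form then gives the identity
\[
f(s)=\alpha\,\mathrm{KL}\big(\pi_\beta(\cdot|s)\,\|\,\pi^*_\lambda(\cdot|s)\big)-\lambda\sum_a\big(\pi^*_\lambda(a|s)-\pi_\beta(a|s)\big)\log\pi_\beta(a|s).
\]
Hence $f(s)\ge-2\lambda B\,D_{\mathrm{TV}}(\pi^*_\lambda(\cdot|s),\pi_\beta(\cdot|s))$, with no horizon propagation needed. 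Since $2\lambda B\le 2B|\lambda-\alpha|+2\alpha B\le 2B|\lambda-\alpha|+\frac{4\alpha B}{1-\gamma}$, this is slightly sharper than the stated bound and implies it.

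So your claim that the $\alpha$-residual ``accumulates an extra $\frac{1}{1-\gamma}$'' is not what actually happens in your setup. It costs nothing, because you only need an inequality, but you should write the step as the identity above rather than as a propagation argument.
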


\begin{theorem}[Performance Lower Bound for a Parametric Policy]
\label{thm-para-policy-lower-bound}
Define the suboptimality gap of a parametric policy $\pi_\theta$ by
\begin{equation}
\delta^{\mathrm{sub}}_\theta(\lambda)
:=
F_\lambda(\pi^*_\lambda)-F_\lambda(\pi_\theta)
\;\ge\; 0,
\label{eq:subopt_gap_def}
\end{equation}
where $F_\lambda(\pi) = \mathbb{E}_{s\sim d_{\pi_\beta}} \Big[
\mathbb{E}_{a\sim\pi(\cdot|s)} \tilde A^{\pi_\beta}(s,a)
- \alpha \mathrm{KL} \big( \pi(\cdot|s) \| \pi_\beta(\cdot|s)\big)
+ (\lambda - \alpha) \mathbb{E}_{a\sim\pi(\cdot|s)} \log \pi_\beta(a|s) \Big]$. 

Then $\pi_\theta$ satisfies
\begin{gather}
\small
\begin{aligned}
\mathcal{J}(\pi_\theta)
& \ge\;
\mathcal{J}(\pi_\beta) -\frac{2B|\lambda - \alpha|}{1-\gamma}\,
\Delta_{\mathrm{TV}}\!\big(\pi_\theta,\pi_\beta; d_{\pi_\beta}\big)
-\frac{\delta^{\mathrm{sub}}_\theta(\lambda)}{1-\gamma}\\
&- \frac{4 \alpha B + 2 \gamma \Big(\epsilon_\beta (\pi_\theta) + \alpha \kappa_{\beta} (\pi_\theta)\Big)} {(1- \gamma)^2} 
\Delta_{\mathrm{TV}} (\pi_\theta,\pi_\beta; d_{\pi_\beta}).
\end{aligned}
\label{Eq-Para-Policy-LB}
\end{gather}
\end{theorem}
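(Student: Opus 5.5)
The plan is to follow the argument behind Theorem~\ref{thm-optimal-policy-lower-bound}, with $\pi_\theta$ in place of $\pi^*_\lambda$. The one new ingredient is that $\pi_\theta$ is no longer the exact maximizer of $F_\lambda$, and the loss from this is exactly $\delta^{\mathrm{sub}}_\theta(\lambda)$. Write
\[
G(s,\pi):=\mathbb{E}_{a\sim\pi(\cdot|s)}\big[\tilde A^{\pi_\beta}(s,a)\big]-\alpha\,\mathrm{KL}\big(\pi(\cdot|s)\|\pi_\beta(\cdot|s)\big).
\]
Lemma~\ref{lem-soft-pdl} applied to $\pi=\pi_\theta$ gives $\mathcal{J}(\pi_\theta)-\mathcal{J}(\pi_\beta)=\frac{1}{1-\gamma}\mathbb{E}_{s\sim d_{\pi_\theta}}[G(s,\pi_\theta)]$. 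The proof then has two parts: (i) move the state distribution from $d_{\pi_\theta}$ to $d_{\pi_\beta}$; (ii) lower-bound $\mathbb{E}_{s\sim d_{\pi_\beta}}[G(s,\pi_\theta)]$ using $F_\lambda$.

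\textbf{Step (ii): comparison through $F_\lambda$.} By definition,
\[
\mathbb{E}_{d_{\pi_\beta}}[G(s,\pi)]=F_\lambda(\pi)-(\lambda-\alpha)\,\mathbb{E}_{d_{\pi_\beta}}\mathbb{E}_{\pi}[\log\pi_\beta].
\]
Expanding the KL term, the per-state integrand of $F_\lambda$ equals $\mathbb{E}_\pi[\tilde A^{\pi_\beta}]+\alpha\mathcal{H}(\pi(\cdot|s))+\lambda\,\mathbb{E}_\pi[\log\pi_\beta]$. Its pointwise maximizer is the Gibbs policy $\propto\exp(\tilde Q^{\pi_\beta}/\alpha)\,\pi_\beta^{\lambda/\alpha}$, which is $\pi^*_\lambda$ of Eq.~\eqref{Eq-CCI-ClosedForm} with the critic taken to be $\tilde Q^{\pi_\beta}$. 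This is the same identification used for Theorem~\ref{thm-optimal-policy-lower-bound}, and it is also what makes $\delta^{\mathrm{sub}}_\theta\ge0$. Hence $F_\lambda(\pi^*_\lambda)\ge F_\lambda(\pi_\beta)$. Since $\mathbb{E}_{\pi_\beta}[\tilde A^{\pi_\beta}]=0$ and $\mathrm{KL}(\pi_\beta\|\pi_\beta)=0$, we get $F_\lambda(\pi_\beta)=(\lambda-\alpha)\mathbb{E}_{d_{\pi_\beta}}\mathbb{E}_{\pi_\beta}[\log\pi_\beta]$. Combining with $F_\lambda(\pi_\theta)=F_\lambda(\pi^*_\lambda)-\delta^{\mathrm{sub}}_\theta$,
\[
\mathbb{E}_{d_{\pi_\beta}}[G(s,\pi_\theta)]\ge-\delta^{\mathrm{sub}}_\theta-(\lambda-\alpha)\,\mathbb{E}_{d_{\pi_\beta}}\big[\mathbb{E}_{\pi_\theta}\log\pi_\beta-\mathbb{E}_{\pi_\beta}\log\pi_\beta\big].
\]
Because $|\log\pi_\beta|\le B$, the bracket is at most $2B\,D_{\mathrm{TV}}(\pi_\theta(\cdot|s),\pi_\beta(\cdot|s))$. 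Dividing by $1-\gamma$ yields the $|\lambda-\alpha|$ term and the $\delta^{\mathrm{sub}}_\theta/(1-\gamma)$ term.

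\textbf{Step (i): distribution shift.} Use
\[
\big|\mathbb{E}_{d_{\pi_\theta}}G-\mathbb{E}_{d_{\pi_\beta}}G\big|\le\|d_{\pi_\theta}-d_{\pi_\beta}\|_1\max_s|G(s,\pi_\theta)|,
\]
together with the standard bound $\|d_{\pi}-d_{\pi_\beta}\|_1\le\frac{2\gamma}{1-\gamma}\Delta_{\mathrm{TV}}(\pi,\pi_\beta;d_{\pi_\beta})$. This bound follows by telescoping $d_\pi-d_{\pi_\beta}=\gamma(I-\gamma P_\pi)^{-1}(P_\pi-P_{\pi_\beta})d_{\pi_\beta}$ and bounding the operator norm of $(1-\gamma)(I-\gamma P_\pi)^{-1}$ by $1$. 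Since $\max_s|G|\le\epsilon_\beta(\pi_\theta)+\alpha\kappa_\beta(\pi_\theta)$, this produces the $2\gamma(\epsilon_\beta+\alpha\kappa_\beta)/(1-\gamma)^2$ term.

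\textbf{Main obstacle.} The remaining $4\alpha B/(1-\gamma)^2$ term has to come from the entropy-shaped reward $\tilde r$: the discrepancy between $-\alpha\log\pi_\theta$ and $-\alpha\log\pi_\beta$ along trajectories, whose per-step magnitude is $2\alpha B$, must be propagated through the same occupancy-shift estimate. I expect this bookkeeping to be the delicate step. I would first copy verbatim the decomposition used for this term in the proof of Theorem~\ref{thm-optimal-policy-lower-bound}, since nothing in that argument relied on optimality of $\pi^*_\lambda$, only on boundedness of $\log\pi_\beta$ and the TV shift bound.
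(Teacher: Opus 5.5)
Your Steps (i) and (ii) are correct, and together they already prove the theorem; the ``main obstacle'' is not actually there. Chaining them gives, with $\Delta_{\mathrm{TV}}:=\Delta_{\mathrm{TV}}(\pi_\theta,\pi_\beta;d_{\pi_\beta})$,
\[
\mathcal{J}(\pi_\theta)-\mathcal{J}(\pi_\beta)\ \ge\ -\frac{2B|\lambda-\alpha|}{1-\gamma}\,\Delta_{\mathrm{TV}}-\frac{\delta^{\mathrm{sub}}_\theta(\lambda)}{1-\gamma}-\frac{2\gamma\big(\epsilon_\beta(\pi_\theta)+\alpha\kappa_\beta(\pi_\theta)\big)}{(1-\gamma)^2}\,\Delta_{\mathrm{TV}}.
\]
This is the stated bound without the term $-\frac{4\alpha B}{(1-\gamma)^2}\Delta_{\mathrm{TV}}$. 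That term is nonpositive, so the theorem follows by weakening. Do not try to manufacture it from $-\alpha\log\pi_\theta$ versus $-\alpha\log\pi_\beta$. Only $|\log\pi_\beta|\le B$ is assumed, so no per-step bound of $2\alpha B$ is available for $\log\pi_\theta$.

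The paper's proof shares your PDL step, the occupancy-shift estimate, and the $2B\,D_{\mathrm{TV}}$ bound on the $\log\pi_\beta$ difference. It adds one extra step, and that step is the sole source of the $4\alpha B$ term.
\begin{itemize}
\item It carries over the auxiliary objective from the proof of Theorem~\ref{thm-optimal-policy-lower-bound}, which uses the unshaped advantage $A^{\pi_\beta}$. It rearranges $F_\lambda(\pi_\theta)\ge F_\lambda(\pi_\beta)-\delta^{\mathrm{sub}}_\theta(\lambda)$ into a lower bound on $\mathbb{E}[A^{\pi_\beta}]-\alpha\,\mathrm{KL}$.
\item It then converts back via $\tilde A^{\pi_\beta}=A^{\pi_\beta}+A^{\pi_\beta}_c$ with $c=-\alpha\log\pi_\beta$. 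Since $\|A^{\pi_\beta}_c\|_\infty\le 2\alpha B/(1-\gamma)$ and $\mathbb{E}_{\pi_\beta}A^{\pi_\beta}_c=0$, this gives $|\mathbb{E}_{\pi}A^{\pi_\beta}_c|\le\frac{4\alpha B}{1-\gamma}D_{\mathrm{TV}}$. The outer factor $\frac{1}{1-\gamma}$ turns this into $\frac{4\alpha B}{(1-\gamma)^2}\Delta_{\mathrm{TV}}$.
\end{itemize}

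The statement defines $F_\lambda$ with $\tilde A^{\pi_\beta}$, and $\mathbb{E}_{\pi_\beta}\tilde A^{\pi_\beta}=0$. So your direct rearrangement is the one that actually matches the stated definition; the paper's rearranged inequality in terms of $A^{\pi_\beta}$ corresponds to the unshaped objective instead. Your route removes the conversion step and yields a strictly tighter bound. Your identification of $\pi^*_\lambda$ with the Gibbs policy for $\tilde Q^{\pi_\beta}$ is exactly what justifies $\delta^{\mathrm{sub}}_\theta\ge 0$ and $F_\lambda(\pi^*_\lambda)\ge F_\lambda(\pi_\beta)$ under this definition. One small correction: this is not the same identification as in Theorem~\ref{thm-optimal-policy-lower-bound}. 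There the paper implicitly uses the Gibbs policy for the original-reward advantage, which is why the conversion term is genuinely needed in that proof but not in this one.
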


The proofs of Theorems \ref{thm-optimal-policy-lower-bound} and \ref{thm-para-policy-lower-bound} suggest a decomposition of the lower bound into two components. The first-order term $\mathcal{O}\!\left(\frac{1}{1-\gamma}\right)$ evaluates policy improvement under the fixed state distribution $d_{\pi_\beta}$.
The second-order term $\mathcal{O}\!\left(\frac{1}{(1-\gamma)^2}\right)$ captures distribution shift, arising from the mismatch between $d_\pi$ and $d_{\pi_\beta}$.

Compared to the KL-density constraint in standard RL, whose performance lower bound is typically $\mathcal{J}^{\mathrm{KL}}(\pi^*) \ge \mathcal{J}(\pi_\beta) - \mathcal{O}(\frac{1}{(1-\gamma)^2})$ \cite{MCQ}, our bound additionally exposes an explicit first-order term and makes the role of $\lambda$ transparent: the factor $|\lambda-\alpha|$ quantifies how the $\lambda$ modulates the worst-case performance. 

The detailed proof is provided in Appendix~\ref{Appendix-Per-Lower-Bound}.

\section{Experiments}

\subsection{Comparisons on D4RL Benchmarks}
\label{Subsec-D4RL}

We evaluate ACPO on D4RL \cite{D4RL} and compare it with a diverse set of strong offline RL baselines.

\textbf{Tasks.} We conduct experiments in Gym-MuJoCo locomotion domains and more challenging AntMaze and Kitchen tasks. The main challenge of the latter is to learn policies for long-horizon planning from datasets that do not contain optimal trajectories, which require ``stitching'' ability.

\textbf{Baselines.} We select competitive offline RL baselines, including BCQ \cite{BCQ}, IQL \cite{IQL}, CQL\cite{CQL}, TD3+BC \cite{TD3+BC}, SPOT \cite{SPOT}, SWG \cite{SWG}, EQL \cite{EQL}, DTQL \cite{DTQL}. 

\textbf{Results.} The whole results are shown in Tab.~\ref{Tab-D4RL}. Overall, ACPO demonstrates strong robustness across domains, achieving the best performance on Gym-MuJoCo and Kitchen tasks. On AntMaze, ACPO is slightly behind SWG and DTQL. A reason is that AntMaze requires trajectory-level planning and long-horizon credit assignment under sparse rewards, where diffusion-based methods (SWG/DTQL) are particularly effective. While these diffusion models achieve strong performance, their computational efficiency is also an important consideration.

\begin{table*}[!t]
\caption{Averaged normalized scores on Gym-MuJoCo-v2, Antmaze-v2, and Kitchen-v0 datasets over five seeds. Scores are reported as mean $\pm$ standard deviation. \textbf{Bold} indicates the best performance in each task.}
\label{Tab-D4RL}
\centering
\begin{tabular}{l||rrrrrrrr|r} 
\toprule
Dataset                 & BCQ   & IQL   & CQL   & TD3+BC & SPOT  & SWG   & EQL   & DTQL    & ACPO        \\
\hline
halfcheetah-med         & 47.0  & 47.4  & 44.0  & 48.3   & \textbf{58.4}  & 46.9  & 47.2  & 57.9   & 52.5 $\pm$ 0.76   \\
hopper-med              & 56.7  & 66.3  & 58.5  & 59.3   & 86.0  & 68.6  & 74.6  & \textbf{99.6}   & 97.9 $\pm$ 2.14  \\
walker2d-med            & 72.6  & 78.3  & 72.5  & 83.7   & 86.4  & 74.0  & 87.0  & \textbf{89.4}   & 88.0 $\pm$ 1.1   \\
halfcheetah-med-rep     & 40.4  & 44.2  & 45.5  & 44.6   & \textbf{52.2}  & 42.5  & 44.5  & 50.9   & 47.5 $\pm$ 0.54   \\
hopper-med-rep          & 53.3  & 94.7  & 95.0  & 60.9   & 100.2 & 73.9  & 98.1  & 100.0  & \textbf{100.6} $\pm$ 0.54  \\
walker2d-med-rep        & 52.1  & 73.9  & 77.2  & 81.8   & 91.6  & 60.6  & 76.6  & 88.5   & \textbf{93.6} $\pm$ 1.86   \\
halfcheetah-med-exp     & 89.1  & 86.7  & 90.7  & 90.7   & 86.9  & 92.8  & 90.6  & 92.7   & \textbf{97.2} $\pm$ 1.24   \\
hopper-med-exp          & 81.8  & 91.5  & 105.4 & 98.0   & 99.3  & 109.9 & 105.5 & 109.3  & \textbf{112.3} $\pm$ 0.72 \\
walker2d-med-exp        & 109.0 & 109.6 & 109.6 & 110.1  & 112.0 & 110.7 & 110.2 & 110.0  & \textbf{113.1} $\pm$ 0.22  \\
\hline
Locomotion Total        & 602   & 692.6 & 698.4 & 677.4  & 773   & 679.9 & 734.3 & 798.3  & \textbf{802.7} $\pm$ 9.12 \\
\hline
\hline
antmaze-u               & 78.9  & 85.5  & 84.8  & 78.6   & \textbf{93.5}  & 92.4  & 93.2  & 92.6   & 90.0 $\pm$ 3.0 \\
antmaze-u-d             & 55.0  & 66.7  & 43.4  & 71.4   & 40.7  & 70.0  & 65.4  & 74.4   & \textbf{86.0} $\pm$ 3.4 \\
antmaze-m-p             & 0     & 72.2  & 65.2  & 10.6   & 74.7  & \textbf{89.6}  & 77.5  & 76.0   & 76.0 $\pm$ 4.24 \\
antmaze-m-d             & 0     & 71.0  & 54.0  & 3.0    & 79.1  & \textbf{88.2}  & 70.0  & 80.6   & 76.0 $\pm$ 4.24 \\
antmaze-l-p             & 6.7   & 39.6  & 38.4  & 0.2    & 35.3  & 55.4  & 45.6  & \textbf{59.2}   & 56.0 $\pm$ 4.94 \\
antmaze-l-d             & 2.2   & 47.5  & 31.6  & 0.0    & 36.3  & 55.2  & 42.5  & \textbf{62.0}   & 48.0 $\pm$ 4.98 \\
\hline
Antmaze Total           & 142.8 & 382.5 & 317.4 & 163.8  & 359.6 & \textbf{450.8} & 394.2 & 444.8  & 432.0 $\pm$ 24.8 \\
\hline
\hline
kitchen-m               & 10.6  & 52.8  & 51.0  & 0.8    & 0.0   & 61.0   & 55.6  & 60.2   & \textbf{72.5} $\pm$ 5.5 \\
kitchen-p               & 18.9  & 46.1  & 49.8  & 0.0    & 0.0   & 69.0   & \textbf{74.5}  & 74.4   & 64.0 $\pm$ 15.52 \\
\hline
Kitchen Total           & 29.5  & 98.9  & 100.8 & 0.8    & 0.0   & 130.0  & 130.1 & 134.6  & \textbf{136.5} $\pm$ 21.02 \\
\bottomrule
\end{tabular}
\end{table*}

\subsection{Comparisons on NeoRL2 Benchmarks}
\label{Subsec-NeoRL2}

We also evaluate ACPO on NeoRL2 benchmarks.

\textbf{Tasks.} NeoRL2 (Near Real-World Offline RL Benchmark) \cite{NeoRL2} is a recent benchmark designed to reflect practical constraints in offline RL. It includes domains motivated by industrial control,  nuclear fusion, and healthcare, featuring pronounced time-delay effects and external disturbances. Moreover, practical data acquisition limits the dataset size and coverage, which places higher demands on the robustness and adaptiveness of offline RL algorithms.

\textbf{Baselines.} We compare against representative baselines commonly reported on NeoRL2, including BC, CQL \cite{CQL}, EDAC \cite{EDAC}, MCQ \cite{MCQ}, TD3+BC \cite{TD3+BC}, RAMBO \cite{RAMBO}, and MOBILE \cite{MOBILE} \footnote{We focus on the strongest baselines used in NeoRL2 and omit MOPO \cite{MOPO} and COMBO \cite{COMBO} for brevity.}.

\textbf{Results.} As shown in Tab.~\ref{Tab-NeoRL2}, ACPO achieves clear improvements on NeoRL2. It attains the best performance on $4$ out of $7$ datasets and yields the highest total score across tasks, substantially outperforming the strongest baseline TD3+BC.

\begin{table*}[!t]
\caption{Averaged normalized scores on NeoRL2 benchmark over three seeds. Scores are reported as mean $\pm$ standard deviation. \textbf{Bold} indicates the best performance in each task.}
\label{Tab-NeoRL2}
\centering
\resizebox{1.0\linewidth}{!}{
\begin{tabular}{l||rrrrrrrr|r}
\toprule
Dataset               & Data   & BC   & CQL   & EDAC  & MCQ   & TD3+BC & RAMBO   & MOBILE & ACPO \\
\hline
Pipeline              & 69.3  & 68.6  & 81.1  & 72.9  & 49.7  & 82.0   & 24.1    & 65.5   & \textbf{95.2} $\pm$ 4.62\\
Simglucose            & 73.9  & 75.1  & 11.0  & 8.1   & 29.6  & 74.2   & 10.8    & 9.3    & \textbf{94.9} $\pm$ 12.73\\
RocketRecovery        & 75.3  & 72.8  & 74.3  & 65.7  & 76.5  & 79.7   & -44.2   & 43.7   & \textbf{84.6} $\pm$ 10.23\\
RandomFrictionHopper  & 28.7  & 28.0  & 33.0  & 34.7  & 31.7  & 29.5   & 29.6    & \textbf{35.1}   & 33.7 $\pm$ 0.37\\
DMSD                  & 56.6  & 65.1  & 70.2  & \textbf{78.7}  & 77.8  & 60.0   & 76.2    & 64.4   & 74.9 $\pm$ 5.67\\
Fusion                & 48.8  & 55.2  & 55.9  & 58.0  & 49.7  & 54.6   & 59.6    & 5.0    & \textbf{61.8} $\pm$ 2.07\\
SafetyHalfCheetah     & \textbf{73.6}  & 70.2  & 71.2  & 53.1  & 54.7  & 68.6   & -422.4  & 8.7    & 67.1 $\pm$ 6.90\\
\hline
Total                 & 426.2 & 435.0 & 396.7 & 371.2 & 369.7 & 448.6 & -266.3   & 231.7  & \textbf{512.2} $\pm$ 42.59\\
\bottomrule
\end{tabular}}
\end{table*}

\subsection{Special-Case Comparisons and $\lambda$ Dynamics}
\label{Subsec-Threecase}

Under the CCI framework, several representative policy constraints can be recovered by fixing the interpolation parameter $\lambda$. To isolate the effect of $\lambda$, we perform special-case comparisons by disabling the dual update in ACPO and training with a fixed $\lambda$ throughout. These special cases match wBC, AWAC \cite{AWAC} and InAC \cite{InAC} at the level of the policy improvement. In contrast, the policy evaluation components may differ across methods; for a controlled comparison, we keep the policy evaluation fixed and use the same maximum-entropy critic updates Eq.~\eqref{Eq-Loss-V} and~\eqref{Eq-Loss-Q} as ACPO in all variants.

\textbf{Settings.} For Gym-MuJoCo tasks, we set $\alpha=0.1$. For NeoRL2 tasks, we set $\alpha=0.5$. We instantiate three special cases by choosing: (i) practical wBC: $\lambda = 100$ to approximate the wBC regime, (ii) AWAC: $\lambda = \alpha$, and (iii) InAC: $\lambda = 0$.
\begin{figure*}[htbp] 
\centerline{\includegraphics[width=1.0\textwidth]{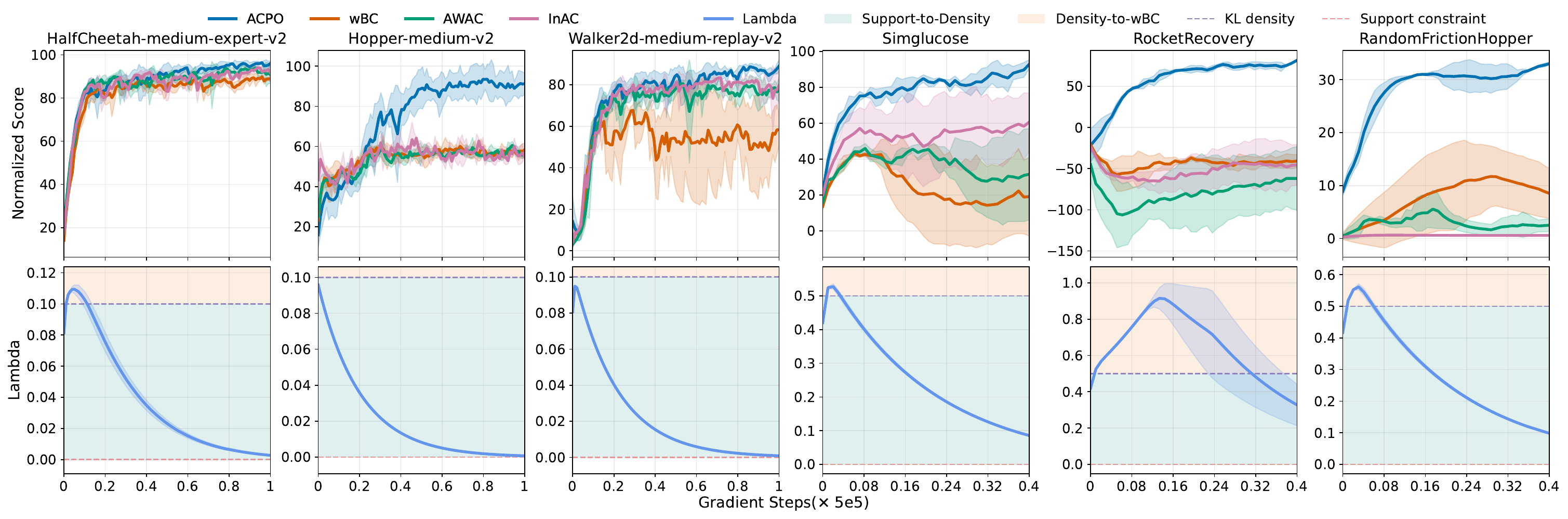}}
\caption{Special-case comparisons and the evolution of $\lambda$ learned by ACPO during training.}
\label{Fig-ScoreCompare_and_LambdaAnnotated_2x6}
\end{figure*}

Fig.~\ref{Fig-ScoreCompare_and_LambdaAnnotated_2x6} reports training curves of ACPO against these three baselines across six datasets, together with the adaptive evolution of $\lambda$ during training. From Fig.~\ref{Fig-ScoreCompare_and_LambdaAnnotated_2x6}, we draw the following observations:

\textbf{Overall Superiority.} ACPO consistently achieves the strongest performance across the six tasks, with particularly clear gains on the more challenging NeoRL2 datasets where coverage is limited. This highlights the benefit of automatically adapting the effective constraint regime to the dataset.

\textbf{$\lambda$ Dynamics.} We observe that $\lambda$ is often non-monotonic, typically rising in early training and gradually decaying afterwards. Early in training, critic estimates are less calibrated and actor updates are more prone to exploiting OOD actions. The dual update reacts by increasing $\lambda$, strengthening the constraint and biasing updates toward higher-density behavior actions to stabilize learning. As training progresses and value estimates become better calibrated, the OOD risk diminishes; $\lambda$ decreases accordingly, relaxing the constraint and enabling more effective policy improvement.

\subsection{Behavior Density Estimator Ablation Study}
\label{Subsec-Behavior}
Since ACPO explicitly depends on $\log \pi_\beta(a|s)$, the choice of the behavior density estimator can directly affect both the constraint signal and the policy improvement weights. 

Beyond a Gaussian density estimator for $\pi_\beta$, we additionally study the conditional variational auto-encoder (CVAE) model \cite{BCQ, ELAPSE} to approximate $\pi_\beta$. Moreover, motivated by prior work \cite{SPOT}, we also adapt the IS corrected method to reduce estimation bias in expectations of $\pi_\beta$. We compare Gaussian vs. CVAE behavior models across $9$ Gym-MuJoCo datasets, with results reported in Fig.~\ref{Fig-Gaussian_minus_CVAE}.
\begin{figure}[!t] 
\centerline{\includegraphics[width=0.48\textwidth]{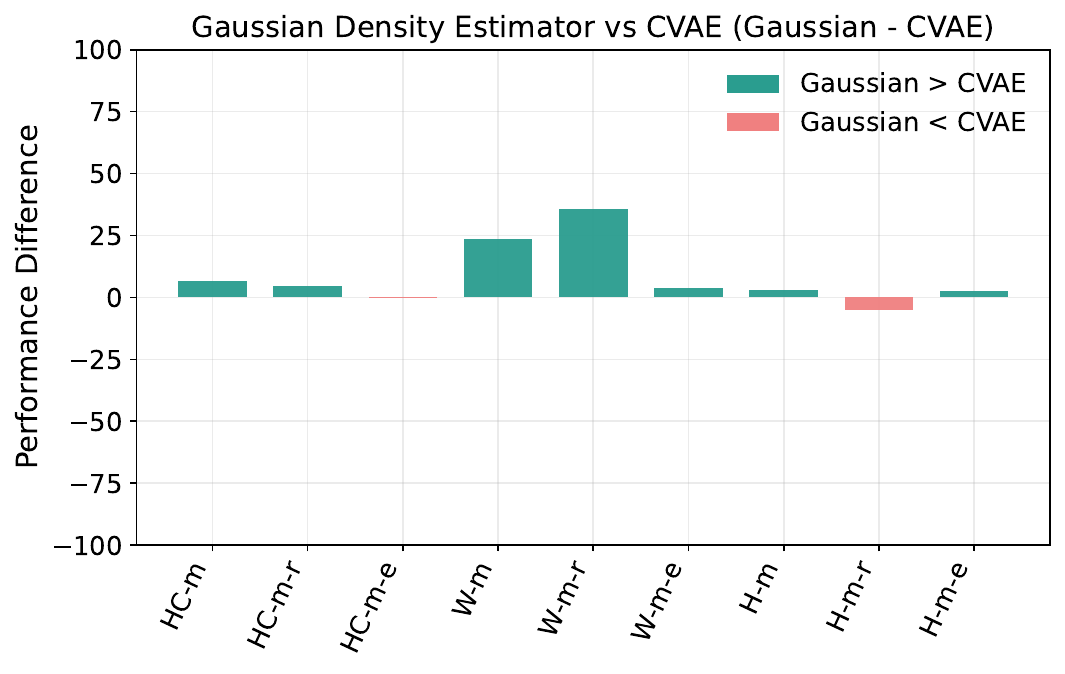}}
\caption{Normalized score differences on Gym-MuJoCo datasets.}
\label{Fig-Gaussian_minus_CVAE}
\end{figure}

As shown in Fig.~\ref{Fig-Gaussian_minus_CVAE}, the Gaussian and CVAE behavior models yield comparable performance on most datasets, except for \texttt{walker2d-medium-v2} and \texttt{walker2d-medium-replay-v2}, where the performance gap becomes pronounced. To diagnose this discrepancy, we visualize the distributions of $\log \pi_\beta(a|s)$ induced by the Gaussian and CVAE estimators on (i) in-dataset actions $a_{\mathrm{in}}$ and (ii) perturbed OOD actions $a_{\mathrm{ood}}$ constructed as $a_{ood} = a_{in} + \xi z$, where $\xi$ controls the perturbation scale and $z \sim \mathcal{N}(0, I)$. The resulting distributions are shown in Fig.~\ref{Fig-Gaussian_CVAE_log},
\begin{figure}[!htbp] 
\centerline{\includegraphics[width=0.49\textwidth]{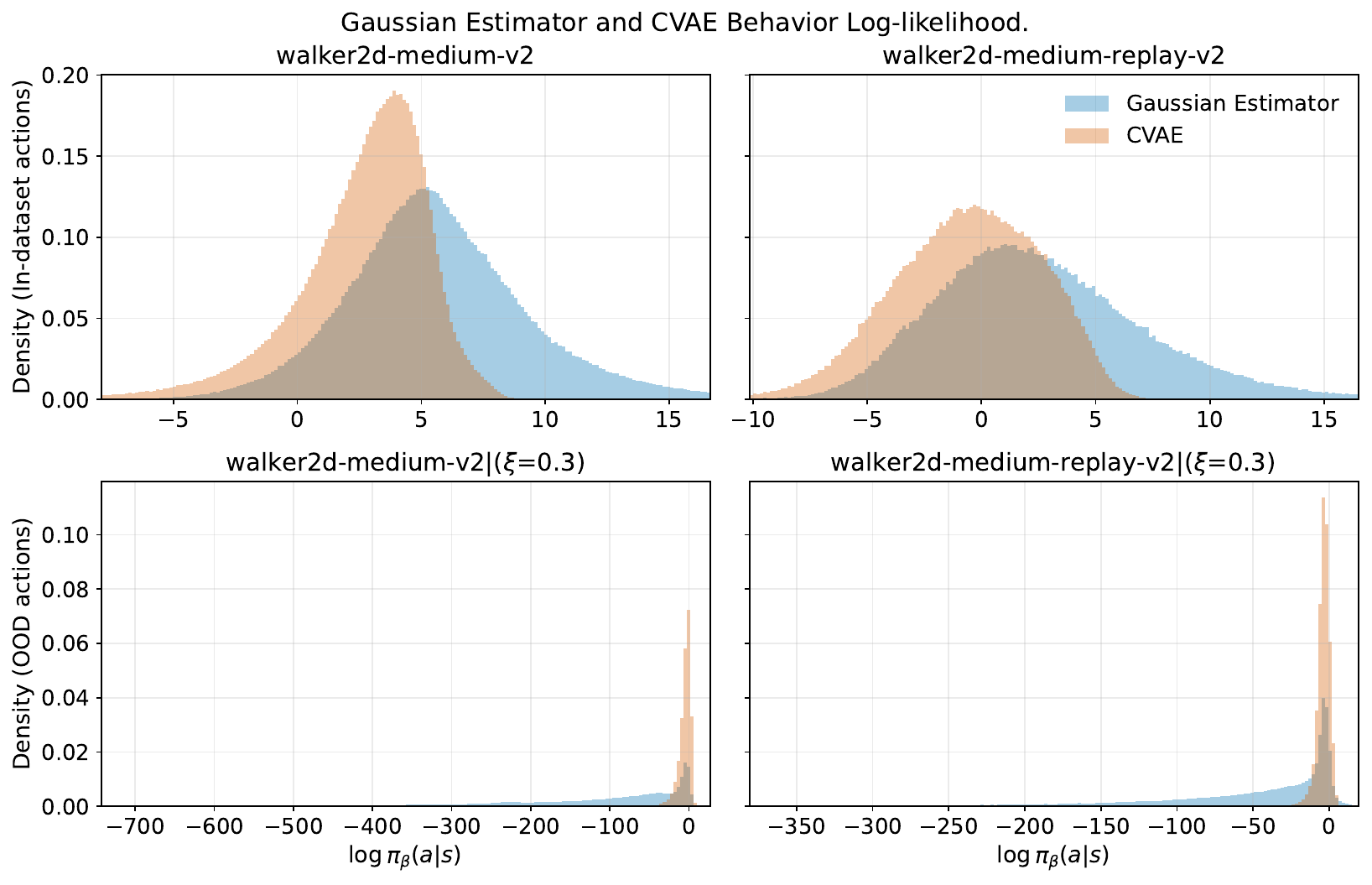}}
\caption{Distributions of $\log \pi_\beta(a|s)$ under Gaussian and CVAE behavior estimators on in-dataset actions and OOD actions.}
\label{Fig-Gaussian_CVAE_log}
\end{figure}

Fig.~\ref{Fig-Gaussian_CVAE_log} explains the reason for weaker performance of the CVAE estimator on \texttt{walker2d-medium-v2} and \texttt{walker2d-medium-replay-v2}. On in-dataset actions, the CVAE induces a noticeably sharper $\log \pi_\beta(a|s)$ distribution than the Gaussian model, suggesting a more concentrated density estimate. More importantly, on perturbed OOD actions, the CVAE does not consistently assign sufficiently low likelihood compared with the Gaussian estimator: the distribution of $\log \pi_\beta(a_{\mathrm{ood}}|s)$ is not shifted left as much as expected under clearly OOD perturbations. This overestimation of behavior likelihood for OOD actions can weaken the constraint signal in ACPO, increase susceptibility to extrapolation error, and ultimately degrade performance.

Training details for all experiments are provided in Appendix~\ref{Appendix-Experiment}.

\section{Conclusion}
We proposed CCI, a unified optimization framework that bridges weighted BC, KL-based density regularization, and support constraints as special cases along a common constraint spectrum. CCI exposes an explicit interpolation parameter that enables smooth transitions and principled combinations across constraint regimes. Building on CCI, we developed ACPO, a practical primal--dual algorithm that automatically adapts this constraint dial via a Lagrangian dual update. We further established a maximum-entropy performance difference lemma and derived performance lower bounds for both the closed-form optimal policy and its parametric projection. Experiments on D4RL and NeoRL2 show that ACPO achieves robust gains across diverse domains, often matching or exceeding strong baselines overall. As future work, extending ACPO to trajectory-level settings for long-horizon credit assignment and sparse rewards tasks is critical. Another promising direction is to incorporate dataset filtering to improve the reliability of behavior policy estimation.

\section{Impact Statements}
This paper presents work whose goal is to advance the field of machine learning. There are many potential societal consequences of our work, none of which we feel must be specifically highlighted here.


\bibliography{example_paper}

\begin{thebibliography}{58}
\providecommand{\natexlab}[1]{#1}
\providecommand{\url}[1]{\texttt{#1}}
\expandafter\ifx\csname urlstyle\endcsname\relax
  \providecommand{\doi}[1]{doi: #1}\else
  \providecommand{\doi}{doi: \begingroup \urlstyle{rm}\Url}\fi

\bibitem[Achiam et~al.(2017)Achiam, Held, Tamar, and Abbeel]{CPO}
Achiam, J., Held, D., Tamar, A., and Abbeel, P.
\newblock Constrained policy optimization.
\newblock In \emph{Proceedings of the 34th International Conference on Machine Learning}, volume~70 of \emph{Proceedings of Machine Learning Research}, pp.\  22--31. PMLR, 06--11 Aug 2017.

\bibitem[An et~al.(2021)An, Moon, Kim, and Song]{EDAC}
An, G., Moon, S., Kim, J.-H., and Song, H.~O.
\newblock Uncertainty-based offline reinforcement learning with diversified q-ensemble.
\newblock \emph{Advances in neural information processing systems}, 34:\penalty0 7436--7447, 2021.

\bibitem[Asadi \& Littman(2017)Asadi and Littman]{MERL}
Asadi, K. and Littman, M.~L.
\newblock An alternative softmax operator for reinforcement learning.
\newblock In \emph{Proceedings of the 34th International Conference on Machine Learning}, volume~70 of \emph{Proceedings of Machine Learning Research}, pp.\  243--252. PMLR, 06--11 Aug 2017.

\bibitem[Asadulaev et~al.(2024)Asadulaev, Korst, Korotin, Egiazarian, Filchenkov, and Burnaev]{OT}
Asadulaev, A., Korst, R., Korotin, A., Egiazarian, V., Filchenkov, A., and Burnaev, E.
\newblock Rethinking optimal transport in offline reinforcement learning.
\newblock In \emph{Advances in Neural Information Processing Systems}, volume~37, pp.\  123592--123607. Curran Associates, Inc., 2024.
\newblock \doi{10.52202/079017-3929}.

\bibitem[Bai et~al.(2022)Bai, Wang, Yang, Deng, Garg, Liu, and Wang]{PBRL}
Bai, C., Wang, L., Yang, Z., Deng, Z.-H., Garg, A., Liu, P., and Wang, Z.
\newblock Pessimistic bootstrapping for uncertainty-driven offline reinforcement learning.
\newblock In \emph{International Conference on Learning Representations}, 2022.

\bibitem[Chen et~al.(2024)Chen, Wang, and Zhou]{DTQL}
Chen, T., Wang, Z., and Zhou, M.
\newblock Diffusion policies creating a trust region for offline reinforcement learning.
\newblock In \emph{Advances in Neural Information Processing Systems}, volume~37, pp.\  50098--50125. Curran Associates, Inc., 2024.
\newblock \doi{10.52202/079017-1585}.

\bibitem[Chen et~al.(2025)Chen, Cai, Wu, and Zhang]{ACTIVE}
Chen, T., Cai, R., Wu, F., and Zhang, X.
\newblock {ACTIVE}: Offline reinforcement learning via adaptive imitation and in-sample \$v\$-ensemble.
\newblock In \emph{The Thirteenth International Conference on Learning Representations}, 2025.
\newblock URL \url{https://openreview.net/forum?id=qiluFujVc8}.

\bibitem[Chen et~al.(2020)Chen, Zhou, Wang, Wang, Wu, and Ross]{BAIL}
Chen, X., Zhou, Z., Wang, Z., Wang, C., Wu, Y., and Ross, K.
\newblock Bail: Best-action imitation learning for batch deep reinforcement learning.
\newblock In Larochelle, H., Ranzato, M., Hadsell, R., Balcan, M., and Lin, H. (eds.), \emph{Advances in Neural Information Processing Systems}, volume~33, pp.\  18353--18363. Curran Associates, Inc., 2020.

\bibitem[Chen et~al.(2022)Chen, Ghadirzadeh, Yu, Wang, Gao, Li, Bin, Finn, and Zhang]{LAPO}
Chen, X., Ghadirzadeh, A., Yu, T., Wang, J., Gao, Y., Li, W., Bin, L., Finn, C., and Zhang, C.
\newblock {LAPO}: Latent-variable advantage-weighted policy optimization for offline reinforcement learning.
\newblock In Oh, A.~H., Agarwal, A., Belgrave, D., and Cho, K. (eds.), \emph{Advances in Neural Information Processing Systems}, 2022.

\bibitem[Florence et~al.(2022)Florence, Lynch, Zeng, Ramirez, Wahid, Downs, Wong, Lee, Mordatch, and Tompson]{IBC}
Florence, P., Lynch, C., Zeng, A., Ramirez, O.~A., Wahid, A., Downs, L., Wong, A., Lee, J., Mordatch, I., and Tompson, J.
\newblock Implicit behavioral cloning.
\newblock In \emph{Proceedings of the 5th Conference on Robot Learning}, volume 164 of \emph{Proceedings of Machine Learning Research}, pp.\  158--168. PMLR, 08--11 Nov 2022.

\bibitem[Fu et~al.(2021)Fu, Kumar, Nachum, Tucker, and Levine]{D4RL}
Fu, J., Kumar, A., Nachum, O., Tucker, G., and Levine, S.
\newblock D4rl: Datasets for deep data-driven reinforcement learning, 2021.
\newblock URL \url{https://arxiv.org/abs/2004.07219}.

\bibitem[Fujimoto \& Gu(2021)Fujimoto and Gu]{TD3+BC}
Fujimoto, S. and Gu, S.~S.
\newblock A minimalist approach to offline reinforcement learning.
\newblock In \emph{Advances in Neural Information Processing Systems}, volume~34, pp.\  20132--20145. Curran Associates, Inc., 2021.

\bibitem[Fujimoto et~al.(2018)Fujimoto, van Hoof, and Meger]{ClipDoubleQ}
Fujimoto, S., van Hoof, H., and Meger, D.
\newblock Addressing function approximation error in actor-critic methods.
\newblock In \emph{Proceedings of the 35th International Conference on Machine Learning}, volume~80 of \emph{Proceedings of Machine Learning Research}, pp.\  1587--1596. PMLR, 10--15 Jul 2018.

\bibitem[Fujimoto et~al.(2019)Fujimoto, Meger, and Precup]{BCQ}
Fujimoto, S., Meger, D., and Precup, D.
\newblock Off-policy deep reinforcement learning without exploration.
\newblock In \emph{Proceedings of the 36th International Conference on Machine Learning}, volume~97 of \emph{Proceedings of Machine Learning Research}, pp.\  2052--2062. PMLR, 09--15 Jun 2019.

\bibitem[Gao et~al.(2025)Gao, Tu, Qin, Sun, Chen, and Yu]{NeoRL2}
Gao, S., Tu, Z., Qin, R.-J., Sun, Y.-H., Chen, X.-H., and Yu, Y.
\newblock Neorl-2: Near real-world benchmarks for offline reinforcement learning with extended realistic scenarios, 2025.

\bibitem[Ghasemipour et~al.(2021)Ghasemipour, Schuurmans, and Gu]{EMaQ}
Ghasemipour, S. K.~S., Schuurmans, D., and Gu, S.~S.
\newblock Emaq: Expected-max q-learning operator for simple yet effective offline and online rl.
\newblock In \emph{Proceedings of the 38th International Conference on Machine Learning}, volume 139 of \emph{Proceedings of Machine Learning Research}, pp.\  3682--3691. PMLR, 18--24 Jul 2021.

\bibitem[Haarnoja et~al.(2017)Haarnoja, Tang, Abbeel, and Levine]{SQL}
Haarnoja, T., Tang, H., Abbeel, P., and Levine, S.
\newblock Reinforcement learning with deep energy-based policies.
\newblock In \emph{Proceedings of the 34th International Conference on Machine Learning}, volume~70 of \emph{Proceedings of Machine Learning Research}, pp.\  1352--1361. PMLR, 06--11 Aug 2017.

\bibitem[Haarnoja et~al.(2018)Haarnoja, Zhou, Abbeel, and Levine]{SAC}
Haarnoja, T., Zhou, A., Abbeel, P., and Levine, S.
\newblock Soft actor-critic: Off-policy maximum entropy deep reinforcement learning with a stochastic actor.
\newblock In \emph{Proceedings of the 35th International Conference on Machine Learning}, volume~80 of \emph{Proceedings of Machine Learning Research}, pp.\  1861--1870. PMLR, 10--15 Jul 2018.

\bibitem[Han et~al.(2024)Han, Afifi, Moungla, and Marot]{LeakyPPO}
Han, X., Afifi, H., Moungla, H., and Marot, M.
\newblock Leaky ppo: A simple and efficient rl algorithm for autonomous vehicles.
\newblock In \emph{2024 International Joint Conference on Neural Networks (IJCNN)}, pp.\  1--7, 2024.
\newblock \doi{10.1109/IJCNN60899.2024.10650450}.

\bibitem[Han et~al.(2025{\natexlab{a}})Han, Afifi, and Marot]{CosIQL}
Han, X., Afifi, H., and Marot, M.
\newblock Cosine similarity based adaptive implicit q-learning for offline reinforcement learning.
\newblock In \emph{2025 IEEE Wireless Communications and Networking Conference (WCNC)}, pp.\  1--6, 2025{\natexlab{a}}.
\newblock \doi{10.1109/WCNC61545.2025.10978817}.

\bibitem[Han et~al.(2025{\natexlab{b}})Han, Afifi, and Marot]{ELAPSE}
Han, X., Afifi, H., and Marot, M.
\newblock Elapse: Expand latent action projection space for policy optimization in offline reinforcement learning.
\newblock \emph{Neurocomputing}, 631:\penalty0 129665, 2025{\natexlab{b}}.
\newblock ISSN 0925-2312.
\newblock \doi{https://doi.org/10.1016/j.neucom.2025.129665}.

\bibitem[Han et~al.(2025{\natexlab{c}})Han, Afifi, and Marot]{PIQL}
Han, X., Afifi, H., and Marot, M.
\newblock {PIQL}: Projective implicit q-learning with support constraint for offline reinforcement learning.
\newblock In \emph{The 25th International Conference on Autonomous Agents and Multi-Agent Systems}, 2025{\natexlab{c}}.

\bibitem[Han et~al.(2025{\natexlab{d}})Han, Afifi, Moungla, and Marot]{AEM}
Han, X., Afifi, H., Moungla, H., and Marot, M.
\newblock Attention ensemble mixture: a novel offline reinforcement learning algorithm for autonomous vehicles.
\newblock \emph{Applied Intelligence}, 55\penalty0 (6):\penalty0 1--14, 2025{\natexlab{d}}.

\bibitem[Kakade \& Langford(2002)Kakade and Langford]{PDT}
Kakade, S. and Langford, J.
\newblock Approximately optimal approximate reinforcement learning.
\newblock In \emph{Proceedings of the Nineteenth International Conference on Machine Learning}, ICML '02, pp.\  267–274, San Francisco, CA, USA, 2002. Morgan Kaufmann Publishers Inc.
\newblock ISBN 1558608737.

\bibitem[Kim et~al.(2025)Kim, Shin, Jung, Hong, Yoon, Sung, Lee, and Lim]{PARS}
Kim, J., Shin, Y., Jung, W., Hong, S., Yoon, D., Sung, Y., Lee, K., and Lim, W.
\newblock Penalizing infeasible actions and reward scaling in reinforcement learning with offline data.
\newblock In \emph{Forty-second International Conference on Machine Learning}, 2025.

\bibitem[Kostrikov et~al.(2021{\natexlab{a}})Kostrikov, Fergus, Tompson, and Nachum]{Fisher-BRC}
Kostrikov, I., Fergus, R., Tompson, J., and Nachum, O.
\newblock Offline reinforcement learning with fisher divergence critic regularization.
\newblock In \emph{Proceedings of the 38th International Conference on Machine Learning}, volume 139 of \emph{Proceedings of Machine Learning Research}, pp.\  5774--5783. PMLR, 18--24 Jul 2021{\natexlab{a}}.

\bibitem[Kostrikov et~al.(2021{\natexlab{b}})Kostrikov, Nair, and Levine]{IQL}
Kostrikov, I., Nair, A., and Levine, S.
\newblock Offline reinforcement learning with implicit q-learning.
\newblock In \emph{Deep RL Workshop NeurIPS 2021}, 2021{\natexlab{b}}.
\newblock URL \url{https://openreview.net/forum?id=EblVBDNalKu}.

\bibitem[Kumar et~al.(2019)Kumar, Fu, Soh, Tucker, and Levine]{BEAR}
Kumar, A., Fu, J., Soh, M., Tucker, G., and Levine, S.
\newblock Stabilizing off-policy q-learning via bootstrapping error reduction.
\newblock \emph{Advances in neural information processing systems}, 32, 2019.

\bibitem[Kumar et~al.(2020)Kumar, Zhou, Tucker, and Levine]{CQL}
Kumar, A., Zhou, A., Tucker, G., and Levine, S.
\newblock Conservative q-learning for offline reinforcement learning.
\newblock In \emph{Advances in Neural Information Processing Systems}, volume~33, pp.\  1179--1191. Curran Associates, Inc., 2020.

\bibitem[Lee et~al.(2021)Lee, Jeon, Lee, Pineau, and Kim]{OptiDICE}
Lee, J., Jeon, W., Lee, B., Pineau, J., and Kim, K.-E.
\newblock Optidice: Offline policy optimization via stationary distribution correction estimation.
\newblock In \emph{Proceedings of the 38th International Conference on Machine Learning}, volume 139 of \emph{Proceedings of Machine Learning Research}, pp.\  6120--6130. PMLR, 18--24 Jul 2021.

\bibitem[Levine et~al.(2020)Levine, Kumar, Tucker, and Fu]{ORLtutorial}
Levine, S., Kumar, A., Tucker, G., and Fu, J.
\newblock Offline reinforcement learning: Tutorial, review, and perspectives on open problems.
\newblock \emph{arXiv preprint arXiv:2005.01643}, 2020.

\bibitem[Liu et~al.(2024)Liu, Zhang, Li, Yang, Liu, and Ouyang]{APTQ}
Liu, J., Zhang, Y., Li, C., Yang, Y., Liu, Y., and Ouyang, W.
\newblock Adaptive pessimism via target q-value for offline reinforcement learning.
\newblock \emph{Neural Networks}, 180:\penalty0 106588, 2024.
\newblock ISSN 0893-6080.
\newblock \doi{https://doi.org/10.1016/j.neunet.2024.106588}.

\bibitem[Luo et~al.(2025)Luo, Xie, Wang, and Huang]{SSAR}
Luo, Q.-W., Xie, M.-K., Wang, Y.-W., and Huang, S.-J.
\newblock Learning to trust bellman updates: Selective state-adaptive regularization for offline {RL}.
\newblock In \emph{Forty-second International Conference on Machine Learning}, 2025.

\bibitem[Lyu et~al.(2022)Lyu, Ma, Li, and Lu]{MCQ}
Lyu, J., Ma, X., Li, X., and Lu, Z.
\newblock Mildly conservative q-learning for offline reinforcement learning.
\newblock In \emph{Advances in Neural Information Processing Systems}, volume~35, pp.\  1711--1724. Curran Associates, Inc., 2022.

\bibitem[Mao et~al.(2024)Mao, Xu, Zhan, Zhang, and Zhang]{DiffDICE}
Mao, L., Xu, H., Zhan, X., Zhang, W., and Zhang, A.
\newblock Diffusion-dice: In-sample diffusion guidance for offline reinforcement learning.
\newblock In Globerson, A., Mackey, L., Belgrave, D., Fan, A., Paquet, U., Tomczak, J., and Zhang, C. (eds.), \emph{Advances in Neural Information Processing Systems}, volume~37, pp.\  98806--98834. Curran Associates, Inc., 2024.
\newblock \doi{10.52202/079017-3136}.

\bibitem[Mao et~al.(2023)Mao, Zhang, Chen, Xu, and Ji]{STR}
Mao, Y., Zhang, H., Chen, C., Xu, Y., and Ji, X.
\newblock Supported trust region optimization for offline reinforcement learning.
\newblock In \emph{International Conference on Machine Learning}, pp.\  23829--23851. PMLR, 2023.

\bibitem[Nachum et~al.(2019)Nachum, Chow, Dai, and Li]{DualDICE}
Nachum, O., Chow, Y., Dai, B., and Li, L.
\newblock Dualdice: Behavior-agnostic estimation of discounted stationary distribution corrections.
\newblock In \emph{Advances in Neural Information Processing Systems}, volume~32. Curran Associates, Inc., 2019.

\bibitem[Nair et~al.(2020)Nair, Gupta, Dalal, and Levine]{AWAC}
Nair, A., Gupta, A., Dalal, M., and Levine, S.
\newblock Awac: Accelerating online reinforcement learning with offline datasets.
\newblock \emph{arXiv preprint arXiv:2006.09359}, 2020.

\bibitem[Omura et~al.(2025)Omura, Mukuta, Ota, Osa, and Harada]{Wasserstein}
Omura, M., Mukuta, Y., Ota, K., Osa, T., and Harada, T.
\newblock Offline reinforcement learning with wasserstein regularization via optimal transport maps.
\newblock In \emph{Reinforcement Learning Conference}, 2025.
\newblock URL \url{https://openreview.net/forum?id=RxdQBYKjtE}.

\bibitem[Peng et~al.(2019)Peng, Kumar, Zhang, and Levine]{AWR}
Peng, X.~B., Kumar, A., Zhang, G., and Levine, S.
\newblock Advantage-weighted regression: Simple and scalable off-policy reinforcement learning.
\newblock \emph{arXiv preprint arXiv:1910.00177}, 2019.

\bibitem[Peng et~al.(2023)Peng, Liu, Chen, and Zhou]{CEN}
Peng, Z., Liu, Y., Chen, H., and Zhou, Z.
\newblock Conservative network for offline reinforcement learning.
\newblock \emph{Knowledge-Based Systems}, 282:\penalty0 111101, 2023.
\newblock ISSN 0950-7051.
\newblock \doi{https://doi.org/10.1016/j.knosys.2023.111101}.

\bibitem[Peters \& Schaal(2007)Peters and Schaal]{RWR}
Peters, J. and Schaal, S.
\newblock Reinforcement learning by reward-weighted regression for operational space control.
\newblock In \emph{Proceedings of the 24th international conference on Machine learning}, pp.\  745--750, 2007.

\bibitem[Rigter et~al.(2022)Rigter, Lacerda, and Hawes]{RAMBO}
Rigter, M., Lacerda, B., and Hawes, N.
\newblock Rambo-rl: Robust adversarial model-based offline reinforcement learning.
\newblock In \emph{Advances in Neural Information Processing Systems}, volume~35, pp.\  16082--16097. Curran Associates, Inc., 2022.

\bibitem[Siegel et~al.(2020)Siegel, Springenberg, Berkenkamp, Abdolmaleki, Neunert, Lampe, Hafner, Heess, and Riedmiller]{ABM}
Siegel, N., Springenberg, J.~T., Berkenkamp, F., Abdolmaleki, A., Neunert, M., Lampe, T., Hafner, R., Heess, N., and Riedmiller, M.
\newblock Keep doing what worked: Behavior modelling priors for offline reinforcement learning.
\newblock In \emph{International Conference on Learning Representations}, 2020.

\bibitem[Sun et~al.(2023)Sun, Zhang, Jia, Lin, Ye, and Yu]{MOBILE}
Sun, Y., Zhang, J., Jia, C., Lin, H., Ye, J., and Yu, Y.
\newblock Model-{B}ellman inconsistency for model-based offline reinforcement learning.
\newblock In \emph{Proceedings of the 40th International Conference on Machine Learning}, volume 202 of \emph{Proceedings of Machine Learning Research}, pp.\  33177--33194. PMLR, 23--29 Jul 2023.

\bibitem[Sutton \& Barto(2018)Sutton and Barto]{SuttonRL}
Sutton, R.~S. and Barto, A.~G.
\newblock \emph{Reinforcement learning: An introduction}.
\newblock MIT press, 2018.

\bibitem[Tagle et~al.(2025)Tagle, del solar, and Tobar]{SWG}
Tagle, A., del solar, J.~R., and Tobar, F.
\newblock Diffusion self-weighted guidance for offline reinforcement learning.
\newblock \emph{Transactions on Machine Learning Research}, 2025.
\newblock ISSN 2835-8856.
\newblock URL \url{https://openreview.net/forum?id=jmXBnpmznv}.

\bibitem[Wang et~al.(2020)Wang, Novikov, Zolna, Merel, Springenberg, Reed, Shahriari, Siegel, Gulcehre, Heess, et~al.]{CRR}
Wang, Z., Novikov, A., Zolna, K., Merel, J.~S., Springenberg, J.~T., Reed, S.~E., Shahriari, B., Siegel, N., Gulcehre, C., Heess, N., et~al.
\newblock Critic regularized regression.
\newblock \emph{Advances in Neural Information Processing Systems}, 33:\penalty0 7768--7778, 2020.

\bibitem[Wu et~al.(2022)Wu, Wu, Qiu, Wang, and Long]{SPOT}
Wu, J., Wu, H., Qiu, Z., Wang, J., and Long, M.
\newblock Supported policy optimization for offline reinforcement learning.
\newblock \emph{Advances in Neural Information Processing Systems}, 35:\penalty0 31278--31291, 2022.

\bibitem[Wu et~al.(2025)Wu, Zhao, Xu, Che, Yin, Harold~Liu, Feng, and Tang]{ACLQL}
Wu, K., Zhao, Y., Xu, Z., Che, Z., Yin, C., Harold~Liu, C., Feng, F., and Tang, J.
\newblock Acl-ql: Adaptive conservative level in q-learning for offline reinforcement learning.
\newblock \emph{IEEE Transactions on Neural Networks and Learning Systems}, 36\penalty0 (6):\penalty0 11399--11413, 2025.
\newblock \doi{10.1109/TNNLS.2024.3497667}.

\bibitem[Wu et~al.(2020)Wu, Tucker, and Nachum]{BRAC}
Wu, Y., Tucker, G., and Nachum, O.
\newblock Behavior regularized offline reinforcement learning, 2020.
\newblock URL \url{https://openreview.net/forum?id=BJg9hTNKPH}.

\bibitem[Xiao et~al.(2019)Xiao, Huang, Mei, Schuurmans, and M\"{u}ller]{MEMC}
Xiao, C., Huang, R., Mei, J., Schuurmans, D., and M\"{u}ller, M.
\newblock Maximum entropy monte-carlo planning.
\newblock In \emph{Advances in Neural Information Processing Systems}, volume~32. Curran Associates, Inc., 2019.

\bibitem[Xiao et~al.(2023)Xiao, Wang, Pan, White, and White]{InAC}
Xiao, C., Wang, H., Pan, Y., White, A., and White, M.
\newblock The in-sample softmax for offline reinforcement learning.
\newblock In \emph{The Eleventh International Conference on Learning Representations}, 2023.

\bibitem[Xu et~al.(2022)Xu, Zhan, Yin, and Qin]{DWBC}
Xu, H., Zhan, X., Yin, H., and Qin, H.
\newblock Discriminator-weighted offline imitation learning from suboptimal demonstrations.
\newblock In \emph{Proceedings of the 39th International Conference on Machine Learning}, volume 162 of \emph{Proceedings of Machine Learning Research}, pp.\  24725--24742. PMLR, 17--23 Jul 2022.

\bibitem[Xu et~al.(2023)Xu, Jiang, Li, Yang, Wang, Chan, and Zhan]{EQL}
Xu, H., Jiang, L., Li, J., Yang, Z., Wang, Z., Chan, V. W.~K., and Zhan, X.
\newblock Offline {RL} with no {OOD} actions: In-sample learning via implicit value regularization.
\newblock In \emph{The Eleventh International Conference on Learning Representations}, 2023.
\newblock URL \url{https://openreview.net/forum?id=ueYYgo2pSSU}.

\bibitem[Yu et~al.(2020)Yu, Thomas, Yu, Ermon, Zou, Levine, Finn, and Ma]{MOPO}
Yu, T., Thomas, G., Yu, L., Ermon, S., Zou, J.~Y., Levine, S., Finn, C., and Ma, T.
\newblock Mopo: Model-based offline policy optimization.
\newblock In \emph{Advances in Neural Information Processing Systems}, volume~33, pp.\  14129--14142. Curran Associates, Inc., 2020.

\bibitem[Yu et~al.(2021)Yu, Kumar, Rafailov, Rajeswaran, Levine, and Finn]{COMBO}
Yu, T., Kumar, A., Rafailov, R., Rajeswaran, A., Levine, S., and Finn, C.
\newblock Combo: Conservative offline model-based policy optimization.
\newblock In \emph{Advances in Neural Information Processing Systems}, volume~34, pp.\  28954--28967. Curran Associates, Inc., 2021.

\bibitem[Zhou et~al.(2022)Zhou, Li, and Qu]{ABR}
Zhou, Y., Li, X., and Qu, Q.
\newblock Offline reinforcement learning with adaptive behavior regularization, 2022.

\end{thebibliography}
\bibliographystyle{icml2026}

\newpage
\appendix
\onecolumn

\section{Continuous Constraint Interpolation Framework}
\label{Appendix-CCI}

We start from the following constrained maximum-entropy policy optimization problem,
\begin{gather} 
\begin{aligned}
\max_{\pi} \ & \mathbb{E}_{s\sim \mathcal{D}, a\sim \pi(\cdot|s)} \big[ Q(s,a) - \alpha \log \pi(a|s) \big],\\
& \text{s.t.} \; \mathbb{E}_{s\sim \mathcal{D}, a\sim \pi(\cdot|s)}
\big[ \log \pi_{\beta}(a|s) \big] \ge \epsilon, \\
& \quad \int_a \pi(a|s)\, da = 1, \ \forall s.
\label{Eq-CCI-Opt-Pro-Appendix}
\end{aligned}
\end{gather}

Solving the above constrained problem Eq.~\eqref{Eq-CCI-Opt-Pro-Appendix} via the Lagrangian, 
\begin{gather} 
\begin{aligned}
\mathcal L(\pi,\lambda,\nu) = \mathbb{E}_{s\sim \mathcal{D}, a\sim \pi(\cdot|s)} \left[ Q(s,a) - \alpha \log \pi(a|s) \right] + \lambda \left[ \mathbb{E}_{s\sim \mathcal{D}, a\sim \pi(\cdot|s)} \left[ \log \pi_{\beta}(a|s) - \epsilon \right] \right] + \mathbb{E}_{s\sim\mathcal D} \left[ \nu(s) \big[ \int_a \pi(a|s)\, da - 1 \big] \right],
\label{Eq-CCI-Lagrangian-Appendix}
\end{aligned}
\end{gather}

To obtain the optimal non-parametric policy, we take the functional derivative of $\mathcal L(\pi,\lambda,\nu)$ w.r.t. $\pi(\cdot|s)$, and setting it to zero yields: 
\begin{gather}
\begin{aligned}
0
& = \frac{\partial \mathcal L(\pi,\lambda,\nu)}{\partial \pi} \\
& = Q(s,a) - \alpha\big(1+\log\pi(a|s)\big) + \lambda\log\pi_\beta(a|s) + \nu(s),
\end{aligned}
\end{gather}
Rearranging gives
\begin{gather}
\log\pi(a|s)
= \frac{1}{\alpha}\Big(Q(s,a)+\lambda\log\pi_\beta(a|s)+\nu(s)-\alpha\Big).
\end{gather}
Exponentiating both sides yields
\begin{gather}
\pi^*_\lambda(a|s)
= \exp\!\left(\frac{\nu(s)-\alpha}{\alpha}\right)\,
\exp\!\left(\frac{Q(s,a)}{\alpha}\right)\,
\pi_\beta(a|s)^{\lambda/\alpha}.
\end{gather}
Since the prefactor $\exp((\nu(s)-\alpha)/\alpha)$ does not depend on $a$, it can be absorbed into the normalizer, giving
\begin{gather}
\pi^*_\lambda(a|s)\ \propto\ \exp\!\left(\frac{Q(s,a)}{\alpha}\right)\,\pi_\beta(a|s)^{\lambda/\alpha}.
\end{gather}
Enforcing $\int \pi^*_\lambda(a|s)\,da = 1$ yields the normalized closed-form solution
\begin{gather} 
\begin{aligned}
\pi^*_{\lambda}(a|s)
= \exp \Big(\frac{Q(s,a)-Z_{\lambda}(s)}{\alpha}\Big)\ \pi_{\beta}(a|s)^{\lambda/\alpha},
\end{aligned}
\label{Eq-CCI-ClosedForm-Z-Appendix}
\end{gather}
where $\lambda\ge 0$ and $\nu(s)$ are dual variables, and $Z_\lambda(s) = \alpha \log \int_a \exp \Big(\frac{Q(s,a)}{\alpha}\Big) \pi_\beta(a|s)^{\lambda/\alpha}\,da$.

Similarly within InAC \cite{InAC}, we also replace $Z_\lambda(s)$ with value function $V(s)$. Therefore, the closed-form solution Eq.~\eqref{Eq-CCI-ClosedForm-Z-Appendix} can be transformed into the following form,
\begin{gather} 
\begin{aligned}
\pi^*_{\lambda}(a|s) \propto \exp \Big(\frac{A(s, a)}{\alpha}\Big)\ \pi_{\beta}(a|s)^{\lambda/\alpha},
\end{aligned}
\label{Eq-CCI-ClosedForm-Appendix}
\end{gather}

Then, we project the non-parametric optimal policy $\pi_{\lambda}^*$ into the parametric space $\pi_{\theta}$ by minimizing the reverse KL divergence under the state distribution in dataset,
\begin{gather} 
\begin{aligned}
\pi_{\theta}(a|s) & = \mathop{\arg\min}_{\pi_{\theta}} \mathbb{E}_{s\sim \mathcal{D}} \big[\mathop{D_{\text{KL}}} \left[\pi^*_{\lambda}(\cdot|s) || \pi_{\theta}(\cdot|s) \right] \big] \\
& = \arg \min_{\theta} \mathbb{E}_{s\sim \mathcal{D}} \Big[ \mathbb{E}_{a \sim \pi^*_{\lambda}(\cdot|s)} \big[\log \pi^*_{\lambda}(a|s) - \log \pi_{\theta}(a|s) \big] \Big] \\
& = \arg \min_{\theta} \mathbb{E}_{s\sim \mathcal{D}} \Big[ \mathbb{E}_{a \sim \pi_{\beta}(\cdot|s)} \Big[ - \exp \Big( \frac{A(s, a)}{\alpha}\Big) \pi_{\beta}(a|s)^{(\lambda - \alpha)/ \alpha} \log \pi_{\theta}(a|s) \Big] \Big] \\
& = \arg \max_{\theta} \mathbb{E}_{s\sim \mathcal{D}} \Big[ \mathbb{E}_{a \sim \pi_{\beta}(\cdot|s)} \Big[ \exp \Big( \frac{A(s, a)}{\alpha} + \frac{\lambda - \alpha}{\alpha} \log \pi_{\beta}(a|s) \Big) \log \pi_{\theta}(a|s) \Big] \Big].
\label{Eq-CCI-Reverse-KL-Appendix}
\end{aligned}
\end{gather}

From Eq.~\eqref{Eq-CCI-Reverse-KL-Appendix}, the policy improvement objective can be written in the dataset form
\begin{gather}
\max_{\theta}\ \mathcal{J}^{\text{CCI}}(\theta;\lambda) = 
\mathbb{E}_{(s,a)\sim\mathcal D}\Big[w_\lambda(s,a)\,\log\pi_\theta(a|s)\Big],
\label{Eq-CCI-weight-unified}
\end{gather}
where $w_\lambda(s,a) = 
\exp \Big( \frac{A(s, a)}{\alpha} + \frac{\lambda - \alpha}{\alpha} \log \pi_{\beta}(a|s) \Big)$. This yields the CCI policy improvement objective Eq.~\eqref{Eq-CCI-Obj}. 

Varying $\lambda$ continuously interpolates the influence of the weight term $w_\lambda(s,a)$, yielding several representative regimes:

\paragraph{\ding{172} Support Constraint $\lambda=0$.}
When $\lambda=0$, the objective becomes
\begin{gather}
\max_{\theta}\ \mathcal{J}^{\text{CCI}}(\theta; 0) = 
\mathbb{E}_{(s,a)\sim\mathcal D}\Big[ \exp \Big( \frac{A(s, a)}{\alpha} - \log \pi_{\beta}(a|s) \Big) \log\pi_\theta(a|s)\Big],
\end{gather}
which matches policy improvement objective used by InAC~\cite{InAC}. 

\paragraph{\ding{173} Support-to-Density Constraint $0 < \lambda < \alpha$.} when $0 < \lambda < \alpha$, the objective becomes
\begin{gather}
\max_{\theta}\ \mathcal{J}^{\text{CCI}}(\theta; \alpha) = 
\mathbb{E}_{(s,a)\sim\mathcal D}\Big[ \exp \Big( \frac{A(s, a)}{\alpha} + \frac{\alpha - \lambda}{\alpha} \log \pi_\beta(a|s) \Big) \log\pi_\theta(a|s)\Big],
\end{gather}
As $\lambda$ increases toward $\alpha$, the update gradually shifts from support constraint toward KL-based density constraint.

\paragraph{\ding{174} KL-based Density Constraint $\lambda = \alpha$.} 
When $\lambda = \alpha$, the objective becomes
\begin{gather}
\max_{\theta}\ \mathcal{J}^{\text{CCI}}(\theta; \alpha) = 
\mathbb{E}_{(s,a)\sim\mathcal D}\Big[ \exp \Big( \frac{A(s, a)}{\alpha} \Big) \log\pi_\theta(a|s)\Big],
\end{gather}
which matches the policy improvement objective in AWAC \cite{AWAC}. 

\paragraph{\ding{175} Density-to-wBC Constraint $\lambda > \alpha$.} When $\lambda > \alpha$, the objective becomes
\begin{gather}
\max_{\theta}\ \mathcal{J}^{\text{CCI}}(\theta;\lambda)
=\mathbb{E}_{(s,a)\sim\mathcal D}\Big[
\exp\!\Big(
\underbrace{\tfrac{A(s,a)}{\alpha}}_{\text{KL-density term}}
+
\underbrace{\tfrac{\lambda-\alpha}{\alpha}\log \pi_{\beta}(a|s)}_{\text{wBC term}}
\Big)\ \log\pi_\theta(a|s)
\Big].
\end{gather}

\paragraph{\ding{176} Practical wBC $\lambda \gg \alpha + A(s,a) (\log \pi_{\beta}(a|s))^{-1}$.} When $\lambda \gg \alpha + A(s,a) (\log \pi_{\beta}(a|s))^{-1}$, then $\frac{\lambda - \alpha}{\alpha} \log \pi_{\beta}(a|s) \gg \frac{A(s, a)}{\alpha}$, the objective becomes 
\begin{gather}
\max_{\theta}\ \mathcal{J}^{\text{CCI}}(\theta; \lambda) = 
\mathbb{E}_{(s,a)\sim\mathcal D}\Big[ \exp \big( \frac{\lambda - \alpha}{\alpha} \log \pi_{\beta}(a|s) \big) \log \pi_\theta(a|s) \Big],
\end{gather}

We note that the practical wBC regime corresponds to choosing
$ \lambda \gg \alpha + A(s,a)\big(\log \pi_{\beta}(a|s)\big)^{-1}.
$ Next, we provide an explicit upper bound on $\big|A(s,a)\,(\log \pi_{\beta}(a|s))^{-1}\big|$ under the maximum-entropy setting.

In practice, we stabilize training by clipping log-probabilities for any stochastic policy: $\log\pi(a|s)\in[\ell_{\min},\ell_{\max}]$. Define
$$C_{\max} \triangleq \max\{|\ell_{\min}|,|\ell_{\max}|\} 
\Rightarrow 
|-\alpha\log\pi(a|s)|\le \alpha C_{\max},$$
In common RL reward setting, the reward function is bounded by $|r(s, a)| \le R_{\max}$. Then the per-step soft reward satisfies 
$ \big|r(s,a)-\alpha\log\pi(a|s)\big| \le R_{\max}+\alpha C_{\max}$. By the discounted sum, we have $|Q^{\pi}(s,a)| \le \frac{R_{\max}+\alpha C_{\max}}{1-\gamma}$, $
|V^{\pi}(s)| \le \frac{R_{\max}+\alpha C_{\max}}{1-\gamma}$. Therefore, 
\begin{gather}
|A^{\pi}(s,a)|
\le |Q^{\pi}(s,a)|+|V^{\pi}(s)|
\le \frac{2\big(R_{\max}+\alpha C_{\max}\big)}{1-\gamma}.
\label{Eq:A_bound_maxent}
\end{gather}
It follows that
\begin{gather}
\left|\frac{A(s,a)}{\log\pi_\beta(a|s)}\right|
\le
\frac{|A(s,a)|}{C^{\beta}_{\min} + \delta}
\le
\frac{2\big(R_{\max}+\alpha C_{\max}\big)}{(1-\gamma)(C^{\beta}_{\min} + \delta)}.
\label{Eq:A_over_logpb_bound_maxent}
\end{gather}

Therefore, a sufficient condition for the practical wBC constraint is to choose $\lambda \gg \alpha + \frac{2\big(R_{\max}+\alpha C_{\max}\big)}{(1-\gamma)(C^{\beta}_{\min} + \delta)}$, where $C^{\beta}_{\min} \triangleq \min |\log\pi_\beta(a|s)|$, and $\delta$ is a small constant introduced to avoid division by zero.

Summarizing the above, we can obtain Tab.~\ref{Tab-CCI-Lambda}.

\emph{In addition, \emph{SAC \cite{SAC} is another extreme initiation of the CCI framework, minimizing the forward KL divergence $\mathop{D_{\text{KL}}} \left[\pi_{\theta}(\cdot|s) || \pi^*_{\lambda}(\cdot|s) \right]$, and setting $\lambda=0$, then the policy improvement objective is }
\begin{gather}
\max_{\theta}\ \mathcal{J}^{\text{SAC}}(\theta; 0) = 
\mathbb{E}_{a \sim \pi_{\theta}(\cdot|s)}\big[ \exp \big( \frac{A(s, a)}{\alpha} - \log \pi_{\theta}(a|s) \big)\big],
\end{gather}}
which matches the policy improvement objective in SAC \cite{SAC}.

\section{Proof for Proposition \ref{Proposition}}
\label{Appendix-Prop-lambda-monotone}

\begin{proposition}
\label{prop:lambda-monotone-appendix}
Fix any state $s$ and temperature $\alpha>0$.
Let $\pi^*_{\lambda}(\cdot|s)$ denote the closed-form optimizer in Eq.~\eqref{Eq-CCI-ClosedForm} for a given $\lambda\ge 0$.
Define
\begin{equation}
g_s(\lambda)
:= \mathbb{E}_{a\sim \pi^*_{\lambda}(\cdot|s)}\!\big[\log \pi_\beta(a|s)\big].
\label{eq:gs-def-appendix}
\end{equation}
Then $g_s(\lambda)$ is monotone non-decreasing in $\lambda$, and
\begin{equation}
\frac{d}{d\lambda} g_s(\lambda)
= \frac{1}{\alpha}\,
\mathrm{Var}_{a\sim \pi^*_{\lambda}(\cdot|s)}\!\big(\log \pi_\beta(a|s)\big)
\;\ge\; 0.
\label{eq:gs-deriv-appendix}
\end{equation}
Consequently, $g(\lambda):=\mathbb{E}_{s\sim \mathcal{D}}[g_s(\lambda)]$ also satisfies $g'(\lambda)\ge 0$.
\end{proposition}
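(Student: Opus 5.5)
The plan is to recognize $\pi^*_\lambda(\cdot|s)$ from Eq.~\eqref{Eq-CCI-ClosedForm} as an exponential family in the natural parameter $\lambda/\alpha$, with sufficient statistic $T(a) := \log\pi_\beta(a|s)$ and base measure $\exp(Q(s,a)/\alpha)\,da$. Concretely, I will write
\[
\pi^*_\lambda(a|s) = \exp\!\Big(\tfrac{Q(s,a)}{\alpha} + \tfrac{\lambda}{\alpha}T(a) - \tfrac{Z_\lambda(s)}{\alpha}\Big),
\]
where $\Psi_s(\lambda) := Z_\lambda(s)/\alpha = \log \int \exp(Q(s,a)/\alpha)\, e^{(\lambda/\alpha) T(a)}\,da$ is the log-partition function. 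The standard facts are that $\Psi_s'(\lambda) = \frac{1}{\alpha}\mathbb{E}_{\pi^*_\lambda}[T]$ and $\Psi_s''(\lambda) = \frac{1}{\alpha^2}\mathrm{Var}_{\pi^*_\lambda}(T)$. Hence $g_s(\lambda) = \alpha \Psi_s'(\lambda)$ and $g_s'(\lambda) = \alpha\Psi_s''(\lambda) = \frac{1}{\alpha}\mathrm{Var}_{\pi^*_\lambda}(T)$, which is exactly Eq.~\eqref{Eq-gs-prime}.

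The steps, in order, are as follows.

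\textbf{(i) Differentiate the numerator.} Write $g_s(\lambda) = N(\lambda)/D(\lambda)$, where
\[
N(\lambda) = \int T(a)\, e^{Q(s,a)/\alpha} \pi_\beta(a|s)^{\lambda/\alpha}\, da, \qquad D(\lambda) = e^{\Psi_s(\lambda)}.
\]
Differentiating under the integral uses $\partial_\lambda \pi_\beta^{\lambda/\alpha} = \frac{1}{\alpha} T\,\pi_\beta^{\lambda/\alpha}$. This gives $N' = \frac{1}{\alpha}\int T^2 e^{Q/\alpha}\pi_\beta^{\lambda/\alpha}\,da$ and $D' = \frac{1}{\alpha} N$.

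\textbf{(ii) Apply the quotient rule.} This yields
\[
g_s' = \frac{N'}{D} - \frac{N D'}{D^2} = \frac{1}{\alpha}\big(\mathbb{E}[T^2] - \mathbb{E}[T]^2\big),
\]
with both expectations taken under $\pi^*_\lambda(\cdot|s)$. This is the variance, and it is nonnegative. Monotonicity of $g_s$ then follows from the mean value theorem.

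\textbf{(iii) Average over states.} For $g(\lambda) = \mathbb{E}_{s\sim\mathcal D}[g_s(\lambda)]$, differentiate under the expectation over the dataset states. For an empirical (finite) $\mathcal D$ this is just a finite sum. In general, it holds by the same dominated-convergence justification as in (i). Then $g' = \mathbb{E}_s[g_s'] \ge 0$. Even without interchanging the derivative and the expectation, monotonicity of $g$ follows directly from pointwise monotonicity of each $g_s$.

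The main obstacle is justifying differentiation under the integral sign in step (i), since $\log\pi_\beta$ may be unbounded below and the action space may be continuous. I will assume the same regularity the paper invokes for strong duality. Specifically, I assume that $\int e^{Q/\alpha}\pi_\beta^{\lambda'/\alpha}(1+T^2)\,da < \infty$ for $\lambda'$ in a neighborhood of $\lambda$; under the bound $|\log\pi_\beta|\le B$ used later, this is automatic. Given this, the integrand's $\lambda$-derivative is dominated locally uniformly, for instance by $|T|^k e^{Q/\alpha}\big(\pi_\beta^{(\lambda-\delta)/\alpha}+\pi_\beta^{(\lambda+\delta)/\alpha}\big)$. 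Dominated convergence then licenses the interchange, and $\Psi_s$ is finite and $C^2$ there. On the support boundary, where $\pi_\beta = 0$, the integrand vanishes for $\lambda>0$, so it contributes nothing to either integral.
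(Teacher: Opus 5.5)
Your proposal is correct and follows essentially the same route as the paper: both differentiate under the integral sign and apply the quotient rule to obtain $\frac{1}{\alpha}\mathrm{Var}_{\pi^*_\lambda}(\log\pi_\beta)$. The paper applies the quotient rule to $\pi^*_\lambda = w_\lambda/\mathcal{Z}(\lambda)$ inside the integral, you apply it to $g_s = N/D$, and your exponential-family (log-partition) reading is the same computation in different notation; your dominated-convergence justification and your remark that monotonicity of $g$ follows from pointwise monotonicity of each $g_s$ are useful additions, since the paper exchanges derivative and integral without comment.
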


\begin{proof}
Fix $s$ and abbreviate $Q(a):=Q(s,a)$ and $\pi_\beta(a):=\pi_\beta(a|s)$.
Define the unnormalized density and the partition function
\begin{equation}
w_\lambda(a):=\exp\!\left(\frac{Q(a)}{\alpha}\right)\,\pi_\beta(a)^{\lambda/\alpha},
\qquad
\mathcal{Z}(\lambda):=\int w_\lambda(a)\,da,
\label{eq:w-and-Z-appendix}
\end{equation}

so that $\pi^*_\lambda(a|s)=w_\lambda(a)/\mathcal{Z}(\lambda)$.
By definition,
\begin{equation}
g_s(\lambda)
=\int \pi^*_\lambda(a|s)\,\log \pi_\beta(a)\,da.
\label{eq:gs-int-appendix}
\end{equation}
Differentiating Eq.~\eqref{eq:gs-int-appendix} w.r.t.\ $\lambda$ and exchanging derivative and integral,
\begin{equation}
\frac{d}{d\lambda}g_s(\lambda)
=\int \frac{\partial}{\partial\lambda}\pi^*_\lambda(a|s)\,\log\pi_\beta(a)\,da.
\label{eq:gs-deriv-start-appendix}
\end{equation}
Since $\pi^*_\lambda=w_\lambda/\mathcal{Z}(\lambda)$, we have
\begin{align}
\frac{\partial}{\partial\lambda}\pi^*_\lambda(a|s)
&=\frac{w'_\lambda(a)}{\mathcal{Z}(\lambda)}-\frac{w_\lambda(a)\mathcal{Z}'(\lambda)}{\mathcal{Z}(\lambda)^2}
=\pi^*_\lambda(a|s)\Big(\frac{w'_\lambda(a)}{w_\lambda(a)}-\frac{\mathcal{Z}'(\lambda)}{\mathcal{Z}(\lambda)}\Big).
\label{eq:pi-derivative-general-appendix}
\end{align}
Using $w_\lambda(a)=\exp(Q(a)/\alpha)\exp((\lambda/\alpha)\log\pi_\beta(a))$, we obtain
\begin{equation}
\frac{w'_\lambda(a)}{w_\lambda(a)}=\frac{1}{\alpha}\log\pi_\beta(a).
\label{eq:w-ratio-appendix}
\end{equation}
Moreover,
\begin{equation}
\mathcal{Z}'(\lambda)=\int w'_\lambda(a)\,da=\int w_\lambda(a)\cdot \frac{1}{\alpha}\log\pi_\beta(a)\,da,
\qquad\Rightarrow\qquad
\frac{\mathcal{Z}'(\lambda)}{\mathcal{Z}(\lambda)}
=\frac{1}{\alpha}\int \pi^*_\lambda(a|s)\log\pi_\beta(a)\,da
=\frac{1}{\alpha}g_s(\lambda).
\label{eq:Z-ratio-appendix}
\end{equation}
Substituting Eq.~\eqref{eq:w-ratio-appendix}, \eqref{eq:Z-ratio-appendix} into Eq.~\eqref{eq:pi-derivative-general-appendix} yields
\begin{equation}
\frac{\partial}{\partial\lambda}\pi^*_\lambda(a|s)
=\frac{1}{\alpha}\pi^*_\lambda(a|s)\Big(\log\pi_\beta(a)-g_s(\lambda)\Big).
\label{eq:pi-derivative-final-appendix}
\end{equation}
Plugging Eq.~\eqref{eq:pi-derivative-final-appendix} into Eq.~\eqref{eq:gs-deriv-start-appendix} gives
\begin{align*}
\frac{d}{d\lambda}g_s(\lambda)
&=\frac{1}{\alpha}\int \pi^*_\lambda(a|s)\Big(\log\pi_\beta(a)-g_s(\lambda)\Big)\log\pi_\beta(a)\,da\\
&=\frac{1}{\alpha}\Big(\mathbb{E}_{\pi^*_\lambda}\big[(\log\pi_\beta)^2\big]-\mathbb{E}_{\pi^*_\lambda}[\log\pi_\beta]^2\Big)
=\frac{1}{\alpha}\mathrm{Var}_{a\sim \pi^*_\lambda(\cdot|s)}\big(\log\pi_\beta(a)\big)\ \ge\ 0,
\end{align*}
which proves Eq.~\eqref{eq:gs-deriv-appendix}. The statement for $g(\lambda)=\mathbb{E}_{s\sim\mathcal{D}}[g_s(\lambda)]$ follows by linearity of expectation.
\end{proof}

\section{Proof for Performance Lower Bound}
\label{Appendix-Per-Lower-Bound}

\begin{lemma}[Maximum-Entropy Performance Difference Lemma]
\label{lem-soft-pdl-appendix}
Let $\pi$ and $\pi_\beta$ be two stationary policies. Consider the maximum-entropy performance
$
\mathcal{J}(\pi)
=
\mathbb{E}_{\tau \sim \pi}
\left[
\sum_{t=0}^{\infty}\gamma^t
\big(r(s_t,a_t)-\alpha\log\pi(a_t|s_t)\big)
\right].
$
Then the maximum-entropy performance difference satisfies
\begin{gather}
\begin{aligned}
\mathcal{J}(\pi)-\mathcal{J}(\pi_\beta)
=
\frac{1}{1-\gamma}\,
\mathbb{E}_{s\sim d_\pi}
\Big[
\mathbb{E}_{a\sim\pi(\cdot|s)}\big[\tilde A^{\pi_\beta}(s,a)\big]
-\alpha\, D_{\mathrm{KL}}\!\big(\pi(\cdot|s)\,\|\,\pi_\beta(\cdot|s)\big)
\Big],
\end{aligned}
\label{Eq-soft-pdl-main-appendix}
\end{gather}
where $\tilde A^{\pi_{\beta}} (s, a)$ is the advantage function under the shaped reward 
$\tilde r(s, a) = r(s,a)-\alpha\log\pi_\beta(a|s)$.
\end{lemma}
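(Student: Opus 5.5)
The plan is to reduce the maximum-entropy statement to the standard performance difference lemma, Eq.~\eqref{Eq-Per-Dif-Standard}, applied to the MDP with shaped reward $\tilde r(s,a)=r(s,a)-\alpha\log\pi_\beta(a|s)$. Under $\tilde r$, $\pi_\beta$'s own maximum-entropy objective becomes an ordinary discounted return. Let $\tilde\eta(\pi):=\mathbb{E}_{\tau\sim\pi}[\sum_t\gamma^t\tilde r(s_t,a_t)]$. Then $\mathcal{J}(\pi_\beta)=\tilde\eta(\pi_\beta)$ exactly. Also, $\tilde A^{\pi_\beta}=\tilde Q^{\pi_\beta}-\tilde V^{\pi_\beta}$ is precisely the standard (non-entropic) advantage of $\pi_\beta$ in the shaped MDP, since $\tilde Q^{\pi_\beta}(s,a)=\tilde r(s,a)+\gamma\mathbb{E}_{s'}\tilde V^{\pi_\beta}(s')$ and $\tilde V^{\pi_\beta}(s)=\mathbb{E}_{a\sim\pi_\beta}\tilde Q^{\pi_\beta}(s,a)$. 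This coincides with the soft value recursion for $\pi_\beta$ defined in the preliminaries.

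\textbf{Step 1: rewrite $\mathcal{J}(\pi)$ through the shaped reward.} Write
\[
r(s,a)-\alpha\log\pi(a|s)=\tilde r(s,a)-\alpha\big(\log\pi(a|s)-\log\pi_\beta(a|s)\big).
\]
Summing along trajectories of $\pi$ and using $\mathbb{E}_{\tau\sim\pi}[\sum_t\gamma^t f(s_t)]=\frac{1}{1-\gamma}\mathbb{E}_{s\sim d_\pi}[f(s)]$ (with $d_\pi$ the normalized discounted visitation), this gives
\[
\mathcal{J}(\pi)=\tilde\eta(\pi)-\frac{\alpha}{1-\gamma}\mathbb{E}_{s\sim d_\pi}\big[D_{\mathrm{KL}}(\pi(\cdot|s)\|\pi_\beta(\cdot|s))\big].
\]
To obtain the KL term, first take the conditional expectation over $a_t\sim\pi(\cdot|s_t)$ of the log-ratio.

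\textbf{Step 2: apply the standard lemma in the shaped MDP.} Eq.~\eqref{Eq-Per-Dif-Standard} holds for any bounded reward, so
\[
\tilde\eta(\pi)-\tilde\eta(\pi_\beta)=\frac{1}{1-\gamma}\mathbb{E}_{s\sim d_\pi}\mathbb{E}_{a\sim\pi}[\tilde A^{\pi_\beta}(s,a)].
\]
For a self-contained argument, I would instead telescope $\sum_t\gamma^t(\tilde r_t+\gamma\tilde V^{\pi_\beta}(s_{t+1})-\tilde V^{\pi_\beta}(s_t))$ along $\pi$-trajectories, starting from the same initial distribution. Subtracting $\mathcal{J}(\pi_\beta)=\tilde\eta(\pi_\beta)$ from Step 1 and combining the two displays yields Eq.~\eqref{Eq-soft-pdl-main-appendix}.

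\textbf{Main obstacle.} The work lies in technical care rather than ideas. First, the terms must be integrable: $\log\pi_\beta$ should be bounded (as assumed later, $|\log\pi_\beta|\le B$, or via the clipping already introduced), so that $\tilde r$ is bounded and the telescoping and Fubini exchanges are valid. Second, $\mathrm{KL}(\pi\|\pi_\beta)$ must be finite, which requires $\pi\ll\pi_\beta$; otherwise both sides equal $-\infty$ consistently. Third, the normalization of $d_\pi$ must match the $\frac{1}{1-\gamma}$ factor, consistent with Eq.~\eqref{Eq-Per-Dif-Standard}.
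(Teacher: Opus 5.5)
Your proposal is correct and follows essentially the same route as the paper: split $r-\alpha\log\pi$ into the shaped reward $\tilde r$ minus $\alpha\log(\pi/\pi_\beta)$, use $\mathcal{J}(\pi_\beta)=\widetilde{\mathcal{J}}(\pi_\beta)$, apply the standard performance difference lemma to the shaped return, and convert the discounted log-ratio sum into the $d_\pi$-averaged KL via the occupancy identity. One small slip in your technical remarks: if $\pi$ is not absolutely continuous with respect to $\pi_\beta$, the right-hand side is the indeterminate form $+\infty-\infty$ (since $\tilde r=+\infty$ where $\pi_\beta=0$) while the left-hand side stays finite, so that case is excluded rather than ``consistent''; your boundedness assumption on $\log\pi_\beta$ already rules it out.
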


\begin{proof}
Introduce $\pi_\beta$ into the entropy-regularized return:
\begin{align}
\mathcal{J}(\pi)
&=
\mathbb{E}_{\tau\sim\pi}
\Big[
\sum_{t=0}^{\infty} \gamma^t
\big(r(s_t,a_t) - \alpha \log\pi(a_t|s_t)\big)
\Big] \nonumber\\
&=
\mathbb{E}_{\tau\sim\pi}
\Big[
\sum_{t=0}^{\infty} \gamma^t
\big(r(s_t,a_t) - \alpha\log\pi_\beta(a_t|s_t)\big)
\Big]
-\alpha\,\mathbb{E}_{\tau\sim\pi}
\Big[
\sum_{t=0}^{\infty} \gamma^t
\log\frac{\pi(a_t|s_t)}{\pi_\beta(a_t|s_t)}
\Big].
\label{Eq-soft-pdl-step1-appendix}
\end{align}

Define the shaped return
$\widetilde{\mathcal{J}}(\pi):=\mathbb{E}_{\tau\sim\pi}\big[\sum_{t=0}^{\infty} \gamma^t\,\tilde r(s_t,a_t)\big]$
with $\tilde r(s, a) = r(s,a)-\alpha\log\pi_\beta(a|s)$.
Since $\log\frac{\pi_\beta(a|s)}{\pi_\beta(a|s)}=0$, we have $\mathcal{J}(\pi_\beta)=\widetilde{\mathcal{J}}(\pi_\beta)$ and thus
\begin{equation}
\mathcal{J}(\pi)-\mathcal{J}(\pi_\beta)
=
\big(\widetilde{\mathcal{J}}(\pi)-\widetilde{\mathcal{J}}(\pi_\beta)\big)
-\alpha\,\mathbb{E}_{\tau\sim\pi}
\Big[
\sum_{t=0}^{\infty} \gamma^t
\log\frac{\pi(a_t|s_t)}{\pi_\beta(a_t|s_t)}
\Big].
\label{Eq-soft-pdl-step2-appendix}
\end{equation}

For the first term, $\widetilde{\mathcal{J}}$ is a standard discounted return with reward $\tilde r$.
By the performance difference theorem (e.g., Lemma~6.1 in~\cite{PDT}),
\begin{equation}
\widetilde{\mathcal{J}}(\pi)-\widetilde{\mathcal{J}}(\pi_\beta)
=
\frac{1}{1-\gamma}\,
\mathbb{E}_{s\sim d_\pi}\,
\mathbb{E}_{a\sim\pi(\cdot|s)}\big[\tilde A^{\pi_\beta}(s,a)\big].
\label{Eq-soft-pdl-step3-appendix}
\end{equation}

For the second term, using the discounted occupancy identity
$d_\pi(s)=(1-\gamma)\sum_{t=0}^{\infty}\gamma^t\,\Pr(s_t=s\mid\pi)$, we obtain
\begin{align}
\mathbb{E}_{\tau\sim\pi}
\Big[
\sum_{t=0}^{\infty} \gamma^t
\log \frac{\pi(a_t|s_t)}{\pi_\beta(a_t|s_t)}
\Big]
&=
\frac{1}{1-\gamma}\,
\mathbb{E}_{s\sim d_\pi}\,
\mathbb{E}_{a \sim \pi(\cdot|s)}
\Big[
\log\frac{\pi(a|s)}{\pi_\beta(a|s)}
\Big] \nonumber\\
&=
\frac{1}{1-\gamma}\,
\mathbb{E}_{s \sim d_\pi}
\Big[
D_{\mathrm{KL}}\!\big(\pi(\cdot|s)\,\|\, \pi_\beta(\cdot|s)\big)
\Big].
\label{Eq-soft-pdl-step4-appendix}
\end{align}

Substituting Eq.~\eqref{Eq-soft-pdl-step3-appendix} and Eq.~\eqref{Eq-soft-pdl-step4-appendix} into Eq.~\eqref{Eq-soft-pdl-step2-appendix} yields Eq.~\eqref{Eq-soft-pdl-main-appendix}.
\end{proof}

\begin{theorem}[Performance Lower Bound for the Optimal Policy]
\label{thm-optimal-policy-lower-bound-appendix}
Define $\epsilon_\beta(\pi) := \max_{s} \Big|\mathbb{E}_{a\sim\pi(\cdot|s)}\big[\tilde A^{\pi_\beta}(s,a)\big]\Big|$, $\kappa_{\beta}(\pi) := \max_{s} \mathrm{KL}\big(\pi(\cdot|s) \| \pi_\beta(\cdot|s)\big)$ and the average total variation distance under $\nu$ by $\Delta_{\mathrm{TV}} (\pi, \pi_\beta; \nu) := \mathbb{E}_{s\sim \nu} \big[D_{\mathrm{TV}}(\pi(\cdot|s), \pi_\beta(\cdot|s))\big]$. Assume $|\log \pi_\beta(a|s)| \le B$ for all $(s, a)$. For a given $\lambda\ge 0$ and $\alpha>0$, $\pi^*_{\lambda}$ satisfies the following maximum-entropy performance lower bound:
\begin{gather}
\begin{aligned}
\mathcal{J}(\pi^*_\lambda) \ge \mathcal{J}(\pi_\beta) - \frac{2 B |\lambda - \alpha|}{1-\gamma} \Delta_{\mathrm{TV}} (\pi^*_\lambda,\pi_\beta; d_{\pi_\beta}) - \frac{4 \alpha B + 2 \gamma \Big(\epsilon_\beta (\pi^*_\lambda) + \alpha \kappa_{\beta} (\pi^*_\lambda)\Big)} {(1- \gamma)^2} 
\Delta_{\mathrm{TV}} (\pi^*_\lambda,\pi_\beta; d_{\pi_\beta}).
\end{aligned}
\label{Eq-Opt-Policy-LB-appendix}
\end{gather}
\end{theorem}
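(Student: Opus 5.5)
The plan is to start from the maximum-entropy performance difference lemma (Lemma~\ref{lem-soft-pdl-appendix}) with $\pi=\pi^*_\lambda$. Write
\[
f_\pi(s):=\mathbb{E}_{a\sim\pi(\cdot|s)}\big[\tilde A^{\pi_\beta}(s,a)\big]-\alpha\,D_{\mathrm{KL}}\big(\pi(\cdot|s)\,\|\,\pi_\beta(\cdot|s)\big),
\]
so that $\mathcal{J}(\pi)-\mathcal{J}(\pi_\beta)=\frac{1}{1-\gamma}\mathbb{E}_{s\sim d_\pi}[f_\pi(s)]$. I would then split the expectation as
\[
\mathbb{E}_{d_\pi}[f_\pi]=\mathbb{E}_{d_{\pi_\beta}}[f_\pi]+\big(\mathbb{E}_{d_\pi}-\mathbb{E}_{d_{\pi_\beta}}\big)[f_\pi].
\]
The first piece should give the first-order $\mathcal{O}(1/(1-\gamma))$ term, and the second piece the distribution-shift $\mathcal{O}(1/(1-\gamma)^2)$ term.

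\textbf{Shift term.} I would use the standard occupancy perturbation bound
\[
\|d_\pi-d_{\pi_\beta}\|_1\le \frac{2\gamma}{1-\gamma}\,\mathbb{E}_{s\sim d_{\pi_\beta}}\big[D_{\mathrm{TV}}(\pi(\cdot|s),\pi_\beta(\cdot|s))\big].
\]
It follows by writing $d_\pi-d_{\pi_\beta}=\gamma(1-\gamma)(I-\gamma P_\pi)^{-1}(P_\pi-P_{\pi_\beta})\,d_{\pi_\beta}/(1-\gamma)$ and using $\|(I-\gamma P_\pi)^{-1}\|_1\le 1/(1-\gamma)$. Combined with $|f_\pi(s)|\le \epsilon_\beta(\pi)+\alpha\kappa_\beta(\pi)$, this yields the contribution $\frac{2\gamma(\epsilon_\beta+\alpha\kappa_\beta)}{(1-\gamma)^2}\Delta_{\mathrm{TV}}$. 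I expect the remaining $\frac{4\alpha B}{(1-\gamma)^2}\Delta_{\mathrm{TV}}$ term to come from the shaped-reward part. The entropy/log-ratio pieces involving $\alpha\log\pi_\beta$ are bounded by $\alpha B$, and their occupancy mismatch is again controlled by $\|d_\pi-d_{\pi_\beta}\|_1$ (up to a factor $2$). If the accounting does not close, I would keep that term as slack.

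\textbf{First-order term.} I would rewrite
\[
\mathbb{E}_{d_{\pi_\beta}}[f_\pi]=F_\lambda(\pi)-(\lambda-\alpha)\,\mathbb{E}_{s\sim d_{\pi_\beta}}\mathbb{E}_{a\sim\pi}\big[\log\pi_\beta(a|s)\big].
\]
Next I would argue that $\pi^*_\lambda$ maximizes $F_\lambda$ statewise. Expanding the KL, $F_\lambda$ is exactly the per-state Lagrangian objective $\mathbb{E}_\pi[Q-\alpha\log\pi+\lambda\log\pi_\beta]$ once $Q$ is identified with the ($\pi_\beta$-shaped) soft critic, up to $a$-independent terms. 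Its maximizer is the Gibbs form in Eq.~\eqref{Eq-CCI-ClosedForm}. This gives $F_\lambda(\pi^*_\lambda)\ge F_\lambda(\pi_\beta)=(\lambda-\alpha)\mathbb{E}_{d_{\pi_\beta}}\mathbb{E}_{\pi_\beta}[\log\pi_\beta]$, using that $\mathbb{E}_{\pi_\beta}\tilde A^{\pi_\beta}=0$ and $\mathrm{KL}(\pi_\beta\|\pi_\beta)=0$. Hence
\[
\mathbb{E}_{d_{\pi_\beta}}[f_{\pi^*_\lambda}]\ge(\lambda-\alpha)\,\mathbb{E}_{d_{\pi_\beta}}\big[\mathbb{E}_{\pi_\beta}\log\pi_\beta-\mathbb{E}_{\pi^*_\lambda}\log\pi_\beta\big].
\]
Since $|\log\pi_\beta|\le B$, each per-state difference is at most $2B\,D_{\mathrm{TV}}$ in absolute value. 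This gives $-2B|\lambda-\alpha|\,\Delta_{\mathrm{TV}}(\pi^*_\lambda,\pi_\beta;d_{\pi_\beta})$, and dividing by $1-\gamma$ produces the first term.

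\textbf{Main obstacle.} The hardest step is the identification $\pi^*_\lambda=\arg\max F_\lambda$. The closed form uses a generic critic $Q$ and the dataset state distribution, whereas $F_\lambda$ uses $\tilde A^{\pi_\beta}$ under $d_{\pi_\beta}$. I would first take $Q=\tilde Q^{\pi_\beta}$ and $\mathcal{D}\sim d_{\pi_\beta}$, which makes the two objectives coincide statewise. A secondary difficulty is the exact constant bookkeeping that produces the $4\alpha B$ term.
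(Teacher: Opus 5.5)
Your skeleton matches the paper's: the soft PDL, splitting $\mathbb{E}_{d_\pi}[f_\pi]$ around $d_{\pi_\beta}$, the CPO occupancy bound with $|f_\pi|\le\epsilon_\beta+\alpha\kappa_\beta$, comparing $\pi^*_\lambda$ with $\pi_\beta$ in a surrogate, and $|\Delta g_s|\le 2B\,D_{\mathrm{TV}}$. The first-order step differs, though, and your guess about the $4\alpha B$ term is wrong. The paper takes $\pi^*_\lambda$ to maximize a surrogate built on the \emph{original}-reward advantage $A^{\pi_\beta}$, i.e.\ $\pi^*_\lambda\propto\exp(A^{\pi_\beta}/\alpha)\,\pi_\beta^{\lambda/\alpha}$. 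That gives a lower bound on $\mathbb{E}_{d_{\pi_\beta}}[\mathbb{E}_{\pi^*_\lambda}A^{\pi_\beta}-\alpha D_{\mathrm{KL}}]$. The paper then passes to the shaped advantage via $\tilde A^{\pi_\beta}=A^{\pi_\beta}+A^{\pi_\beta}_c$ with $c=-\alpha\log\pi_\beta$. Since $\|A^{\pi_\beta}_c\|_\infty\le 2\alpha B/(1-\gamma)$ and $\mathbb{E}_{\pi_\beta}A^{\pi_\beta}_c=0$, it follows that $|\mathbb{E}_{\pi}A^{\pi_\beta}_c|\le\frac{4\alpha B}{1-\gamma}D_{\mathrm{TV}}$. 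That conversion, inside the first-order piece and then divided by $1-\gamma$, is the only source of $\frac{4\alpha B}{(1-\gamma)^2}\Delta_{\mathrm{TV}}$. The shift term contributes nothing beyond the $\frac{2\gamma(\epsilon_\beta+\alpha\kappa_\beta)}{(1-\gamma)^2}\Delta_{\mathrm{TV}}$ you already derived, because $f_\pi$ already contains every $\alpha\log\pi_\beta$ piece.

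Your choice $Q=\tilde Q^{\pi_\beta}$ makes $\pi^*_\lambda$ the statewise maximizer of the $\tilde A$-version of $F_\lambda$ directly, so the assumption $\mathcal{D}\sim d_{\pi_\beta}$ is unnecessary. You skip the conversion and get a strictly sharper bound with no $4\alpha B$ term. The stated inequality follows a fortiori, so your ``keep it as slack'' fallback is sound.

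The trade-off is coverage. Your argument proves the bound for the Gibbs policy built on the shaped critic, which is consistent with how $F_\lambda$ is defined in the parametric-policy theorem. The paper's proof handles the Gibbs policy built on the original-reward critic, which is the reading the presence of the $4\alpha B$ term signals. For that $\pi^*_\lambda$ your identification step fails, and the $A^{\pi_\beta}_c$ conversion is exactly what you would need to add.

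Also state explicitly that $\tilde Q^{\pi_\beta}$ is the ordinary $Q$-function under $\tilde r$. The maximum-entropy soft $Q$ of $\pi_\beta$ under $r$ equals $\tilde Q^{\pi_\beta}+\alpha\log\pi_\beta$. Plugging that into the closed form would change the exponent $\lambda/\alpha$ to $(\lambda+\alpha)/\alpha$ and break the match with $F_\lambda$.
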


\begin{proof}
By the maximum-entropy performance difference lemma (Lemma~\ref{lem-soft-pdl-appendix}),
\begin{equation}
\mathcal{J}(\pi)-\mathcal{J}(\pi_\beta)
=
\frac{1}{1-\gamma}\,
\mathbb{E}_{s\sim d_\pi}\Big[
\mathbb{E}_{a\sim\pi(\cdot|s)}\big[\tilde A^{\pi_\beta}(s,a)\big]
-\alpha\,D_{\mathrm{KL}}\big(\pi(\cdot|s)\,\|\,\pi_\beta(\cdot|s)\big)
\Big].
\label{Eq-soft-pdl-used-appendix}
\end{equation}
Let
\[
f_\pi(s):=\mathbb{E}_{a\sim\pi(\cdot|s)}\big[\tilde A^{\pi_\beta}(s,a)\big]
-\alpha\,D_{\mathrm{KL}}\big(\pi(\cdot|s)\,\|\,\pi_\beta(\cdot|s)\big),
\]
so that the RHS of~\eqref{Eq-soft-pdl-used-appendix} equals $\frac{1}{1-\gamma}\mathbb{E}_{d_\pi}[f_\pi(s)]$.
Decomposing around $d_{\pi_\beta}$ yields
\begin{equation}
\mathbb{E}_{d_\pi}[f_\pi] = \mathbb{E}_{d_{\pi_\beta}}[f_\pi]
+\Big(\mathbb{E}_{d_\pi}[f_\pi]-\mathbb{E}_{d_{\pi_\beta}}[f_\pi]\Big).
\label{Eq-occupancy-decomp-me-appendix}
\end{equation}

By definition of $\epsilon_{\beta}(\pi)$ and $\kappa_{\beta}(\pi)$, for all $s$ we have
\[
|f_\pi(s)|
\le
\Big|\mathbb{E}_{a\sim\pi(\cdot|s)}\big[\tilde A^{\pi_\beta}(s,a)\big]\Big|
+\alpha\,D_{\mathrm{KL}}\big(\pi(\cdot|s) \| \pi_\beta(\cdot|s)\big)
\le
\epsilon_\beta(\pi)+\alpha \kappa_\beta(\pi),
\]
hence
\begin{equation}
\Big|\mathbb{E}_{d_\pi}[f_\pi]-\mathbb{E}_{d_{\pi_\beta}}[f_\pi]\Big|
\le \|d_\pi-d_{\pi_\beta}\|_1\Big(\epsilon_\beta(\pi)+\alpha \kappa_\beta(\pi)\Big).
\label{Eq-deviation-bound-me-appendix}
\end{equation}

Moreover, the divergence between discounted state visitation distributions (Lemma~3 in~\cite{CPO}) gives
\begin{equation}
\|d_\pi-d_{\pi_\beta}\|_1
\le \frac{2\gamma}{1-\gamma}\,
\Delta_{\mathrm{TV}}(\pi,\pi_\beta; d_{\pi_\beta}).
\label{Eq-occupancy-tv-bound-me-appendix}
\end{equation}
Combining~\eqref{Eq-deviation-bound-me-appendix} and~\eqref{Eq-occupancy-tv-bound-me-appendix} yields
\begin{equation}
\mathbb{E}_{d_\pi}[f_\pi]
\ge
\mathbb{E}_{d_{\pi_\beta}}[f_\pi]
-\frac{2\gamma}{1-\gamma}\Big(\epsilon_\beta(\pi)+\alpha \kappa_\beta(\pi)\Big)
\Delta_{\mathrm{TV}}(\pi,\pi_\beta; d_{\pi_\beta}).
\label{Eq-tv-pessimism-me-appendix}
\end{equation}
Substituting~\eqref{Eq-tv-pessimism-me-appendix} into~\eqref{Eq-soft-pdl-used-appendix} and setting $\pi=\pi^*_\lambda$ yields
\begin{equation}
\mathcal{J}(\pi^*_\lambda)-\mathcal{J}(\pi_\beta)
\ge
\frac{1}{1-\gamma} \mathbb{E}_{s\sim d_{\pi_\beta}} f_{\pi^*_\lambda}(s)
-\frac{2\gamma}{(1 - \gamma)^2} \Big( \epsilon_\beta(\pi^*_\lambda) + \alpha \kappa_\beta (\pi^*_\lambda)\Big)
\Delta_{\mathrm{TV}}(\pi^*_\lambda,\pi_\beta; d_{\pi_\beta}).
\label{Eq-step1-bound-me-appendix}
\end{equation}

For the first term,
\[
\mathbb{E}_{s\sim d_{\pi_\beta}} f_{\pi^*_\lambda}(s)
=
\mathbb{E}_{s\sim d_{\pi_\beta}} \Big[
\mathbb{E}_{a \sim \pi^*_\lambda(\cdot|s)} \tilde A^{\pi_\beta}(s,a)
-\alpha\,D_{\mathrm{KL}} \big( \pi^*_\lambda(\cdot|s) \| \pi_\beta(\cdot|s)\big)
\Big].
\]

Next, define the auxiliary objective
\[
F(\pi):=\mathbb{E}_{s\sim d_{\pi_\beta}}
\Big[
\mathbb{E}_{a\sim\pi(\cdot|s)} A^{\pi_\beta}(s,a)
-\alpha\,D_{\mathrm{KL}}\big(\pi(\cdot|s) \| \pi_\beta(\cdot|s)\big)
+ (\lambda-\alpha)\,\mathbb{E}_{a\sim\pi(\cdot|s)} \log\pi_\beta(a|s)
\Big],
\]
where $A^{\pi_\beta}(s,a)$ denotes the advantage under the original reward $r(s,a)$.
Since $\pi^*_\lambda$ maximizes $F$ in the CCI optimization problem, we have $F(\pi^*_\lambda)\ge F(\pi_\beta)$.
For $\pi_\beta$, note that
$\mathbb{E}_{a\sim\pi_\beta(\cdot|s)}[A^{\pi_\beta}(s,a)]=0$ and
$D_{\mathrm{KL}}(\pi_\beta(\cdot|s)\|\pi_\beta(\cdot|s))=0$, hence
\[
F(\pi_\beta)=(\lambda-\alpha)\,g_{d_{\pi_\beta}}(\pi_\beta),
\qquad
g_{d_{\pi_\beta}}(\pi):=\mathbb{E}_{s\sim d_{\pi_\beta},\,a\sim\pi(\cdot|s)}[\log\pi_\beta(a|s)].
\]
Rearranging gives
\begin{equation}
\mathbb{E}_{s\sim d_{\pi_\beta}}
\Big[
\mathbb{E}_{a\sim\pi^*_\lambda(\cdot|s)} A^{\pi_\beta}(s,a)
-\alpha\,D_{\mathrm{KL}}\big(\pi^*_\lambda(\cdot|s)\,\|\,\pi_\beta(\cdot|s)\big)
\Big]
\ge
-(\lambda-\alpha)\big(g_{d_{\pi_\beta}}(\pi^*_\lambda)-g_{d_{\pi_\beta}}(\pi_\beta)\big).
\label{Eq-surrogate-lb-step-cci-appendix}
\end{equation}

We now relate $\tilde A^{\pi_\beta}$ to $A^{\pi_\beta}$.
Let $c(s,a):=-\alpha\log\pi_\beta(a|s)$ and $\tilde r(s,a)=r(s,a)+c(s,a)$.
By linearity of value functions in reward,
\[
\tilde A^{\pi_\beta}(s,a)=A^{\pi_\beta}(s,a)+A^{\pi_\beta}_{c}(s,a),
\]
where $A^{\pi_\beta}_{c}$ is the advantage under the reward $c$ alone.
Since $|\log\pi_\beta(a|s)|\le B$, we have $|c(s,a)|\le \alpha B$, which implies
$\|A^{\pi_\beta}_{c}\|_\infty \le \frac{2\alpha B}{1-\gamma}$.
Moreover, $\mathbb{E}_{a\sim\pi_\beta(\cdot|s)}[A^{\pi_\beta}_{c}(s,a)]=0$ for all $s$, hence for any $\pi$ and $s$,
\begin{align}
\Big|\mathbb{E}_{a\sim\pi(\cdot|s)}[A^{\pi_\beta}_{c}(s,a)]\Big|
&=
\Big|\int \big(\pi(a|s)-\pi_\beta(a|s)\big)\,A^{\pi_\beta}_{c}(s,a)\,da\Big| \nonumber\\
&\le 2\|A^{\pi_\beta}_{c}\|_\infty\,D_{\mathrm{TV}}\big(\pi(\cdot|s),\pi_\beta(\cdot|s)\big) \nonumber\\
&\le \frac{4\alpha B}{1-\gamma}\,D_{\mathrm{TV}}\big(\pi(\cdot|s),\pi_\beta(\cdot|s)\big).
\label{Eq-Ac-tv-bound-state-appendix}
\end{align}
Therefore, for $\pi=\pi^*_\lambda$,
\begin{align}
\mathbb{E}_{s\sim d_{\pi_\beta}}
\mathbb{E}_{a\sim\pi^*_\lambda(\cdot|s)}\tilde A^{\pi_\beta}(s,a)
&\ge
\mathbb{E}_{s\sim d_{\pi_\beta}}
\mathbb{E}_{a\sim\pi^*_\lambda(\cdot|s)} A^{\pi_\beta}(s,a)
-\frac{4\alpha B}{1-\gamma}\,
\Delta_{\mathrm{TV}}(\pi^*_\lambda,\pi_\beta; d_{\pi_\beta}).
\label{Eq-tildeA-lower-by-A-appendix}
\end{align}

Combining~\eqref{Eq-surrogate-lb-step-cci-appendix} and~\eqref{Eq-tildeA-lower-by-A-appendix} yields
\begin{align}
\mathbb{E}_{s\sim d_{\pi_\beta}} f_{\pi^*_\lambda}(s)
&\ge
-(\lambda-\alpha)\big(g_{d_{\pi_\beta}}(\pi^*_\lambda)-g_{d_{\pi_\beta}}(\pi_\beta)\big)
-\frac{4\alpha B}{1-\gamma}\,
\Delta_{\mathrm{TV}}(\pi^*_\lambda,\pi_\beta; d_{\pi_\beta}).
\label{Eq-first-term-lb-intermediate-appendix}
\end{align}

Finally, for each state $s$, define
\[
\Delta g_s
:=\mathbb{E}_{a\sim\pi^*_\lambda(\cdot|s)}[\log \pi_\beta(a|s)]
-\mathbb{E}_{a\sim\pi_\beta(\cdot|s)}[\log \pi_\beta(a|s)].
\]
Then
\begin{equation}
|\Delta g_s|
\le
\int \big|\pi^*_\lambda(a|s)-\pi_\beta(a|s)\big|\cdot |\log \pi_\beta(a|s)|\,da
\le
2B\,D_{\mathrm{TV}}\big(\pi^*_\lambda(\cdot|s),\pi_\beta(\cdot|s)\big),
\label{Eq-deltag-state-tv-appendix}
\end{equation}
and averaging over $s\sim d_{\pi_\beta}$ gives
\begin{equation}
\big|g_{d_{\pi_\beta}}(\pi^*_\lambda)-g_{d_{\pi_\beta}}(\pi_\beta)\big|
\le
2B\,\Delta_{\mathrm{TV}}(\pi^*_\lambda,\pi_\beta; d_{\pi_\beta}).
\label{Eq-deltag-tv-avg-appendix}
\end{equation}
Substituting~\eqref{Eq-deltag-tv-avg-appendix} into~\eqref{Eq-first-term-lb-intermediate-appendix} yields
\begin{equation}
\mathbb{E}_{s\sim d_{\pi_\beta}} f_{\pi^*_\lambda}(s)
\ge
-2B|\lambda-\alpha|\,\Delta_{\mathrm{TV}}(\pi^*_\lambda,\pi_\beta; d_{\pi_\beta})
-\frac{4\alpha B}{1-\gamma}\,
\Delta_{\mathrm{TV}}(\pi^*_\lambda,\pi_\beta; d_{\pi_\beta}).
\label{Eq-first-term-lb-final-appendix}
\end{equation}

Plugging~\eqref{Eq-first-term-lb-final-appendix} into~\eqref{Eq-step1-bound-me-appendix} completes the proof and yields Eq.~\eqref{Eq-Opt-Policy-LB-appendix}.
\end{proof}

\begin{theorem}[Performance Lower Bound for a Parametric Policy]
\label{thm-para-policy-lower-bound-appendix}
Define the suboptimality gap of a parametric policy $\pi_\theta$ by
\begin{equation}
\delta^{\mathrm{sub}}_\theta(\lambda)
:=
F_\lambda(\pi^*_\lambda)-F_\lambda(\pi_\theta)
\;\ge\; 0,
\label{Eq-subopt-gap-def-appendix}
\end{equation}
where
$F_\lambda(\pi)
=
\mathbb{E}_{s\sim d_{\pi_\beta}}
\Big[
\mathbb{E}_{a\sim\pi(\cdot|s)} \tilde A^{\pi_\beta}(s,a)
-\alpha\,D_{\mathrm{KL}}\!\big( \pi(\cdot|s) \,\|\, \pi_\beta(\cdot|s)\big)
+ (\lambda - \alpha)\,\mathbb{E}_{a\sim\pi(\cdot|s)} \log \pi_\beta(a|s)
\Big].$

Then $\pi_\theta$ satisfies
\begin{gather}
\begin{aligned}
\mathcal{J}(\pi_\theta) \ge\;
\mathcal{J}(\pi_\beta)
-\frac{2B|\lambda - \alpha|}{1-\gamma}\,
\Delta_{\mathrm{TV}}\!\big(\pi_\theta,\pi_\beta; d_{\pi_\beta}\big)
-\frac{\delta^{\mathrm{sub}}_\theta(\lambda)}{1-\gamma} - \frac{4 \alpha B + 2 \gamma \Big(\epsilon_\beta (\pi_\theta) + \alpha \kappa_{\beta} (\pi_\theta)\Big)} {(1- \gamma)^2}\,
\Delta_{\mathrm{TV}} \big(\pi_\theta,\pi_\beta; d_{\pi_\beta}\big).
\end{aligned}
\label{Eq-Para-Policy-LB-appendix}
\end{gather}
\end{theorem}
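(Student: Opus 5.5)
The plan reuses the proof of Theorem~\ref{thm-optimal-policy-lower-bound-appendix} with $\pi=\pi_\theta$ in place of $\pi^*_\lambda$. The only new ingredient is the step that previously used optimality of $\pi^*_\lambda$: we replace it with the suboptimality gap $\delta^{\mathrm{sub}}_\theta(\lambda)$.

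\textbf{Step 1: reduce to the fixed distribution $d_{\pi_\beta}$.} Apply Lemma~\ref{lem-soft-pdl-appendix} with $\pi=\pi_\theta$, and let $f_{\pi_\theta}(s)$ be as before. The occupancy decomposition \eqref{Eq-occupancy-decomp-me-appendix}, the uniform bound $|f_{\pi_\theta}|\le \epsilon_\beta(\pi_\theta)+\alpha\kappa_\beta(\pi_\theta)$, and the state-distribution bound \eqref{Eq-occupancy-tv-bound-me-appendix} depend on no property of $\pi$ beyond its definitions. They therefore give, verbatim,
\[
\mathcal{J}(\pi_\theta)-\mathcal{J}(\pi_\beta)\ge \tfrac{1}{1-\gamma}\mathbb{E}_{d_{\pi_\beta}}f_{\pi_\theta}-\tfrac{2\gamma}{(1-\gamma)^2}\big(\epsilon_\beta(\pi_\theta)+\alpha\kappa_\beta(\pi_\theta)\big)\Delta_{\mathrm{TV}}(\pi_\theta,\pi_\beta;d_{\pi_\beta}).
\]

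\textbf{Step 2: lower-bound $\mathbb{E}_{d_{\pi_\beta}}f_{\pi_\theta}$ (the main obstacle).} Write $F_\lambda(\pi)=\mathbb{E}_{d_{\pi_\beta}}f_\pi+(\lambda-\alpha)g_{d_{\pi_\beta}}(\pi)$. Then
\[
\mathbb{E}_{d_{\pi_\beta}}f_{\pi_\theta}=F_\lambda(\pi^*_\lambda)-\delta^{\mathrm{sub}}_\theta(\lambda)-(\lambda-\alpha)g_{d_{\pi_\beta}}(\pi_\theta).
\]
Next use $F_\lambda(\pi^*_\lambda)\ge F_\lambda(\pi_\beta)$, which is the optimality of $\pi^*_\lambda$ taken from the previous proof. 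One caveat: here $F_\lambda$ is defined with $\tilde A^{\pi_\beta}$, whereas the previous proof used $A^{\pi_\beta}$. Both have zero mean under $\pi_\beta$, so in either case $F_\lambda(\pi_\beta)=(\lambda-\alpha)g_{d_{\pi_\beta}}(\pi_\beta)$, and I take the inequality $F_\lambda(\pi^*_\lambda)\ge F_\lambda(\pi_\beta)$ as inherited.

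This yields
\[
\mathbb{E}_{d_{\pi_\beta}}f_{\pi_\theta}\ge -\delta^{\mathrm{sub}}_\theta(\lambda)-(\lambda-\alpha)\big(g_{d_{\pi_\beta}}(\pi_\theta)-g_{d_{\pi_\beta}}(\pi_\beta)\big).
\]
Since $F_\lambda$ already carries $\tilde A$, I do not strictly need the conversion \eqref{Eq-tildeA-lower-by-A-appendix}. To reproduce the stated constant, I would still subtract the nonnegative slack $\frac{4\alpha B}{1-\gamma}\Delta_{\mathrm{TV}}(\pi_\theta,\pi_\beta;d_{\pi_\beta})$, which only weakens the bound. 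Alternatively, if the optimality of $\pi^*_\lambda$ is phrased with $A^{\pi_\beta}$, one routes through the bound \eqref{Eq-Ac-tv-bound-state-appendix} applied to $\pi_\theta$, exactly as before.

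\textbf{Step 3: bound the behavior-likelihood shift and assemble.} Apply \eqref{Eq-deltag-state-tv-appendix} to $\pi_\theta$:
\[
\big|g_{d_{\pi_\beta}}(\pi_\theta)-g_{d_{\pi_\beta}}(\pi_\beta)\big|\le 2B\,\Delta_{\mathrm{TV}}(\pi_\theta,\pi_\beta;d_{\pi_\beta}),
\]
so the $(\lambda-\alpha)$ term is at least $-2B|\lambda-\alpha|\Delta_{\mathrm{TV}}$. Substituting into Step 1 and dividing the first-order terms by $1-\gamma$ gives three contributions:
\begin{itemize}
\item $-\frac{2B|\lambda-\alpha|}{1-\gamma}\Delta_{\mathrm{TV}}$,
\item $-\frac{\delta^{\mathrm{sub}}_\theta}{1-\gamma}$,
\item $-\frac{4\alpha B}{(1-\gamma)^2}\Delta_{\mathrm{TV}}$.
\end{itemize}
Together with the second-order term from Step 1, this is exactly \eqref{Eq-Para-Policy-LB-appendix}.
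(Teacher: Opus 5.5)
Your proposal follows the paper's proof essentially step for step. It uses the same occupancy-shift reduction to $d_{\pi_\beta}$, the same inequality $F_\lambda(\pi^*_\lambda)\ge F_\lambda(\pi_\beta)$ (which the paper likewise just asserts from ``$\pi^*_\lambda$ maximizes $F_\lambda$''), and the same $2B\,\Delta_{\mathrm{TV}}$ bound on the behavior-likelihood shift. That inequality is not implied by $\delta^{\mathrm{sub}}_\theta\ge 0$, and your zero-mean remark only fixes the value $F_\lambda(\pi_\beta)$; it does not transfer optimality of $\pi^*_\lambda$ to the $\tilde A$-based objective, so this is an assumption you share with the paper rather than one you have established.

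The only difference is cosmetic. The paper rearranges into an $A^{\pi_\beta}$-based inequality and then pays $\frac{4\alpha B}{1-\gamma}\Delta_{\mathrm{TV}}$ through the $\tilde A\to A$ conversion. You correctly observe that, with $F_\lambda$ defined via $\tilde A^{\pi_\beta}$ as in the statement, that term is pure slack. Your route therefore gives a slightly tighter bound that implies the stated one.
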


\begin{proof}
The derivation follows the same decomposition as in the proof of Theorem~\ref{thm-optimal-policy-lower-bound-appendix}.
Applying Eq.~\eqref{Eq-soft-pdl-used-appendix} with $\pi=\pi_\theta$ and using the same occupancy-shift bound yields
\begin{equation}
\mathcal{J}(\pi_\theta)-\mathcal{J}(\pi_\beta)
\ge
\frac{1}{1-\gamma}\,
\mathbb{E}_{s\sim d_{\pi_\beta}} f_{\pi_\theta}(s)
-\frac{2\gamma}{(1 - \gamma)^2} \Big( \epsilon_\beta(\pi_\theta) + \alpha \kappa_\beta (\pi_\theta)\Big)
\Delta_{\mathrm{TV}}(\pi_\theta,\pi_\beta; d_{\pi_\beta}),
\label{Eq-step1-bound-param-appendix}
\end{equation}
where
\[
f_\pi(s)
=
\mathbb{E}_{a\sim\pi(\cdot|s)}\big[\tilde A^{\pi_\beta}(s,a)\big]
-\alpha\,D_{\mathrm{KL}}\!\big(\pi(\cdot|s)\,\|\,\pi_\beta(\cdot|s)\big).
\]

We first lower bound the shaped-advantage term by the original advantage term, as in Eq.~\eqref{Eq-tildeA-lower-by-A-appendix}:
\begin{equation}
\mathbb{E}_{s\sim d_{\pi_\beta},\,a\sim\pi(\cdot|s)}[\tilde A^{\pi_\beta}(s,a)]
\ge
\mathbb{E}_{s\sim d_{\pi_\beta},\,a\sim\pi(\cdot|s)}[A^{\pi_\beta}(s,a)]
-\frac{4\alpha B}{1-\gamma}\,\Delta_{\mathrm{TV}}(\pi,\pi_\beta; d_{\pi_\beta}).
\label{Eq-tildeA-lb-by-A-param-appendix}
\end{equation}

Next, we convert the suboptimality gap in Eq.~\eqref{Eq-subopt-gap-def-appendix} into a baseline-anchored inequality.
Since $\pi^*_\lambda$ maximizes $F_\lambda$, we have $F_\lambda(\pi^*_\lambda)\ge F_\lambda(\pi_\beta)$.
Combining this with Eq.~\eqref{Eq-subopt-gap-def-appendix} gives
\begin{equation}
F_\lambda(\pi_\theta)
=
F_\lambda(\pi^*_\lambda)-\delta^{\mathrm{sub}}_\theta(\lambda)
\ge
F_\lambda(\pi_\beta)-\delta^{\mathrm{sub}}_\theta(\lambda).
\label{Eq-baseline-anchor-from-subopt-appendix}
\end{equation}
For $\pi_\beta$, $\mathbb{E}_{a\sim\pi_\beta}[A^{\pi_\beta}(s,a)]=0$ and
$D_{\mathrm{KL}}(\pi_\beta\|\pi_\beta)=0$, hence
$F_\lambda(\pi_\beta)=(\lambda-\alpha)\,g_{d_{\pi_\beta}}(\pi_\beta)$ where
\[
g_{d_{\pi_\beta}}(\pi):=\mathbb{E}_{s\sim d_{\pi_\beta},\,a\sim\pi(\cdot|s)}[\log\pi_\beta(a|s)].
\]
Rearranging Eq.~\eqref{Eq-baseline-anchor-from-subopt-appendix} yields
\begin{equation}
\mathbb{E}_{s\sim d_{\pi_\beta},\,a\sim\pi_\theta(\cdot|s)}\!\big[A^{\pi_\beta}(s,a)\big]
-\alpha\,\mathbb{E}_{s\sim d_{\pi_\beta}}\!\Big[D_{\mathrm{KL}}\!\big(\pi_\theta(\cdot|s)\,\|\,\pi_\beta(\cdot|s)\big)\Big]
\ge
-(\lambda-\alpha)\Big(g_{d_{\pi_\beta}}(\pi_\theta)-g_{d_{\pi_\beta}}(\pi_\beta)\Big)
-\delta^{\mathrm{sub}}_\theta(\lambda).
\label{Eq-surrogate-lb-param-appendix}
\end{equation}

Finally, we bound the density-shaping difference by total variation.
For each state $s$, define
\[
\Delta g_s
:=
\mathbb{E}_{a\sim\pi_\theta(\cdot|s)}[\log \pi_\beta(a|s)]
-\mathbb{E}_{a\sim\pi_\beta(\cdot|s)}[\log \pi_\beta(a|s)].
\]
Then
\begin{equation}
|\Delta g_s|
\le 2B\,D_{\mathrm{TV}}\big(\pi_\theta(\cdot|s), \pi_\beta(\cdot|s)\big),
\label{Eq-deltag-state-tv-param-appendix}
\end{equation}
and averaging over $s\sim d_{\pi_\beta}$ yields
\begin{equation}
\big|g_{d_{\pi_\beta}}(\pi_\theta)-g_{d_{\pi_\beta}}(\pi_\beta)\big|
\le 2B\,\Delta_{\mathrm{TV}}(\pi_\theta,\pi_\beta; d_{\pi_\beta}).
\label{Eq-deltag-tv-avg-param-appendix}
\end{equation}

Combining Eq.~\eqref{Eq-tildeA-lb-by-A-param-appendix}, Eq.~\eqref{Eq-surrogate-lb-param-appendix}, and Eq.~\eqref{Eq-deltag-tv-avg-param-appendix} yields
\begin{align}
\mathbb{E}_{s\sim d_{\pi_\beta}} f_{\pi_\theta}(s)
&=
\mathbb{E}_{s\sim d_{\pi_\beta}}
\Big[
\mathbb{E}_{a\sim\pi_\theta(\cdot|s)}\tilde A^{\pi_\beta}(s,a)
-\alpha\,D_{\mathrm{KL}}\!\big(\pi_\theta(\cdot|s)\,\|\,\pi_\beta(\cdot|s)\big)
\Big] \nonumber\\
&\ge
-2B|\lambda-\alpha|\,\Delta_{\mathrm{TV}}(\pi_\theta,\pi_\beta; d_{\pi_\beta})
-\frac{4\alpha B}{1-\gamma}\,\Delta_{\mathrm{TV}}(\pi_\theta,\pi_\beta; d_{\pi_\beta})
-\delta^{\mathrm{sub}}_\theta(\lambda).
\label{Eq-first-term-lb-param-appendix}
\end{align}

Plugging Eq.~\eqref{Eq-first-term-lb-param-appendix} into Eq.~\eqref{Eq-step1-bound-param-appendix} completes the proof and yields Eq.~\eqref{Eq-Para-Policy-LB-appendix}.
\end{proof}

\section{Experimental Details}
\label{Appendix-Experiment}

\subsection{Implementation Details for D4RL Experiments}
The vast majority of hyperparameters used in our D4RL experiments are summarized in Tab.~\ref{Tab-D4RL-Hyperparameters}.

\begin{table}[!t]
\caption{D4RL hyperparameters, where $s_{\mathrm{dim}}$ and $a_{\mathrm{dim}}$ are the state and action dimensions, respectively.}
\label{Tab-D4RL-Hyperparameters}
\centering
\small
\setlength{\tabcolsep}{6pt}
\renewcommand{\arraystretch}{1.15}
\begin{tabular}{l|l|l}
\toprule
Hyperparameter & Value & Task \\
\midrule
Actor architecture              & $s_{\mathrm{dim}}$ -- 256 -- 256 -- $2a_{\mathrm{dim}}$ & All \\
$Q$-network                     & $(s_{\mathrm{dim}}{+}a_{\mathrm{dim}})$ -- 256 -- 256 -- 1 & All \\
Value network                   & $s_{\mathrm{dim}}$ -- 256 -- 256 -- 1 & All \\
Behavior network                & $s_{\mathrm{dim}}$ -- 512 -- 512 -- $2a_{\mathrm{dim}}$ & All \\
Activation                      & ReLU & All \\
Learning rate (actor \& critic) & $5e-4 \rightarrow 1e-4$ (CosineAnnealingLR) & All \\
Learning rate (behavior policy) & $1e-4$ & All \\
Learning rate ($\lambda$)       & $1e-5$ & All \\
Target $Q$ soft update rate $\tau$ & $5e-3$ & All \\
\multirow{2}{*}{Evaluation epochs} & 10 & Gym-MuJoCo \& Kitchen \\
                                 & 50 & AntMaze \\
Evaluation frequency             & $5e3$ & All \\
Batch size                       & 256 & All \\
Discount factor $\gamma$         & 0.99 & All \\
Temperature $\alpha$             & 0.1 & All \\
Initial $\lambda$                & 0.1 ($\lambda=\alpha$) & All \\
\multirow{4}{*}{$\epsilon$ in Eq.~\eqref{Eq-CCI-Opt-Pro}} 
& $-5.0$ & Hopper \\
& $-0.2$ & HalfCheetah \& AntMaze \\
& $-0.7$ & Walker2d \\
& $-1.0$ & Kitchen \\
\bottomrule
\end{tabular}
\end{table}

The dataset abbreviations used in Tab.~\ref{Tab-D4RL} and their corresponding full names are listed in Tab.~\ref{Tab-D4RL-Abbreviation}.

\begin{table}[!t]
\caption{Dataset abbreviations for D4RL.}
\label{Tab-D4RL-Abbreviation}
\centering
\small
\setlength{\tabcolsep}{6pt}
\renewcommand{\arraystretch}{1.15}
\begin{tabular}{l|l}
\toprule
Dataset abbreviation & Dataset full name \\
\midrule
halfcheetah-med      & halfcheetah-medium-v2 \\
hopper-med           & hopper-medium-v2 \\
walker2d-med         & walker2d-medium-v2 \\
halfcheetah-med-rep  & halfcheetah-medium-replay-v2 \\
hopper-med-rep       & hopper-medium-replay-v2 \\
walker2d-med-rep     & walker2d-medium-replay-v2 \\
halfcheetah-med-exp  & halfcheetah-medium-expert-v2 \\
hopper-med-exp       & hopper-medium-expert-v2 \\
walker2d-med-exp     & walker2d-medium-expert-v2 \\
\midrule
antmaze-u            & antmaze-umaze-v2 \\
antmaze-u-d          & antmaze-umaze-diverse-v2 \\
antmaze-m-p          & antmaze-medium-play-v2 \\
antmaze-m-d          & antmaze-medium-diverse-v2 \\
antmaze-l-p          & antmaze-large-play-v2 \\
antmaze-l-d          & antmaze-large-diverse-v2 \\
\midrule
kitchen-m            & kitchen-mixed-v0 \\
kitchen-p            & kitchen-partial-v0 \\
\bottomrule
\end{tabular}
\end{table}

Training curves for all D4RL tasks are provided in Fig.~\ref{Fig-D4RL-Curves-Appendix}.

\subsection{Implementation Details for NeoRL2 Experiments}
The vast majority of hyperparameters used in our NeoRL2 experiments are summarized in Tab.~\ref{Tab-NeoRL2-Hyperparameters}.

\begin{table}[!t]
\caption{NeoRL2 hyperparameters. Architectures are denoted by MLP layer sizes, where $s_{\mathrm{dim}}$ and $a_{\mathrm{dim}}$ are the state and action dimensions, respectively.}
\label{Tab-NeoRL2-Hyperparameters}
\centering
\small
\setlength{\tabcolsep}{6pt}
\renewcommand{\arraystretch}{1.15}
\begin{tabular}{l|l|l}
\toprule
Hyperparameter & Value & Task \\
\midrule
Actor architecture              & $s_{\mathrm{dim}}$ -- 256 -- 256 -- $2a_{\mathrm{dim}}$ & All \\
$Q$-network                     & $(s_{\mathrm{dim}}{+}a_{\mathrm{dim}})$ -- 256 -- 256 -- 1 & All \\
Value network                   & $s_{\mathrm{dim}}$ -- 256 -- 256 -- 1 & All \\
Behavior network                & $s_{\mathrm{dim}}$ -- 512 -- 512 -- $2a_{\mathrm{dim}}$ & All \\
Activation                      & ReLU & All \\
Learning rate (actor \& critic) & $5e-4 \rightarrow 1e-4$ (CosineAnnealingLR) & All \\
Learning rate (behavior policy) & $1e-4$ & All \\
Learning rate ($\lambda$)       & $1e-5$ & All \\
Target $Q$ soft update rate $\tau$ & $5e-3$ & All \\
Evaluation epochs               & 10 & All \\
Evaluation frequency            & $5e3$ & All \\
Batch size                      & 256 & All \\
Discount factor $\gamma$        & 0.99 & All \\
Temperature $\alpha$            & 0.5 & All \\
Initial $\lambda$               & 0.5 ($\lambda=\alpha$) & All \\
\multirow{4}{*}{$\epsilon$ in Eq.~\eqref{Eq-CCI-Opt-Pro}} 
& $-5.0$  & Simglucose \& Pipeline \\
& $-10.0$ & SafetyHalfCheetah \& RocketRecovery \& Fusion \\
& $-2.0$  & RandomFrictionHopper \\
& $-0.5$  & DMSD \\
\bottomrule
\end{tabular}
\end{table}

Training curves for all NeoRL2 tasks are provided in Fig.~\ref{Fig-NeoRL2-Curves-Appendix}.

\subsection{Additional Details for Special-Case Comparisons and $\lambda$ Dynamics}
In Section~\ref{Subsec-Threecase}, we keep all hyperparameters consistent with Tab.~\ref{Tab-D4RL-Hyperparameters} and Tab.~\ref{Tab-NeoRL2-Hyperparameters}. We additionally specify the key settings in the main text:
\emph{``For Gym-MuJoCo tasks, we set $\alpha=0.1$. For NeoRL2 tasks, we set $\alpha=0.5$. We instantiate three special cases by choosing: (i) practical wBC: $\lambda=100$ to approximate the wBC regime, (ii) AWAC: $\lambda=\alpha$, and (iii) InAC: $\lambda=0$.''}

\subsection{Additional Details for Behavior Density Estimator Ablation}
\label{Appendix-Behavior-Model}
In the experiments of Subsection~\ref{Subsec-Behavior}, we run inference with the trained Gaussian and CVAE behavior models on the full datasets of \texttt{walker2d-medium-v2} and \texttt{walker2d-medium-replay-v2}, producing the distribution curves shown in Fig.~\ref{Fig-Gaussian_CVAE_log}. In addition to $\xi=0.3$, we also evaluate $\xi \in \{0.1, 0.5, 0.8\}$, with the corresponding curves reported in Fig.~\ref{Fig-Gaussian_CVAE_log-xi-sweep}. The conclusions remain unchanged: compared with the Gaussian estimator, the CVAE likelihood on perturbed OOD actions is not consistently suppressed.

\begin{figure}[!t]
\centering
\begin{minipage}{0.49\linewidth}
  \centering
  \includegraphics[width=\linewidth]{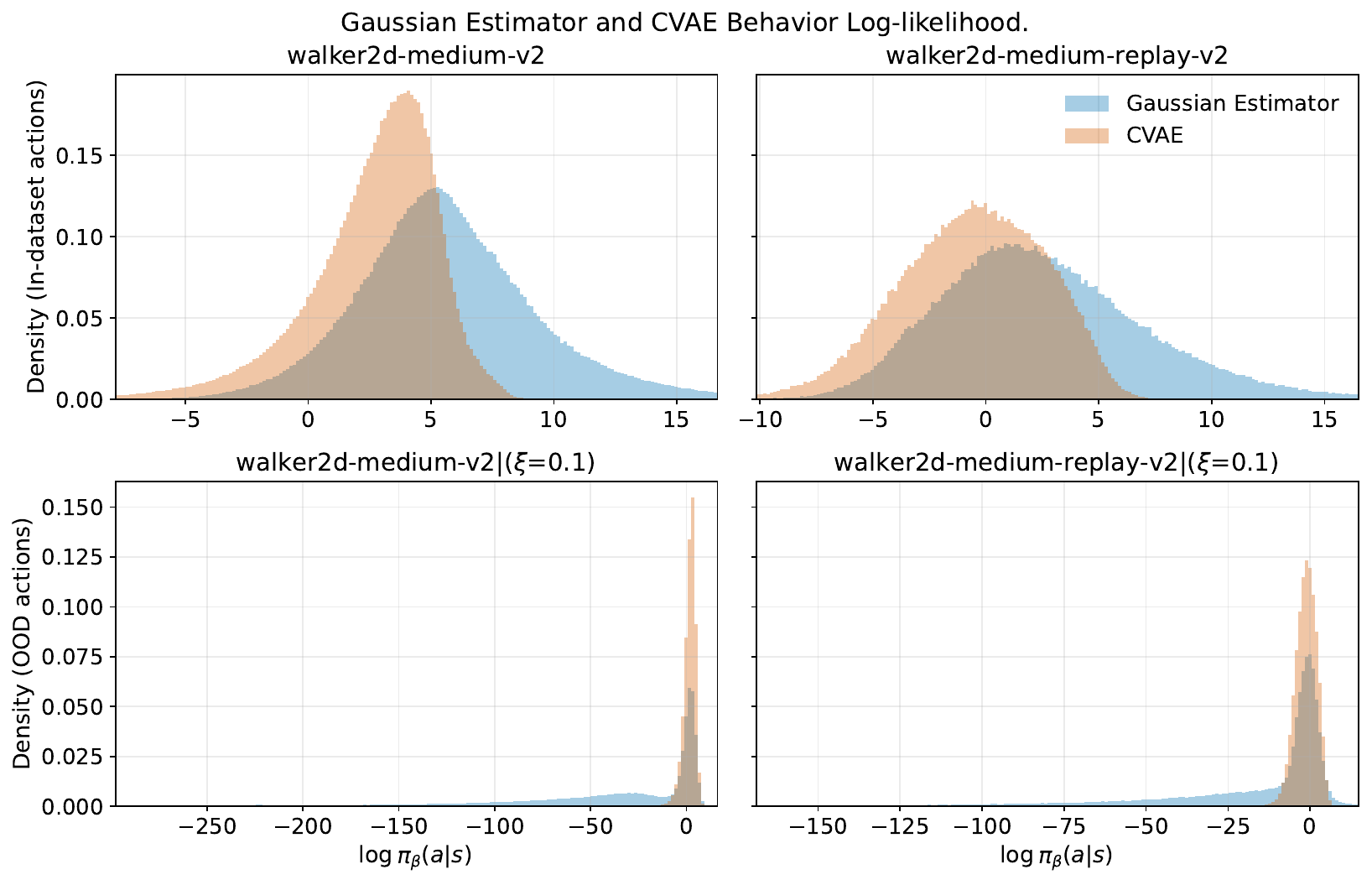}
\end{minipage}\hfill
\begin{minipage}{0.49\linewidth}
  \centering
  \includegraphics[width=\linewidth]{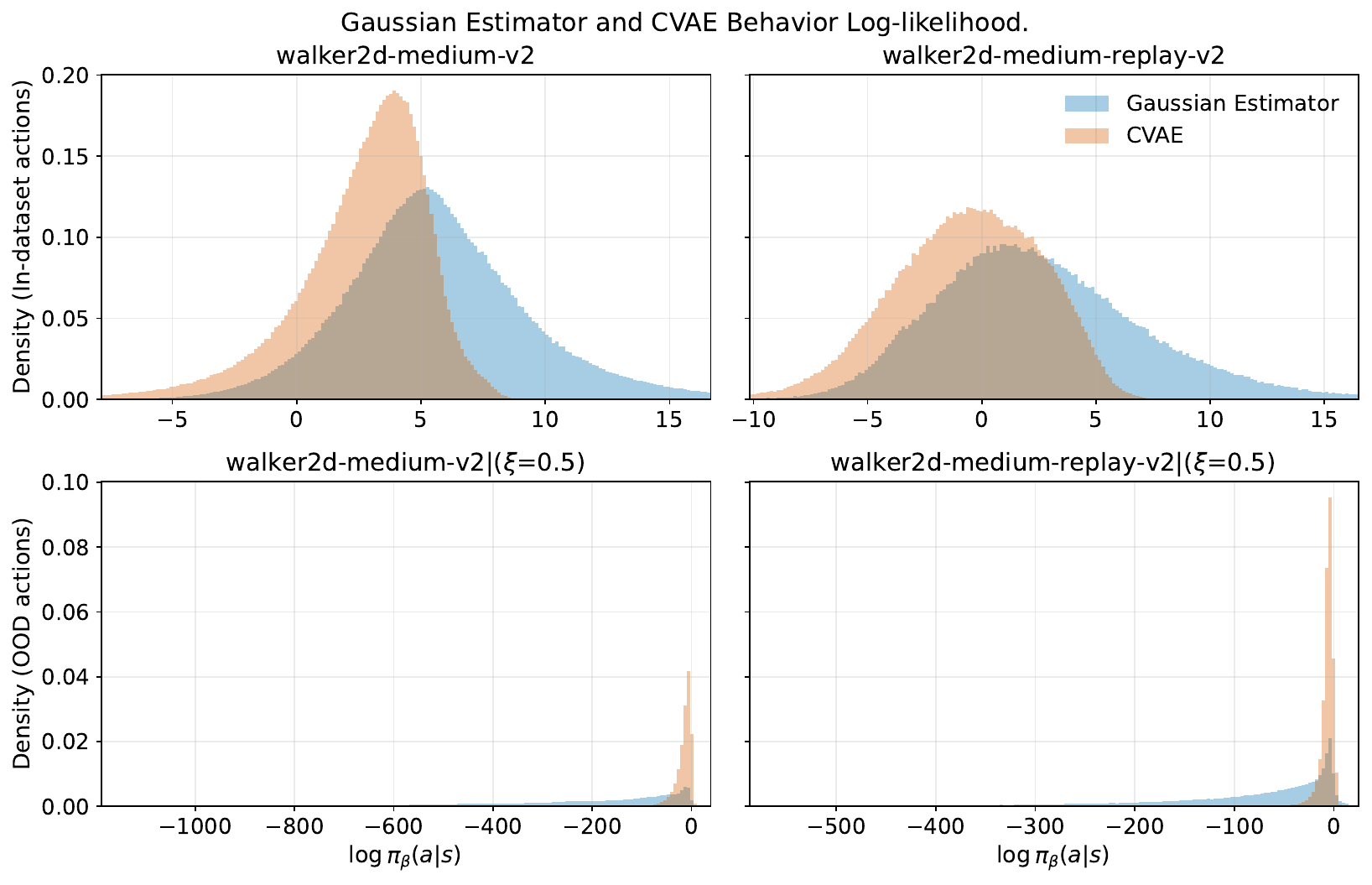}
\end{minipage}

\vspace{2mm}

\begin{minipage}{0.49\linewidth}
  \centering
  \includegraphics[width=\linewidth]{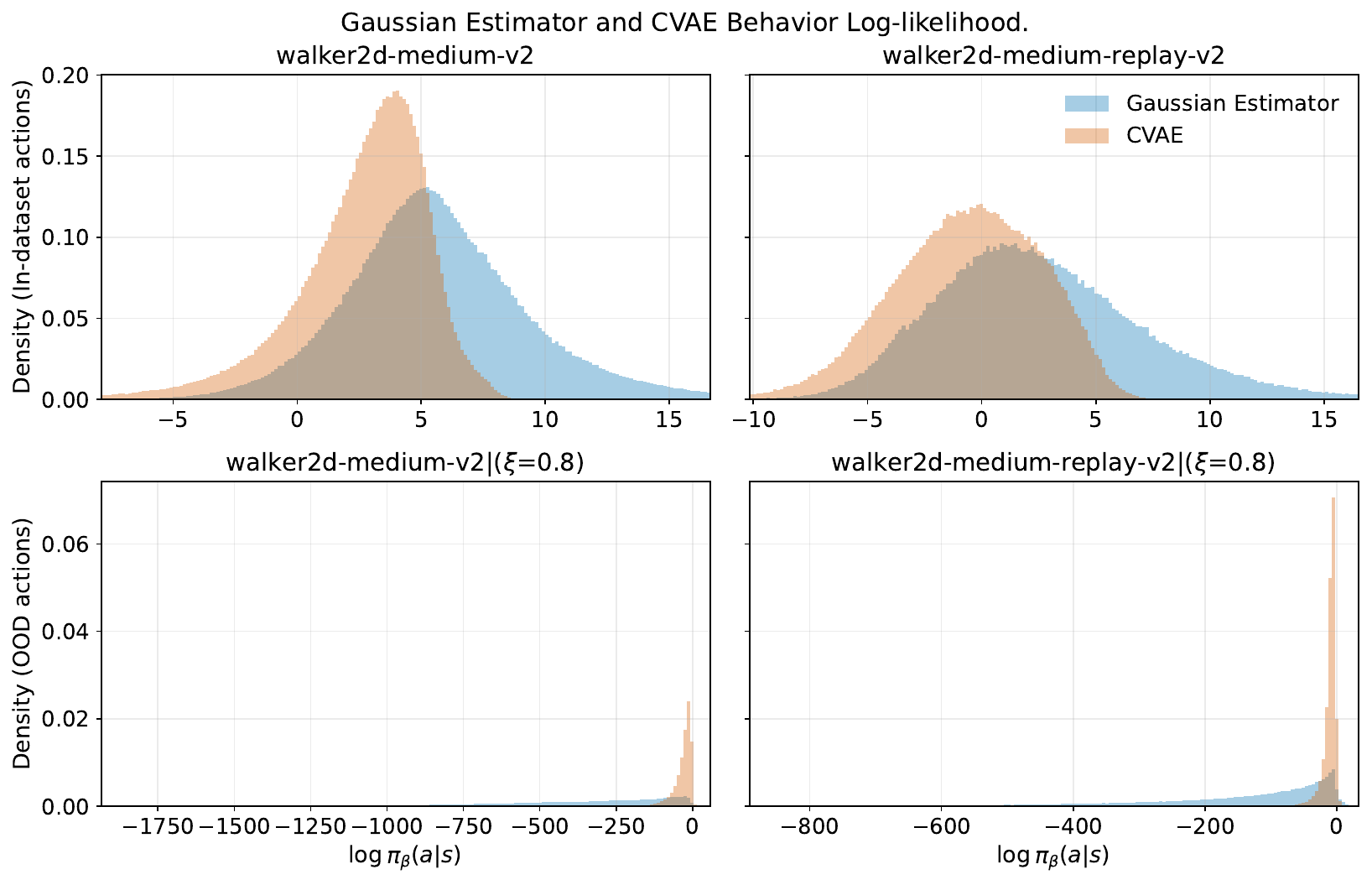}
\end{minipage}

\caption{Gaussian and CVAE behavior models under $\xi \in \{0.1, 0.5, 0.8\}$.}
\label{Fig-Gaussian_CVAE_log-xi-sweep}
\end{figure}

\begin{figure}[!htbp] 
\centerline{\includegraphics[width=1.0\textwidth]{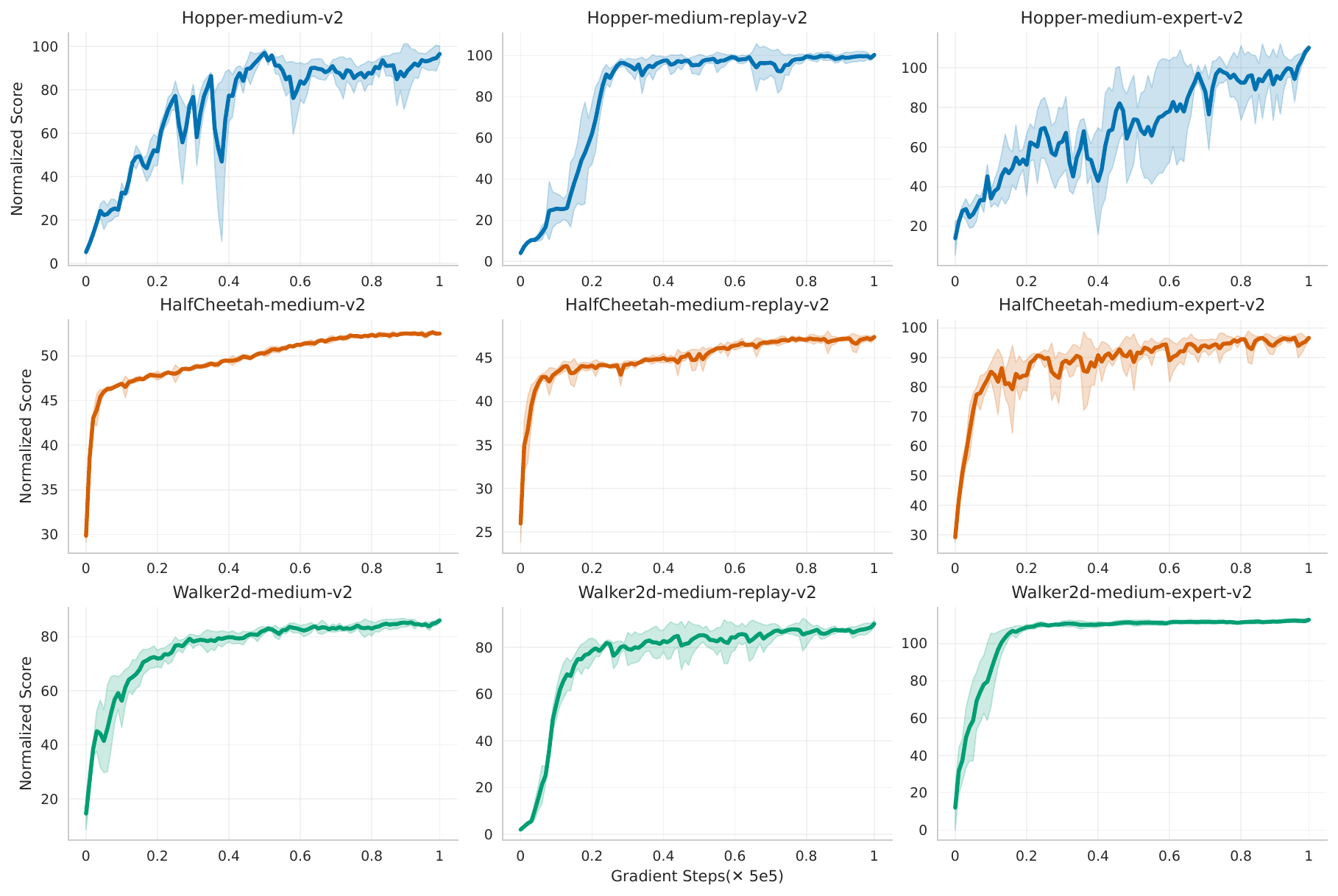}}
\centerline{\includegraphics[width=1.0\textwidth]{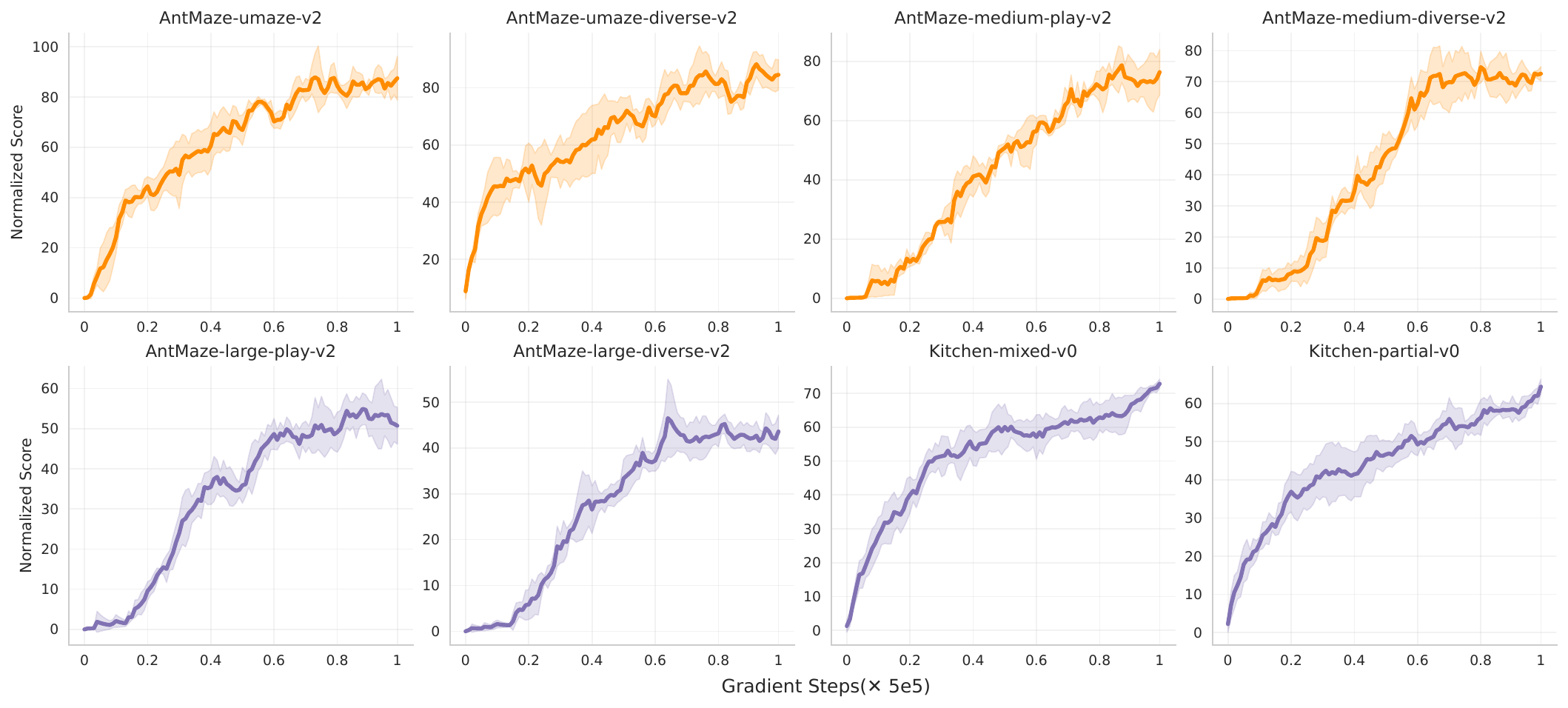}}
\caption{D4RL Learning Curves.}
\label{Fig-D4RL-Curves-Appendix}
\end{figure}

\begin{figure}[!htbp] 
\centerline{\includegraphics[width=1.0\textwidth]{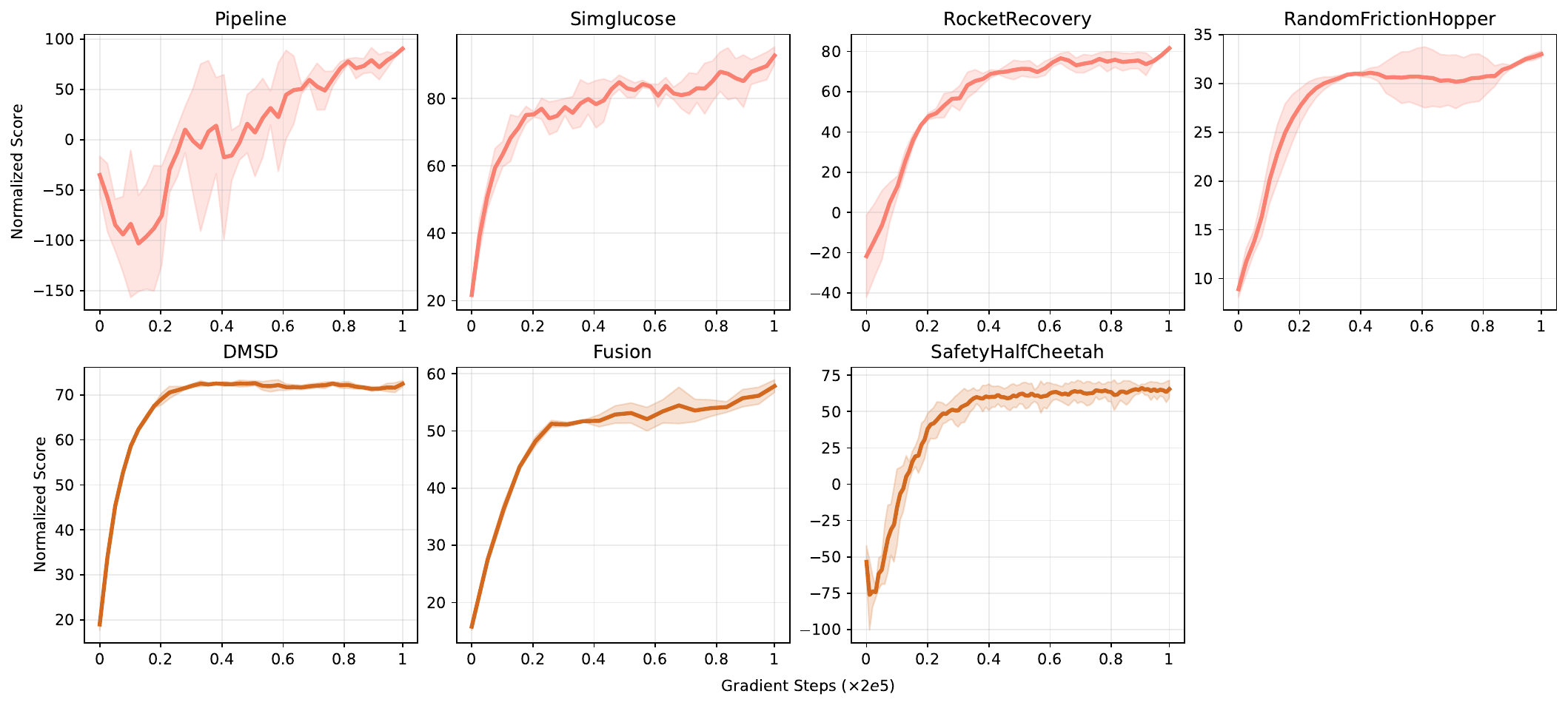}}
\caption{NeoRL2 Learning Curves.}
\label{Fig-NeoRL2-Curves-Appendix}
\end{figure}

\end{document}